\DeclareMathSymbol{\shortminus}{\mathbin}{AMSa}{"39}
\newlength{\philength}
\NewDocumentCommand{\pr}{ O{p} r() }{
	\def\prArg{#2}\patchcmd{\prArg}{|}{\mid}{}{}#1\RoundBrackets{\prArg}}
\NewDocumentCommand{\p}{ r() }{\pr[p](#1)}
\NewDocumentCommand{\q}{ r() }{\pr[q](#1)}
\NewDocumentCommand{\prm}{ r() }{\pr[\mathrm{p}](#1)}
\NewDocumentCommand{\Normal}{ r() }{\pr[\operatorname{Normal}](#1)}
\NewDocumentCommand{\Cat}{ r() }{\pr[\operatorname{Cat}](#1)}
\NewDocumentCommand{\Beta}{ r() }{\pr[\operatorname{Beta}](#1)}
\NewDocumentCommand{\Bernoulli}{ r() }{\pr[\operatorname{Bernoulli}](#1)}
\NewDocumentCommand{\Dir}{ r() }{\pr[\operatorname{Dir}](#1)}
\newcommand{\E}[3][]{\mathbb{\operatorname{E}}_{#2}#1[#3#1]}
\newlength\widthE
\newtheorem{theorem}{Theorem}
\newtheorem{lemma}[theorem]{Lemma}
\newtheorem{remark}[theorem]{Remark}
\DeclareRobustCommand{\rchi}{{\mathpalette\irchi\relax}}
\newcommand{\irchi}[2]{\raisebox{\depth}{$#1\chi$}}
\definecolor{chat}{rgb}{1.0,0.5,0.0}
\newcommand{\chat}{\red{\mathfrak{z}}}
\newcommand{\red}[1]{\textcolor{red}{#1}}
\newcommand{\blue}[1]{\textcolor{blue}{#1}}
\title{Dynamics-Aligned Latent Imagination in Contextual World Models for Zero-Shot Generalization}
\author{%
  Frank R{\"o}der\thanks{Equal contribution} \hspace{2.1em}
  Jan Benad \hspace{2.1em}
  Manfred Eppe \hspace{2.1em}
  Pradeep Kr. Banerjee\footnotemark[1] \\
  Institute for Data Science Foundations, Blohmstra\ss e 15, 21079 Hamburg, Germany
}
\begin{document}

\maketitle

\begin{abstract}
Real-world reinforcement learning demands adaptation to unseen environmental conditions without costly retraining. Contextual Markov Decision Processes (cMDP) model this challenge, but existing methods often require explicit context variables (e.g., friction, gravity), limiting their use when contexts are latent or hard to measure. We introduce Dynamics-Aligned Latent Imagination (DALI), a framework integrated within the Dreamer architecture that infers latent context representations from agent-environment interactions. By training a self-supervised encoder to predict forward dynamics, DALI generates actionable representations conditioning the world model and policy, bridging perception and control. We theoretically prove this encoder is essential for efficient context inference and robust generalization. DALI's latent space enables counterfactual consistency: Perturbing a gravity-encoding dimension alters imagined rollouts in physically plausible ways. On challenging cMDP benchmarks, DALI achieves significant gains over context-unaware baselines, often surpassing context-aware baselines in extrapolation tasks, enabling zero-shot generalization to unseen contextual variations.
\end{abstract}

\section{Introduction}
\label{sec:Intro}

The ability to generalize across diverse environmental conditions is a fundamental challenge in reinforcement learning (RL). Contextual Markov Decision Processes (cMDPs) formalize this challenge by modeling task variations through latent parameters, such as friction, gravity, or object mass that govern dynamics \citep{hallak2015contextual}. However, real-world agents rarely have direct access to these parameters. Consider a robotic control task where a legged agent must adapt to different surfaces, such as smooth tiles, rough gravel, or slippery ice. If the agent were explicitly provided with surface friction coefficients, adaptation would be straightforward. In practice, though, such ground-truth annotations are often unavailable or prohibitively expensive to obtain. Instead, the agent must infer these variations implicitly by observing how its actions influence the environment, using proprioceptive feedback and gait dynamics.

Existing approaches to contextual RL often rely on explicit context conditioning, where models are trained with domain-specific instrumentation or carefully crafted architectures \citep{seyed2019smile,ball2021augmented,eghbal2021context,mu2022domino, beukman2023dynamics,benjamins2023contextualize,prasanna2024dreaming}. While effective in controlled settings, such approaches scale poorly and break down in unstructured environments where contexts are latent or difficult to measure. The challenge is to develop RL agents that infer and adapt to hidden contexts in a self-supervised manner, enabling robust generalization without direct supervision.

To address this challenge, we propose \textit{Dynamics-Aligned Latent Imagination} (DALI), a framework integrated within the DreamerV3 architecture \citep{hafner2023mastering} that enables zero-shot generalization by inferring latent contexts directly from interaction histories. Unlike DreamerV3, which struggles to generalize across diverse latent contexts due to its limited ability to retain critical environmental information \citep{prasanna2024dreaming}, DALI overcomes these limitations through a self-supervised context encoder. This encoder learns to predict forward dynamics, capturing the relationship between actions, states, and their consequences. For example, by analyzing how applied forces influence motion trajectories, the encoder distills contextual factors (e.g., gravitational pull or surface friction) into compact, actionable embeddings. These embeddings condition the world model and policy, enabling the agent to reason about hidden dynamics (e.g., how increased inertia affects motion stability) and adapt its control strategies accordingly. Our theoretical analysis demonstrates that DALI's context encoder enables robust zero-shot generalization by efficiently capturing latent environmental variations, overcoming information bottlenecks and sample inefficiencies in learning diverse contexts~\citep{prasanna2024dreaming,hafner2023mastering}.

By inferring latent contexts from interactions, DALI enhances DreamerV3's learning framework, achieving robust zero-shot generalization in partially observable environments. Our work makes the following key contributions:
\begin{itemize}[leftmargin=*]
    \item \textbf{Theoretical foundations}:
    We establish that a dedicated context encoder is essential for robust generalization, proving that it infers latent contexts from short interaction windows with near-optimal sample complexity and mitigates information bottlenecks in partially observable settings.
    \item \textbf{Strong zero-shot generalization}:
    On challenging cMDP benchmarks, DALI achieves up to $+96.4\%$ gains over context-unaware baselines, often surpassing ground-truth context-aware baselines in extrapolation tasks.
    \item \textbf{Physically consistent counterfactuals}: We show that the learned latent space exhibits counterfactual consistency: perturbing a dimension encoding gravity, for instance, results in imagined rollouts where objects fall faster or slower, faithfully mirroring real-world physics.
\end{itemize}

\section{Related Work}
\label{sec:related}
\textbf{Contextual RL for Zero-Shot Generalization.} Contextual RL has been studied in various forms, from cMDPs to domain randomization and meta-RL \citep{hallak2015contextual,modi2018markov}. A recent survey \citep{kirk2023survey} highlights its relevance for zero-shot generalization, emphasizing how clear context sets for training and evaluation enable systematic analysis. One research direction assumes context is explicitly observed as privileged information and integrates it into learning \citep{chen2018hardware,seyed2019smile,ball2021augmented,eghbal2021context,sodhani2021multi,mu2022domino,benjamins2023contextualize,prasanna2024dreaming}. In contrast, we follow recent work that assumes context is latent and must be inferred \citep{chen2018hardware,xu2019densephysnet,lee2020context,seo2020trajectory,xian2021hyperdynamics,sodhani2022block,melo2022transformers,evans2022context}, focusing on self-supervised context inference through forward dynamics alignment.

\textbf{Model-Based RL.} Model-based RL improves sample efficiency by learning predictive environment models for planning and imagination. Approaches like Dreamer \citep{hafner2019learning,Hafner2020Dream,hafner2021mastering,hafner2023mastering} and TD-MPC \citep{hansen2024tdmpc} achieve strong performance by learning compact latent representations of environment dynamics. Recent work has also explored general-purpose model-free RL that leverages model-based representations for broader applicability~\citep{fujimoto2025towards}. While Dreamer has been explored for zero-shot generalization with explicit context conditioning \citep{prasanna2024dreaming}, we build on it with a dynamics-aligned inference mechanism to enhance generalization without context supervision.

\textbf{Meta-RL.} Meta-RL trains agents to adapt rapidly to new tasks with minimal experience \citep{beck2023survey}, typically by learning adaptive policies that infer task-specific information from past interactions. However, most meta-RL methods require fine-tuning on new tasks \citep{duan2016rl,finn2017model,nagabandi2018learning,rakelly2019efficient,zintgraf2019fast,melo2022transformers}. Our approach, in contrast, aims for zero-shot generalization by leveraging structured latent representations that transfer across environments.

\section{Preliminaries}
\label{sec:Prelims}

To investigate zero-shot generalization in partially observable environments, we adopt a contextual Markov Decision Process (cMDP) framework, following \citet{hallak2015contextual,kirk2023survey}. A cMDP is defined as a tuple $\mathcal{M} = (\mathcal{S}, \mathcal{A}, \mathcal{O}, \mathcal{C}, \mathcal{R}, \mathcal{P}, \mathcal{E}, \mu, p_{\mathcal{C}})$, where $\mathcal{S}$, $\mathcal{A}$, and $\mathcal{O}$ are resp. the state, action, and observation spaces, $\mathcal{C} \subseteq \mathbb{R}^d$ is the context space with contexts $c \in \mathcal{C}$ drawn i.i.d. from a distribution $p_{\mathcal{C}}$, $\mathcal{R}: \mathcal{S} \times \mathcal{A} \times \mathcal{C} \to \mathbb{R}$ is the reward function, $\mathcal{P}: \mathcal{S} \times \mathcal{A} \times \mathcal{C} \to \Delta(\mathcal{S})$ is the stochastic transition function specifying the probability distribution over next states given the current state, action, and context, $\mathcal{E}: \mathcal{S} \times \mathcal{C} \to \Delta(\mathcal{O})$ is the observation function specifying the distribution over observations given the state and context, $\mu: \mathcal{C} \to \Delta(\mathcal{S})$ is the initial state distribution where $\mu(s_0 | c)$ specifies the probability of initial state $s_0$ given context $c$. In this partially observable setting, the agent does not observe the state $s_t$ or context $c$ directly but receives observations $o_t \in \mathcal{O}$. The objective is to learn a policy $\pi(a_t | o_{1:t}, a_{1:t-1})$ that maximizes the expected return $\mathbb{E}\left[\sum_{t=0}^{T} r_t \right]$, where $T$ is the episode length.

In our framework, each context $c$ induces a distinct variation in the transition dynamics $\mathcal{P}$, such as changes in physical properties (e.g., gravity,  friction), while the reward function $\mathcal{R}$ remains fixed across contexts. The context is latent and remains fixed within an episode but varies across episodes according to $p_{\mathcal{C}}$. The agent infers $c$ from interaction histories to adapt its policy. To formalize zero-shot generalization, we define two distributions over contexts: the training distribution $p_{\text{train}}(c)$, from which contexts are sampled during training, and the evaluation distribution $p_{\text{eval}}(c)$, representing unseen test contexts. The goal is to learn a policy using samples from $p_{\text{train}}(c)$ that maximizes the expected return for contexts drawn from $p_{\text{eval}}(c)$ without further adaptation or retraining.

\section{Dynamics-Aligned Latent Imagination (DALI)}
\subsection{Background: DreamerV3}
DreamerV3 \citep{hafner2023mastering} is a model-based RL algorithm that enables agents to learn and plan in the latent space of a learned world model. It follows the general structure of the Dreamer family of algorithms \citep{hafner2019learning,Hafner2020Dream,hafner2021mastering}, incorporating three key components:  (i) learning a generative world model to simulate environment dynamics, (ii) optimizing policies entirely within the latent space of the world model using imagined rollouts, and (iii) refining the world model and policy through interaction with the real environment.

\textbf{World model.}
The agent interacts with the environment through observations $o_t$ (e.g., images or sensor data) and actions $a_t$. The world model, structured as a Recurrent State-Space Model (RSSM), encodes these observations into a compact latent state $s_t = \{h_t, z_t\}$, where $h_t= f_\theta(h_{t-1}, z_{t-1}, a_{t-1})$ is a deterministic recurrent state capturing temporal dependencies, and $z_t$ is a stochastic state encoding uncertainty about the current observation. The separation of deterministic ($h_t$) and stochastic ($z_t$) states enables long-horizon temporal reasoning while maintaining diversity in imagined rollouts. The RSSM operates in two distinct modes:  (i) during training, the model conditions on the current observation $o_t$ to infer $z_t$, ensuring alignment with real-world dynamics (posterior inference); and (ii) during imagination, the model predicts future states without access to observations, sampling $\hat{z}_t$ to generate hypothetical trajectories (prior prediction). The world model reconstructs observations $\hat{o}$, predicts rewards $\hat{r}_t$, and estimates episode continuations $\hat{n}_t$ from the latent state $s_t$. These components are learned via the following structured models: posterior state representations (encoder) $z_t \sim q_\theta(z_t | h_t, o_t)$, prior state representations $\hat{z}_t \sim p_\theta(\hat{z}_t | h_t)$, reward predictor $\hat{r}_t \sim p_\theta(\hat{r}_t | h_t, z_t)$, continue predictor $\hat{n}_t \sim p_\theta(\hat{n}_t | h_t, z_t)$, and decoder $\hat{o}_t \sim p_\theta(\hat{o}_t | h_t, z_t)$.

\textbf{Learning in imagination.}
Once the world model is trained, behavior is learned by optimizing a policy entirely within the latent space.
An actor-critic architecture guides this process, with the critic estimating the cumulative future reward (value $v_\psi$) and the actor selecting actions $a_\tau \sim \pi_\phi$ to maximize this value. By decoupling policy learning from real-world interaction, DreamerV3 achieves strong sample efficiency, iteratively refining its world model and behaviors through cycles of imagination and execution.

\subsection{Context Encoder for Dynamics-Aligned Representations}
\label{subsec:DALI-CE}
While DreamerV3 is effective in fixed environments, its reliance on static latent states limits adaptability to contextual variations, such as changes in gravity or friction. To address this, we introduce \textit{Dynamics-Aligned Latent Imagination} (DALI), which extends DreamerV3 with a self-supervised context encoder that learns explicit context representations from interaction histories.

\subsubsection{Forward Dynamics Alignment}
The context encoder $g_\varphi$ maps a history of observations and actions $(o_{t-K:t}, a_{t-K:t-1})$ to a context representation $\chat_t = g_\varphi(o_{t-K:t}, a_{t-K:t-1})$, where $K$ defines the temporal window. To align $\chat_t$ with environmental transition dynamics, we optimize a forward dynamics loss:
\begin{align}\label{eq:context_encoder_loss}
L_{\text{FD}}(\varphi) = \mathbb{E} \left\| o_{t+1} - f_\varphi^w(o_t, a_t, \chat_t) \right\|_2^2,
\end{align}
where $f_\varphi^w(o_t, a_t, \chat_t)$ predicts the next observation $\hat{o}_{t+1}$ given the current observation $o_t$, action $a_t$, and context $\chat_t$. Minimizing $L_{\text{FD}}$ ensures that $\chat_t$ encodes contextual factors critical for accurate dynamics prediction, such as variations in physical parameters. The parameters $\varphi$ of $g_\varphi$ and $f_\varphi^w$ are trained jointly, enabling the encoder to capture essential dynamics from short interaction histories.

\subsubsection{Cross-Modal Regularization}
To enhance the context encoder's robustness, we introduce a cross-modal regularization that aligns the context representation $\chat_t = g_\varphi(o_{t-K:t}, a_{t-K:t-1})$ with the DreamerV3 world model's posterior state $z_t \sim q_\theta(z_t | h_t, o_t)$.
The cross-modal loss enforces bidirectional alignment between $\chat_t$ and $z_t$:
\begin{align}\label{eq:context_encoder_loss-cross}
L_{\text{cross}}(\varphi) = \mathbb{E} \left\| z_t - W_z \chat_t \right\|_2^2 + \mathbb{E} \left\| \chat_t - W_{\chat} z_t \right\|_2^2,
\end{align}
where $W_z$ and $W_{\chat}$ are linear maps between the context and state spaces.
This bidirectional reconstruction leverages $z_t$'s observation-informed representation, which captures instantaneous dynamics relevant to the current context. By aligning with $z_t$ instead of the full latent state $s_t = \{h_t, z_t\}$, $\chat_t$ avoids encoding redundant trajectory-specific information from the deterministic $h_t$, which could impair generalization. The bidirectional constraints prevent degenerate solutions (e.g., $\chat_t$ collapsing to a constant) by enforcing invertibility, ensuring $\chat_t$ remains a rich, context-specific representation consistent with the $z_t$'s latent dynamics. The total loss combines both objectives:
\begin{align}\label{eq:context_encoder_loss-total}
L_{\text{total}}(\varphi) = L_{\text{FD}}(\varphi) + \lambda_{\text{cross}} L_{\text{cross}}(\varphi),
\end{align}
where $\lambda_{\text{cross}} \in \{0,1\}$ balances dynamics prediction and cross-modal alignment.

\subsection{Integrating Context into DreamerV3}
\label{subsec:integration}

To enable context-conditioned imagination and policy learning in DreamerV3 without access to ground-truth context variables, we propose two integration strategies for incorporating the context representation $\chat_t$.

\textbf{Shallow Integration.} This approach appends $\chat_t$ to the observation embedding in the world model's encoder, modifying it to $z_t \sim q_\theta(z_t | h_t, o_t, \chat_t)$. All other world model components, including the sequence model $h_t = f_\theta(h_{t-1}, z_{t-1}, a_{t-1})$, predictors ($\hat{z}_t$, $\hat{r}_t$, $\hat{n}_t$, $\hat{o}_t$), and actor-critic networks ($\pi_\phi(a_\tau | s_\tau)$, $v_\psi(s_t)$), remain unchanged. This lightweight modification enriches latent states with dynamics-aware context, enabling the world model to adapt implicitly to contextual variations.

\textbf{Deep Integration.} For deeper context awareness, $\chat_t$ is integrated into the world model and policy as follows:
$h_t = f_\theta(h_{t-1}, z_{t-1}, a_{t-1}, \chat_t), \quad \hat{r}_t \sim p_\theta(\hat{r}_t | h_t, z_t, \chat_t), \quad \hat{n}_t \sim p_\theta(\hat{n}_t | h_t, z_t, \chat_t), \quad \hat{o}_t \sim p_\theta(\hat{o}_t | h_t, z_t, \chat_t)$.
The posterior $z_t \sim q_\theta(z_t | h_t, o_t)$ and prior $\hat{z}_t \sim p_\theta(\hat{z}_t | h_t)$ access $\chat_t$ indirectly through $h_t$.
The actor and critic explicitly condition on $\chat_t$ to optimize imagined trajectories over horizon $H$:
$a_\tau \sim \pi_\phi(a_\tau | s_\tau, \chat_t), \quad v_\psi(s_t, \chat_t) \approx \mathbb{E}_{\pi(\cdot | s_\tau, \chat_t)} \sum_{\tau=t}^{t+H} \gamma^{\tau-t} r_\tau$.
This ensures that policy optimization explicitly accounts for contextual variations, enabling context-conditioned imagination.

\textbf{Training.}
Training the context encoder, the loss~\eqref{eq:context_encoder_loss-total} aligns $\chat_t = g_\varphi(o_{t-K:t}, a_{t-K:t-1})$ with environmental dynamics and requires careful handling of the DreamerV3 world model's recurrence.
Both integration strategies unroll the world model over a $K$-length window, initializing $h_{t-K} = \mathbf{0}$ (or a learned initial state), generating $\chat_\tau = g_\varphi(o_{\tau-K:\tau}, a_{\tau-K:\tau-1})$ for $\tau = t-K$ to $t$. In Shallow Integration, $z_\tau \sim q_\theta(z_\tau | h_\tau, o_\tau, \chat_\tau)$ and $h_{\tau+1} = f_\theta(h_\tau, z_\tau, a_\tau)$; in Deep Integration, $z_\tau \sim q_\theta(z_\tau | h_\tau, o_\tau)$ and $h_{\tau+1} = f_\theta(h_\tau, z_\tau, a_\tau, \chat_\tau)$.
For both, gradients through $h_\tau$ and $z_\tau$ are stopped in the recurrent dynamics, and through $h_\tau$ in the world model’s encoder, preventing updates to $\theta$. In Shallow Integration, $\chat_\tau$ gradients are preserved in the world model’s encoder and losses $L_{\text{FD}}$ and $L_{\text{cross}}$, allowing updates to $\varphi$, $W_z$, and $W_{\chat}$. In Deep Integration, $\chat_\tau$ gradients are stopped in the recurrent dynamics and preserved only in $L_{\text{FD}}$ and $L_{\text{cross}}$.
This decouples context learning from the world model’s recurrence, ensuring $\chat_t$ captures relevant dynamics for both strategies.
For detailed pseudocode, see Algorithms~\ref{algo:DALI-S} and~\ref{algo:agent-couple-S} in Appendix~\ref{app:B} for Shallow Integration, and Algorithms~\ref{algo:DALI-D} and~\ref{algo:agent-couple-D} for Deep Integration.

\section{Theoretical Insights into DALI's Context Encoder}
\label{sec:theory}

In this section, we provide an exposition of our key theoretical results, elucidating why DALI's context encoder is essential for robust generalization. We emphasize conceptual insights, with formal statements and proofs deferred to Appendix~\ref{app:A}.

Our analysis is grounded in a cMDP framework with continuous contexts $c$ (e.g., physical parameters such as gravity or actuator strength), sampled i.i.d. per episode from a distribution $p_{\mathcal{C}}$. Observations are noisy (e.g., $o_t = s_t + \eta_t$, $\eta_t \sim \mathcal{N}(0, \sigma^2 I)$), and dynamics are assumed to be Lipschitz continuous, so that small changes in context (e.g., slight shifts in gravity) lead to smoothly varying behaviors (e.g., comparable object trajectories or locomotion patterns). The observation-action process is $\beta$-mixing~\citep{rio2017asymptotic}, meaning distant observations become nearly independent over time (e.g., periodic motions or gait cycles decorrelate under exploratory actions).
These structural assumptions are characteristic of many continuous control environments such as DMC~\citep{tassa2018deepmind}, CARL~\citep{benjamins2023contextualize}, and MetaWorld~\citep{yu2020meta}, and they underpin DALI's efficient context inference. $\beta$-mixing captures realistic decorrelation in RL tasks, enabling short interaction windows to capture $c$. An exploratory policy further ensures that distinct contexts yield distinguishable trajectories, a common feature in RL training.

Domain randomization trains DreamerV3 across diverse contexts, with $c \sim p_{\mathcal{C}}$, using $h_t^{\text{RSSM}} = f_\theta(h_{t-1}, z_{t-1}, a_{t-1})$ and $z_t \sim q_\theta(z_t | h_t, o_t)$. The recurrent state $h_t^{\text{RSSM}}$ implicitly accumulates information about $c$ over time through its interaction history. For instance, in environments with pendulum-like dynamics, increased gravity may result in faster oscillatory behavior, a pattern that $h_t^{\text{RSSM}}$ can gradually encode after observing enough transitions. However, this strategy has limitations. The recurrent state $h_t^{\text{RSSM}}$ compresses all episode information, including context, states, and noise into a fixed-size GRU, introducing an information bottleneck. As dynamic state information (e.g., object motion) and transient noise compete for limited capacity, essential cues about the underlying context (e.g., gravity) may be lost or delayed.  Identifying $c$ often requires accumulating evidence across many transitions, potentially spanning an entire episode of length $T$, since early actions may not sufficiently excite the dynamics to reveal contextual differences. This delay hinders rapid adaptation to novel contexts, as $h_t^{\text{RSSM}}$ needs a long temporal window to reliably disambiguate $c$.

DALI's context encoder decouples context inference from dynamics modeling. It functions as a specialized module that learns a representation $\chat_t$ capturing context information from short, local histories, allowing its recurrent state $h_t^{\text{DALI}}$ to focus on dynamics and reducing the burden on the GRU. The $\beta$-mixing property ensures that $K$ transitions  suffice to encode $c$, as the dynamics' dependence on past states fades rapidly. This enables faster adaptation compared to DreamerV3's $h_t^{\text{RSSM}}$. We formalize this intuition for the case of Deep Integration, where the context encoder enhances the RSSM's context awareness. The functions $h(\cdot)$ and $\mathcal{I}(\cdot; \cdot)$ denote the differential entropy and mutual information, respectively~\citep{cover1999elements}.
\begin{theorem}
\label{thm:informal}
In a cMDP with $\beta$-mixing and Lipschitz dynamics, DALI's context encoder $\chat_t$ captures near-optimal context information, $\mathcal{I}(c; \chat_t) \geq (1 - \delta) h(c)$ for $\delta \in (0, 1)$, using $N = \mathcal{O}(1/\delta^2)$ windows of $K = \Omega(\log(1/\delta)/\lambda)$ transitions, where $\lambda$ is the mixing rate. Moreover, DALI's RSSM retains more context information than DreamerV3's, $\mathcal{I}(c; h_t^{\text{DALI}}) \geq \mathcal{I}(c; h_t^{\text{RSSM}}) - \epsilon(K)$, with $\epsilon(K) = \mathcal{O}(e^{-\lambda K/2})$. Compared to DreamerV3's RSSM processing full episodes of $T \gg K$ transitions, DALI achieves a sample complexity gain of $\mathcal{O}(T/K)$.
\end{theorem}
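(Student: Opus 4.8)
The plan is to treat the theorem as three coupled claims and prove each by isolating a \emph{bias} term controlled by the window length $K$ (through $\beta$-mixing) and a \emph{variance} term controlled by the number of windows $N$ (through concentration). The central reduction, assuming a positive-volume context prior so that $h(c)>0$, is to rewrite the target $\mathcal{I}(c;\chat_t)\ge(1-\delta)h(c)$ as a bound on the residual uncertainty $h(c\mid\chat_t)\le\delta\,h(c)$, and then to show the trained encoder makes this residual small. Throughout I will use the data-processing inequality to pass between $\chat_t$, the raw window $(o_{t-K:t},a_{t-K:t-1})$, and the recurrent states.

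For the first claim I proceed in two stages. \textbf{Window sufficiency.} First I show a length-$K$ window already carries almost all context information, $\mathcal{I}\bigl(c;(o_{t-K:t},a_{t-K:t-1})\bigr)\ge h(c)-\mathcal{O}(e^{-\lambda K})$. This is where $\beta$-mixing enters: the dependence of the window law on the unobserved pre-window history decays like $\beta(K)\le Ce^{-\lambda K}$, so conditioning on initial transients perturbs the context-relevant distribution only by $\mathcal{O}(e^{-\lambda K})$, and choosing $K=\Omega(\log(1/\delta)/\lambda)$ drives this bias below $\delta$. The exploratory policy with Lipschitz dynamics supplies identifiability: distinct contexts excite measurably distinct transition statistics, so the window pins down $c$ to a precision set by the observation noise $\sigma$. \textbf{Encoder fidelity.} Second I convert the forward-dynamics guarantee into information: a small $L_{\mathrm{FD}}=\mathbb{E}\|o_{t+1}-f_\varphi^w(o_t,a_t,\chat_t)\|_2^2$ means $\chat_t$ is a near-sufficient statistic for the context-dependent transition, so under a lower-Lipschitz condition on $c\mapsto\mathcal{P}(\cdot\mid s,a,c)$ accurate prediction forces $\chat_t$ to resolve $c$, and a Fano/rate-distortion converse gives $h(c\mid\chat_t)\lesssim L_{\mathrm{FD}}+\mathcal{O}(e^{-\lambda K})$. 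Estimating the relevant statistics from $N$ quasi-independent windows then costs a concentration error $\mathcal{O}(1/\sqrt N)$; equating bias and variance to $\delta$ yields $N=\mathcal{O}(1/\delta^2)$ and $K=\Omega(\log(1/\delta)/\lambda)$, and combining gives $h(c\mid\chat_t)\le\delta\,h(c)$.

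For the second claim I use that Deep Integration feeds $\chat_t$ directly into $h_t^{\mathrm{DALI}}=f_\theta(h_{t-1},z_{t-1},a_{t-1},\chat_t)$, so $h_t^{\mathrm{DALI}}$ inherits the context content of $\chat_t$, and the cross-modal regularizer, by enforcing invertibility between $\chat_t$ and $z_t$, guarantees that content is not discarded. I then bound the extra context information $h_t^{\mathrm{RSSM}}$ could hold beyond a length-$K$ horizon: by $\beta$-mixing the contribution of transitions older than $K$ steps is controlled by $\sqrt{\beta(K)}=\mathcal{O}(e^{-\lambda K/2})$ once the mixing coefficient is converted to a total-variation / mutual-information gap (the square root is exactly the cost of this conversion, e.g.\ via Pinsker or a coupling bound), giving $\mathcal{I}(c;h_t^{\mathrm{DALI}})\ge\mathcal{I}(c;h_t^{\mathrm{RSSM}})-\epsilon(K)$ with $\epsilon(K)=\mathcal{O}(e^{-\lambda K/2})$. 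For the third claim I argue by accounting: DreamerV3's recurrent state treats an entire episode of $T$ transitions as a single context measurement, whereas $\beta$-mixing lets DALI harvest $\Theta(T/K)$ nearly independent $K$-windows per episode; since the first claim shows each target accuracy needs $N=\mathcal{O}(1/\delta^2)$ windows irrespective of architecture, the per-transition cost differs by the factor $T/K$, the stated gain.

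The main obstacle is the encoder-fidelity step: turning the \emph{optimization} guarantee $L_{\mathrm{FD}}\approx 0$ into an \emph{information-theoretic} identifiability statement requires that good next-observation prediction be unachievable without recovering $c$. This fails for a generic transition map, so the crux is to assume, and justify from the continuous-control setting, a lower-Lipschitz / injectivity condition on $c\mapsto\mathcal{P}(\cdot\mid s,a,c)$ under the exploratory state-action occupancy, which makes the forward map bi-Lipschitz and lets a Fano converse control $h(c\mid\chat_t)$. A secondary subtlety is that these guarantees hold only at the loss optimum under realizability of $g_\varphi$ and $f_\varphi^w$; by comparison, the mixing-to-information conversion that produces the $e^{-\lambda K/2}$ rate in the second claim is routine once the right coupling is fixed.
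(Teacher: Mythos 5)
Your plan is sound and follows the same overall architecture as the paper's formal development (Theorems~\ref{thm:necessity} and~\ref{thm:context_encoder_bottleneck} with Lemmas~\ref{lemma:betamixing}--\ref{lemma:DALIbottleneck}): reduce $\mathcal{I}(c;\chat_t)\ge(1-\delta)h(c)$ to a bound on $h(c\mid\chat_t)$; establish window sufficiency $h(c\mid\tau_{t-K:t})\le\mathcal{O}(e^{-\lambda K})h(c)$ from mixing plus the exploration assumption (the paper does this by Fano's inequality over a discretized context set with a Hoeffding bound on log-likelihood ratios, using $\beta(K)$ only to correct for within-window dependence); convert the forward-dynamics loss into $h(c\mid\chat_t)\le h(c\mid\tau_{t-K:t})+\mathcal{O}(\sqrt{\delta})$ via an identifiability assumption; get $N=\mathcal{O}(1/\delta^2)$ from concentration (Rademacher complexity plus McDiarmid in the paper); and obtain the $\mathcal{O}(T/K)$ gain by counting transitions per sample. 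You also correctly isolate the real weak point: the paper, too, cannot derive identifiability from $L_{\mathrm{FD}}\approx 0$ alone and simply posits a map $\psi$ with $\|\psi(g_\varphi(\tau_{t-K:t}))-c\|_2\le L_c\epsilon$ together with Wasserstein-Lipschitzness of the posterior $p(c\mid\tau_{t-K:t})$ --- exactly the lower-Lipschitz/injectivity condition you flag as the crux.

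Two of your mechanisms differ from the paper's. First, for $\mathcal{I}(c;h_t^{\text{DALI}})\ge\mathcal{I}(c;h_t^{\text{RSSM}})-\epsilon(K)$ you propose bounding the RSSM's information from \emph{above} by controlling contributions older than $K$ steps via $\sqrt{\beta(K)}$ and a Pinsker-type conversion. The paper does not do this: it shows $\mathcal{I}(c;h_t^{\text{DALI}})\ge(1-\kappa')h(c)$ with $\kappa'=\mathcal{O}(e^{-\lambda K/2})$ (data-processing through $c\to\chat_t\to h_t^{\text{DALI}}$ plus the encoder bound), after which the claim follows from the trivial $\mathcal{I}(c;h_t^{\text{RSSM}})\le h(c)$; the separate channel-capacity lemma giving $\mathcal{I}(c;h_t^{\text{RSSM}})\le(1-\kappa)h(c)$ serves only to argue that DreamerV3's RSSM cannot reach $(1-\delta)h(c)$ at all. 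Your route needs care because $h_t^{\text{RSSM}}$ is a deterministic function of the \emph{entire} history, so ``information beyond a $K$-horizon'' is not cleanly separable from the recent window; the paper's route sidesteps this entirely. Second, the $e^{-\lambda K/2}$ rate in the paper originates from the $\sqrt{\delta}$ generalization error of the encoder with $\delta=Ce^{-\lambda K}$, not from a mixing-to-total-variation conversion. Neither difference is fatal, but if you pursue your version of the second claim you must make the coupling argument explicit for a deterministic recurrent state, whereas the paper's simpler comparison comes essentially for free once the first claim is in hand.
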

For the formal statement and proof of Theorem~\ref{thm:informal}, see Appendix~\ref{app:A}.
The $\beta$-mixing property ensures that a short window of $K = \Omega(\log(1/\delta)/\lambda)$ transitions is sufficient to achieve high context fidelity. For instance, when $\delta=0.01$ and the typical mixing rate in DMC tasks is $\lambda \approx 0.1$, a window of $K\approx 64$ steps results in a conditional entropy error bounded by $\delta' \approx 0.1$. DALI requires $N = \mathcal{O}(1/\delta^2)$ such windows, totaling $\mathcal{O}(K/\delta^2)$ transitions. In contrast, DreamerV3's $h_t^{\text{RSSM}}$, constrained by the GRU's finite capacity and observation noise, loses context information and requires $\mathcal{O}(T/\delta^2)$ transitions per episode (e.g., $T=1000$), where $T \gg K$. The sample complexity gain of $\mathcal{O}(T/K)$ reflects DALI's efficiency in exploiting local histories. Furthermore, $\chat_t$ achieves superior fidelity, with $\mathcal{I}(c; \chat_t)$ approaching $h(c)$ within $\delta$, while DreamerV3's RSSM incurs a larger information loss, $\epsilon(K) = \mathcal{O}(e^{-\lambda K/2})$, due to its bottleneck. While the theorem assumes Deep Integration, this sample complexity gain remains consistent across Deep and Shallow Integration, as $\chat_t$'s efficiency derives from its training on $K$-length windows, which is identical in both configurations. These results highlight the necessity and efficiency of DALI's context encoder in enabling robust generalization.

\section{Experiments and Analysis}
\label{sec:experiments}

In Sections~\ref{subsec:ZSG} and \ref{subsec:generalization_patterns}, we evaluate DALI's ability to generalize in a zero-shot manner across unseen context variations. In Section~\ref{sec:PhyConsisCF}, we show that the dynamics-aligned context encoder learns a structured latent representation, where perturbations to individual dimensions produce physically plausible counterfactuals (e.g., shorter ball swings for higher imagined gravity).
Our code is available at \url{https://github.com/frankroeder/DALI}.

\subsection{Zero-Shot Generalization}
\label{subsec:ZSG}
We evaluate DALI's zero-shot generalization on contextualized DMC Ball-in-Cup and Walker Walk tasks from the CARL benchmark \citep{benjamins2023contextualize}.

\textbf{Methods.}
Our \textit{context-unaware} methods, denoted DALI-{S/D}-{$\rchi$/blank}, employ either Shallow (S) or Deep (D) integration with the inferred context $\chat_t$ (see Section~\ref{subsec:integration}). These models are trained using either the forward dynamics loss alone \eqref{eq:context_encoder_loss} (denoted as ``blank'', e.g., DALI-D), or the cross-modal regularized objective \eqref{eq:context_encoder_loss-cross} (denoted as $\rchi$, e.g., DALI-S-$\rchi$).
We compare our models to the context-unaware baseline Dreamer-DR, which corresponds to DreamerV3 with domain randomization \citep{tobin2017domain}. We also evaluate against the \textit{context-aware} baselines cRSSM-S and cRSSM-D, which use the same world model and actor-critic architecture but directly incorporate the ground-truth context $c$ \citep{prasanna2024dreaming}. For further details, see Appendix~\ref{app:DALI-baselines}.

\textbf{Setup.}
We train our methods using the \textit{small} variant of DreamerV3 \citep{hafner2023mastering}, with a transformer-based context encoder \citep{vaswani2017attention}, for 200K timesteps (Ball-in-Cup) or 500K timesteps (Walker) across 10 random seeds, following the setup of \citet{prasanna2024dreaming}.
Hyperparameters and architectural details are provided in Appendix~\ref{app:C}.

We conduct evaluation on two observation modalities: \textit{Featurized}, which provides structured state inputs with low partial observability, and \textit{Pixel}, which requires latent state estimation from raw images. We assess performance under three generalization regimes \citep{kirk2023survey}: \textit{Interpolation} (contexts within the training range), \textit{Extrapolation} (OOD contexts beyond the training range), and \textit{Mixed} (one context OOD, one within training range).
The Extrapolation regime tests generalization beyond seen contexts, requiring agents to capture underlying physical principles. The Mixed regime probes the ability to interpolate and recombine learned representations. Together, these settings reveal whether the agent learns meaningful abstractions or simply memorizes specific contexts.

\textbf{Context ranges.} For Ball-in-Cup, the context parameters are gravity (training: $[4.9, 14.7]$, evaluation: $[0.98, 4.9)\cup(14.7, 19.6]$, default: 9.81) and string length (training: $[0.15, 0.45]$, evaluation: $[0.03, 0.15)\cup(0.45, 0.6]$, default: 0.3). For Walker, the parameters are gravity (same ranges as Ball-in-Cup) and actuator strength (training: $[0.5, 1.5]$, evaluation: $[0.1, 0.5)\cup(1.5, 2.0]$, default: 1.0).
These OOD ranges, particularly in Ball-in-Cup, pose major generalization challenges, while Walker's actuator strength remains closer to training.
Full training settings, including single and dual context variations, are in Appendix~\ref{app:C}.

\textbf{Evaluation Metrics.}
To evaluate zero-shot generalization, we use the Interquartile Mean (IQM) and Probability of Improvement (PoI) metrics from the rliable framework \citep{agarwal2021deep}.
IQM averages the central 50\% of performance scores, reducing the impact of outliers prevalent in RL due to high variance in training dynamics, particularly in OOD cMDP contexts (e.g., extreme gravity in Ball-in-Cup). We ensure statistical reliability by computing stratified bootstrap 95\% confidence intervals over seeds and aggregated contexts.
PoI quantifies the probability that a randomly selected run of one algorithm outperforms a randomly selected run of another, offering a robust comparative metric across methods and modalities.

\textbf{Results.}
We report IQM and PoI across 10 seeds in Figure~\ref{fig:iqm:combined}, comparing DALI-S, DALI-S-$\rchi$, Dreamer-DR, cRSSM-S, and cRSSM-D for Featurized and Pixel observations on the Ball-in-Cup and Walker Walk tasks.

\begin{figure}[h!]
    \centering

    \textbf{Ball-in-Cup} \\[0.5em]
    \begin{subfigure}[b]{\textwidth}
        \centering
        \includegraphics[width=\linewidth]{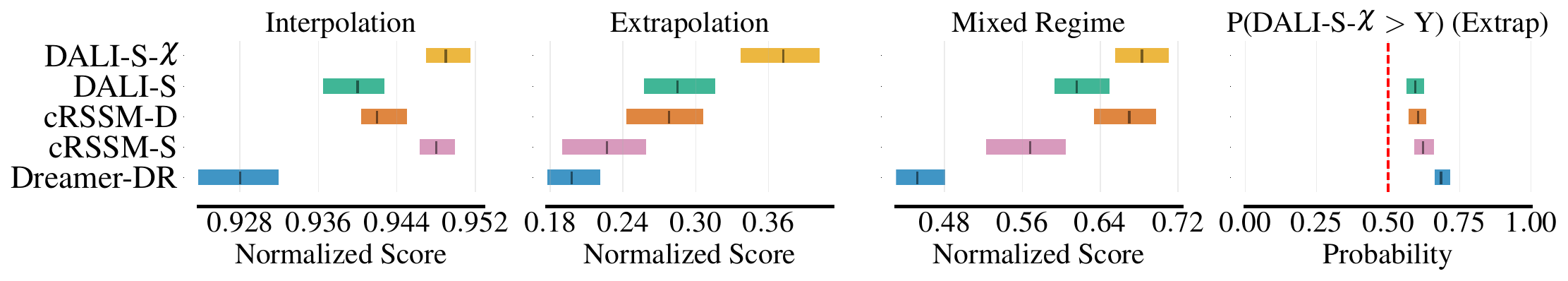}
        \caption{Featurized}
        \label{fig:ball_in_cup:iqm:feat}
    \end{subfigure}
    \vspace{1em}
    \begin{subfigure}[b]{\textwidth}
        \centering
        \includegraphics[width=\linewidth]{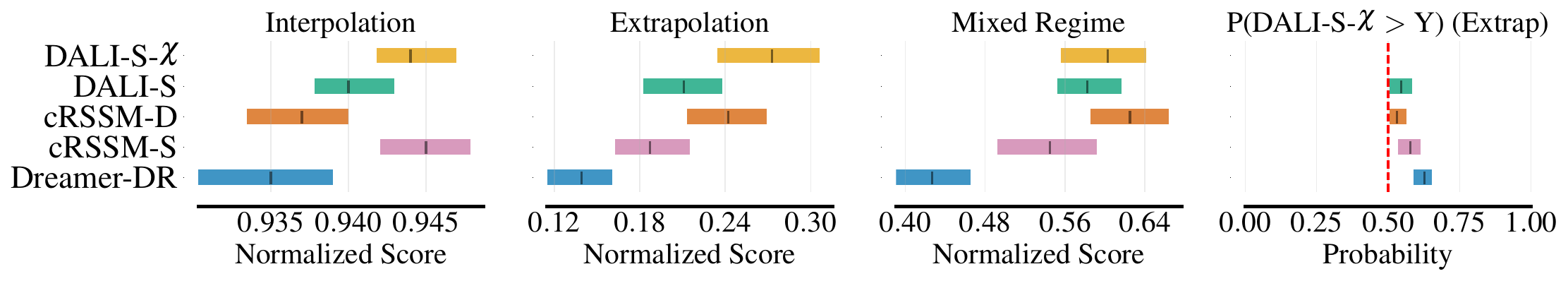}
        \caption{Pixel}
        \label{fig:ball_in_cup:iqm:pixel}
    \end{subfigure}

    \textbf{Walker Walk} \\[0.5em]
    \setcounter{subfigure}{0}
    \begin{subfigure}[b]{\textwidth}
        \centering
        \includegraphics[width=\linewidth]{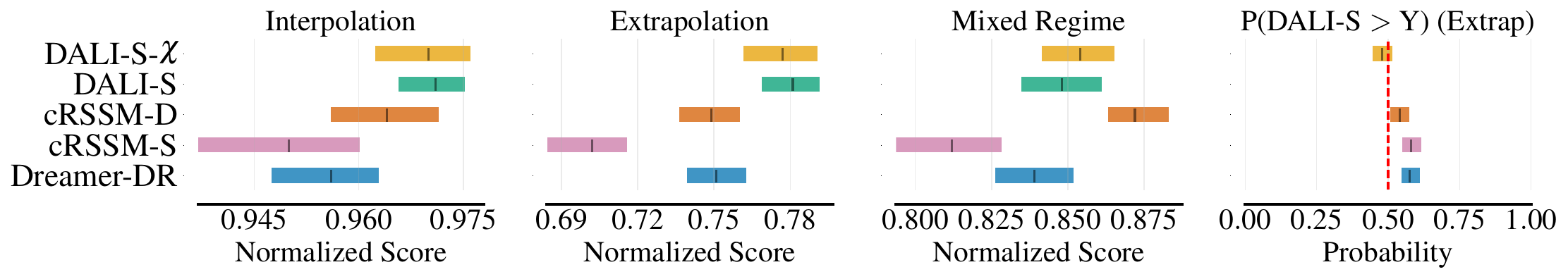}
        \caption{Featurized}
        \label{fig:walker:iqm:feat}
    \end{subfigure}
    \vspace{1em}
    \begin{subfigure}[b]{\textwidth}
        \centering
        \includegraphics[width=\linewidth]{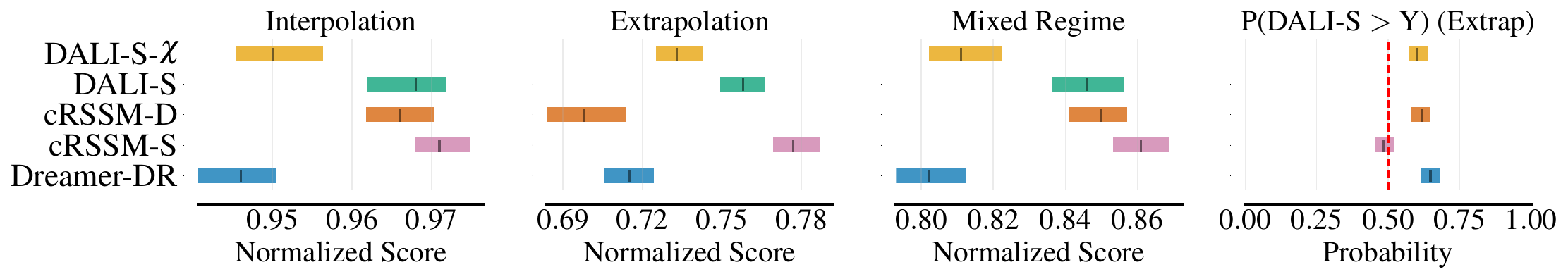}
        \caption{Pixel}
        \label{fig:walker:iqm:pixel}
    \end{subfigure}

    \caption{
    Interquartile Mean (IQM) scores and Probability of Improvement (PoI) for the DMC Ball-in-Cup and Walker Walk tasks under contextual variations: gravity and string length for Ball-in-Cup, and gravity and actuator strength for Walker Walk. Results are shown for Featurized and Pixel observations in each environment. Scores aggregate across single and combined contexts. Shaded intervals represent 95\% stratified bootstrap confidence intervals over seeds and aggregated contexts. The rightmost panel in each plot displays PoI in the Extrapolation regime for DALI-S-$\rchi$ (Ball-in-Cup) and DALI-S (Walker Walk), relative to baseline methods.
    }
    \label{fig:iqm:combined}
\end{figure}

In Ball-in-Cup's Interpolation, all methods achieve high IQM ($0.92$--$0.95$), with DALI-S-$\rchi$ competitive ($0.9490$ Featurized, $0.9440$ Pixel). In Extrapolation, DALI-S-$\rchi$ excels, outperforming Dreamer-DR by $87.9\%$ (Featurized, IQM $0.3720$) to $96.4\%$ (Pixel, IQM $0.2730$), surpassing cRSSM-S by $63.9\%$ (Featurized) and $45.9\%$ (Pixel), and cRSSM-D by $33.8\%$ (Featurized) and $12.8\%$ (Pixel). This suggests ground-truth context may overfit, limiting OOD adaptability. Low absolute IQM scores ($0.14$--$0.37$)
reflect the extreme OOD context ranges, such as gravity values of $0.98$ or $19.6$, far from the training range of $[4.9, 14.7]$.
In the Mixed regime, DALI-S-$\rchi$ leads in Featurized modality ($51.1\%$ over Dreamer-DR, $20.3\%$ over cRSSM-S, $1.9\%$ over cRSSM-D, IQM $0.6830$), while cRSSM-D excels in Pixel (IQM $0.6250$), leveraging ground-truth context for visual dynamics. DALI-S-$\rchi$ shows moderate PoI ($>0.6$) against Dreamer-DR in both modalities for Extrapolation (see Figure~\ref{fig:iqm:combined}).

In Walker's Interpolation, methods score high ($0.94$--$0.97$), with DALI-S leading ($0.9710$ Featurized). In Extrapolation, DALI-S achieves $4.0\%$ over Dreamer-DR (Featurized, IQM $0.7810$), $11.3\%$ over cRSSM-S, and $4.3\%$ over cRSSM-D. Higher IQM scores ($0.7$--$0.78$) across methods reflect less extreme OOD actuator strength ([$0.1$, $2.0$] vs. [$0.5$, $1.5$]), reducing generalization demands. In Pixel modality, context-aware cRSSM-S leads (IQM $0.7770$), leveraging ground-truth context for robust visual feature extraction.
DALI-S attains an IQM of $0.7580$, compared to DALI-S-$\rchi$ at $0.7330$. In the Mixed regime, context-aware cRSSM-D leads (Featurized, IQM $0.8720$), with DALI methods gaining $1.1\%$--$1.8\%$ over Dreamer-DR; in Pixel, cRSSM-S leads ($0.8610$), with DALI gains $1.1\%$--$5.5\%$. In Pixel Extrapolation, DALI-S shows moderate PoI ($\sim0.6$) against Dreamer-DR and cRSSM-D, while cRSSM-S leads, leveraging ground-truth context for visual dynamics (see Figure~\ref{fig:iqm:combined}).

\subsection{Generalization Trends Across Environments and Modalities}
\label{subsec:generalization_patterns}

We provide additional interpretations of the IQM results in Figure~\ref{fig:iqm:combined}, focusing on the generalization patterns of DALI-S and DALI-S-$\rchi$. We emphasize modality-specific effects, environmental nuances, integration strategies, and comparisons with context-aware baselines (cRSSM-S, cRSSM-D), highlighting the role of cross-modal regularization.

\textbf{Cross-Modal Regularization Effects.} DALI-S-$\rchi$ consistently outperforms DALI-S in Ball-in-Cup across all regimes and modalities, indicating that cross-modal regularization ($L_{\text{cross}}$, see~\ref{eq:context_encoder_loss-cross}) enhances the context encoder’s ability to capture pendulum-like dynamics. Aligning $\chat_t$ with the world model’s posterior $z_t$ enhances context inference for nonlinear dynamics, benefiting from low partial observability in Featurized inputs and stabilizing inference in Pixel modality’s noisy visual inputs. In contrast, DALI-S outperforms DALI-S-$\rchi$ in Walker across most regimes and modalities, indicating that the forward dynamics loss ($L_{\text{FD}}$, see~\ref{eq:context_encoder_loss}) alone suffices for contexts where actuator strength linearly scales joint torques. By simply optimizing for next-step predictions, DALI-S effectively handles predictable torque scaling, particularly in the Pixel modality where visual noise may amplify $L_{\text{cross}}$’s complexity in DALI-S-$\rchi$. This underscores the need for task-specific regularization. Ball-in-Cup exhibits larger performance drops from Featurized to Pixel modalities (e.g., DALI-S-$\rchi$: $0.3720$ to $0.2730$ in Extrapolation) compared to Walker, reflecting higher partial observability in visual settings with nonlinear dynamics.

\textbf{Comparison with Context-Aware Baselines.} DALI-S-$\rchi$ outperforms cRSSM-S and cRSSM-D in Ball-in-Cup Featurized Extrapolation ($0.3720$ vs. $0.2270$, $0.2780$) and Pixel Extrapolation ($0.2730$ vs. $0.1870$, $0.2420$), indicating that $\chat_t$’s inferred context generalizes better than ground-truth $c$, which tends to overfit to training contexts. In Walker Featurized Extrapolation, both DALI-S and DALI-S-$\rchi$ surpass cRSSM-S and cRSSM-D ($0.7810$, $0.7770$ vs. $0.7020$, $0.7490$), benefiting from $\chat_t$’s inference of actuator torque scaling. However, in Ball-in-Cup Pixel Mixed, cRSSM-D ($0.6250$) outperforms DALI-S-$\rchi$ ($0.6030$), indicating that Deep Integration of ground-truth $c$ better supports modeling of complex visual dynamics. Similarly, in Walker Pixel Extrapolation, cRSSM-S ($0.7770$) surpasses DALI-S ($0.7580$), leveraging ground-truth context for precise visual modeling.

\textbf{Shallow Context Propagation as Regularization.} In our experiments, Shallow Integration, as implemented in DALI-S and DALI-S-$\rchi$, consistently performs well, leading to its exclusive use in the results reported here. Shallow Integration incorporates the inferred context $\chat_t$ solely in the world model’s encoder, $z_t \sim q_\theta(z_t | h_t, o_t, \chat_t)$, allowing context information to propagate indirectly to the recurrent state $h_t = f_\theta(h_{t-1}, z_{t-1}, a_{t-1})$ through recurrence. This design regularizes the world model, potentially mitigating overfitting to noisy $\chat_t$ estimates, which can be particularly beneficial in OOD settings. The empirical effectiveness of Shallow Integration in our experiments suggests that implicit context propagation can effectively leverage the $\beta$-mixing property in practice.

\subsection{Validating Physically Consistent Counterfactuals}
\label{sec:PhyConsisCF}
We assess whether structured perturbations to the latent space $\chat$ induce counterfactual trajectories that adhere to Newtonian physics in the Ball-in-Cup task, validating that $\chat$ encodes mechanistic factors (e.g., gravity, string length) critical for generalization. For this task, the agent must swing a ball into a cup under variable gravity and string length. In our setup, the learned context representation $\chat$ is an $8$-dimensional vector, i.e., $\chat \in \mathbb{R}^8$, produced by the dynamics-aligned encoder. Our goal is to identify which latent dimensions in $\chat$ dominantly encode these mechanistic factors.

For each latent dimension $\chat_j$, $j=1,\dots,8$, we sample a fixed observation $o_t$ from a test episode and use the learned context encoder $g_\varphi$ to infer the representation $\chat_t = g_\varphi(o_{t-K:t}, a_{t-K:t-1})$ with $K=50$.
Encoding $o_t$ into the latent state $s_t = \{h_t, z_t\}$, we then roll out actions $a_{t:t+H}$ under both the original $\chat$ and a perturbed $\chat' = \chat + \Delta \cdot \mathbf{e}_j$, where $\mathbf{e}_j$ is a one-hot vector, and $\Delta$ is the standard deviation of $\chat_j$ across the dataset. Using the frozen world model and policy, we decode the predicted observation sequences, yielding two diverging trajectories:  baseline trajectory under the original $\chat$, $\mathcal{T}^{(0)} = \{\hat{o}_t, \hat{o}_{t+1}, ..., \hat{o}_{t+H}\}$, and counterfactual trajectory under the perturbed $\chat'$, $\mathcal{T}'^{(j)} = \{\hat{o}'_t, \hat{o}'_{t+1}, ..., \hat{o}'_{t+H}\}$ with $H=50$.
We repeat this process $N=2500$ times to generate 2500 baseline trajectories and 2500 perturbed trajectories for each $\chat_j$, yielding a total of 5000 trajectory samples: $\mathcal{D}^{(j)}_{cf} = \{\mathcal{T}^{(0)}_1, \mathcal{T}'^{(j)}_1, \ldots, \mathcal{T}^{(0)}_{2500}, \mathcal{T}'^{(j)}_{2500}\}$ for $\chat_j$.
We train a binary classifier $c_\nu^{(j)}$ to distinguish between perturbed trajectories $\mathcal{T}'^{(j)}$ (label 1) and baseline trajectories $\mathcal{T}^{(0)}$ (label 0). The classifier outputs the probability $p_n = P(\text{label}=1 | \mathcal{T}_n)$ that a given trajectory $\mathcal{T}_n \in \mathcal{D}^{(j)}_{cf}$ is perturbed.
High classifier accuracy for a given dimension suggests it captures mechanistic relationships rather than superficial correlations, with perturbations inducing systematic and physically consistent changes.

We compute 95\% bootstrap confidence intervals (CIs) for the classifier AUC for each latent dimension $\chat_j$
and rank the 8 dimensions based on the AUCs, and select the top dimension of $\chat$ for validating physically consistent counterfactuals.
See Appendix~\ref{app:counterfactuals} for details on the classifier implementation and additional experiments on physically consistent counterfactuals.

\subsubsection{Mechanistic Alignment in Latent Imagination: Gravity-String Dynamics}
\label{sec:Phystory}

We demonstrate DALI's capacity for physically consistent latent imagination by analyzing counterfactual trajectories generated through perturbations to the top-ranked latent dimension ($\chat_6$) in DALI-S-$\rchi$ (Shallow Integration with cross-modal regularization). We show how $\chat_6$ encodes coupled gravity-string dynamics in the Ball-in-Cup task.

Our analysis reveals consistent physical behavior across both Featurized and Pixel-based modalities. We demonstrate systematic alignment between the original imagined trajectories and their counterfactual counterparts generated by perturbing the latent context representation in a trained world model. For the Featurized experiments, we tracked predefined state variables (e.g., ball position and velocity), while in the Pixel experiments, we captured rendered environment frames at 5-step intervals over a 64-step imagination horizon (excluding the final frame for visual clarity).

Leveraging our AUC-based ranking analysis, we identify $\chat_6$ as the dominant latent dimension across modalities. To isolate the influence of this dimension on passive dynamics, we generate counterfactual trajectories by perturbing $\chat_6$ and enforce zero-action rollouts. This suppresses policy-driven behavior, exposing the system's intrinsic swinging dynamics.

\textbf{Pixel Modality.} Figure~\ref{fig:counterfactual:ball_in_cup:pixel_trajectory} contrasts the original (top) and perturbed (middle) imagined trajectories. The perturbed rollout (initialized from the same observation) reveals two key effects: (1) Shorter string length: The blue ball (counterfactual) hovers higher than the original (yellow) in frame 40, indicating a shorter string, and (2) Higher acceleration: The counterfactual ball overtakes the original in frame 15 and 45, demonstrating faster swing cycles due to increased gravitational pull. These observations align with Newtonian mechanics: shorter strings reduce string length, increasing oscillation frequency, while higher gravity amplifies acceleration.

\textbf{Featurized Modality.} Figures~\ref{fig:counterfactual:ball_in_cup:ball_z_position} and \ref{fig:counterfactual:ball_in_cup:ball_z_velocity} quantify these effects.
The counterfactual trajectory (orange) in Figure~\ref{fig:counterfactual:ball_in_cup:ball_z_position} exhibits reduced amplitude
(lower peak Z-position) compared to the original (blue), consistent with the pixel-based evidence of a shorter string.
This alignment confirms that perturbing shortens the string length, physically constraining the ball's swing. In Figure~\ref{fig:counterfactual:ball_in_cup:ball_z_velocity}, the counterfactual's velocity peaks earlier and higher,
confirming faster acceleration under perturbed $\chat_6$.
Both modalities reveal that $\chat_6$ encodes a coupled relationship between gravity and string length, as a perturbation simultaneously alters both parameters in a physically plausible manner.

\section{Discussion and Outlook}
\label{sec:conclusions}

We introduced Dynamics-Aligned Latent Imagination (DALI), a framework built on DreamerV3 that enables zero-shot generalization by inferring latent contexts from interaction histories. In the Extrapolation regime, our methods DALI-S and DALI-S-$\rchi$ achieve substantial performance gains over the context-unaware Dreamer-DR baseline, ranging from $+4.0\%$ to $+96.4\%$ across Featurized and Pixel modalities in the DMC Ball-in-Cup and Walker Walk environments. Notably, DALI-S-$\rchi$ surpasses the ground-truth context-aware baselines cRSSM-S by $+45.9\%$ to $+63.9\%$ and cRSSM-D by $+12.8\%$ to $+33.8\%$ in Ball-in-Cup across both modalities. Additionally, DALI-S outperforms cRSSM-S by $+11.3\%$ and cRSSM-D by $+4.3\%$ in Walker’s Featurized modality, demonstrating effective zero-shot generalization to unseen contexts.

\textbf{Limitations for DALI’s Context Inference.} Theorem~\ref{thm:necessity} demonstrates that DALI’s context encoder achieves near-optimal context inference, $\mathcal{I}(c; \chat_t) \geq (1 - \delta) h(c)$ using short windows of length $K = \Omega(\log(1/\delta)/\lambda)$, yielding a sample complexity gain of $\mathcal{O}(T/K)$ over DreamerV3, assuming $\beta$-mixing and Lipschitz dynamics. As an information-theoretic result, it does not address the downstream impact of $\chat_t$ on policy performance, which depends on joint optimization of the context encoder, world model, and actor-critic components. Reliance on an exploratory policy may falter in sparse-reward or high-dimensional settings, producing noisy estimates. Furthermore, the $\beta$-mixing assumption may not hold in environments with slow-mixing dynamics, such as highly correlated trajectories or restricted exploration, potentially limiting generalization. Practical challenges, including training instability, sensitivity to hyperparameters, and the risk of overfitting in high-dimensional observation spaces, are also beyond its scope. Future theoretical work should model how context representations influence policy learning and robustness in complex RL environments.

\textbf{Integration Strategies and Practical Implications.} Empirically, DALI’s Shallow Integration strategy, where $\chat_t$ is appended only to the encoder input, consistently delivers strong performance in the tasks we evaluated, particularly under OOD conditions. Its simplicity acts as a regularizer, implicitly propagating $\chat_t$ through recurrence and mitigating overfitting, yielding gains of up to $+87.9\%$ (Featurized) and $+96.4\%$ (Pixel) over Dreamer-DR. Future work could investigate the regularization effects of Shallow Integration, explore hybrid strategies interpolating between Shallow and Deep, and develop theoretical models connecting context inference to policy performance. These directions would deepen our understanding of DALI’s integration mechanisms and their practical impact on generalization.

\begin{figure}[h!]
    \centering

    \begin{subfigure}[b]{1.0\textwidth}
        \centering
        \setlength{\tabcolsep}{0.75pt}
        \renewcommand{\arraystretch}{0}
        \begin{tabular}{*{13}{c}}
            \includegraphics[width=0.072\linewidth]{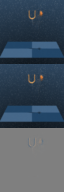} &
            \includegraphics[width=0.072\linewidth]{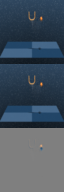} &
            \includegraphics[width=0.072\linewidth]{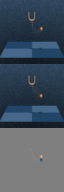} &
            \includegraphics[width=0.072\linewidth]{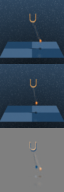} &
            \includegraphics[width=0.072\linewidth]{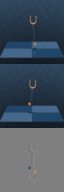} &
            \includegraphics[width=0.072\linewidth]{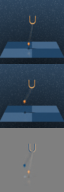} &
            \includegraphics[width=0.072\linewidth]{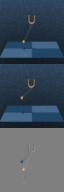} &
            \includegraphics[width=0.072\linewidth]{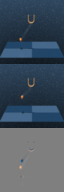} &
            \includegraphics[width=0.072\linewidth]{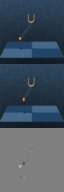} &
            \includegraphics[width=0.072\linewidth]{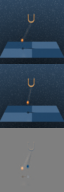} &
            \includegraphics[width=0.072\linewidth]{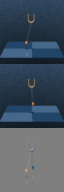} &
            \includegraphics[width=0.072\linewidth]{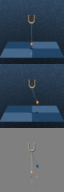} &
            \includegraphics[width=0.072\linewidth]{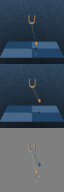} \\
            \\[0.5em]
            \small 0 & \small 5 & \small 10 & \small 15 & \small 20 & \small 25 & \small 30 & \small 35 & \small 40 & \small 45 & \small 50 & \small 55 & \small 60
        \end{tabular}
        \caption{Pixel Reconstructions}
        \label{fig:counterfactual:ball_in_cup:pixel_trajectory}
    \end{subfigure}

    \begin{subfigure}[b]{0.48\textwidth}
        \centering
        \includegraphics[width=0.9\textwidth]{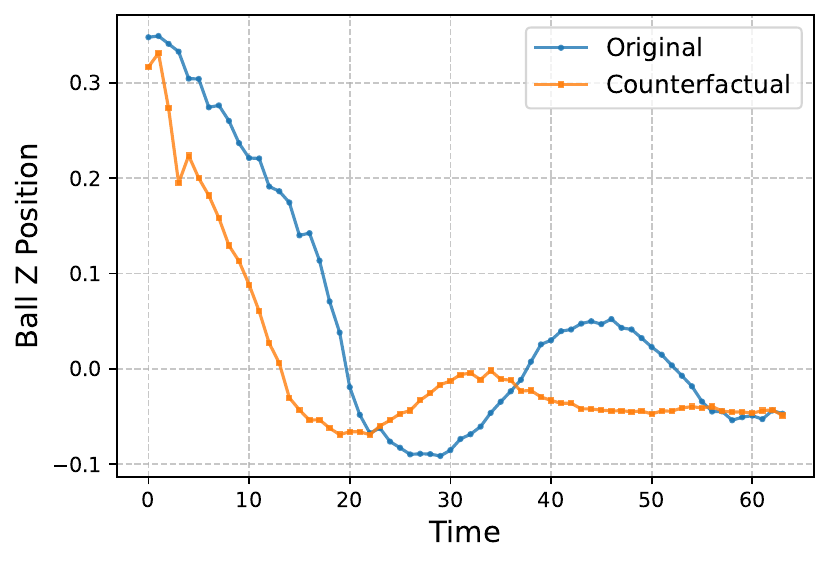}
        \caption{}
        \label{fig:counterfactual:ball_in_cup:ball_z_position}
    \end{subfigure}
    \begin{subfigure}[b]{0.48\textwidth}
        \centering
        \includegraphics[width=0.9\textwidth]{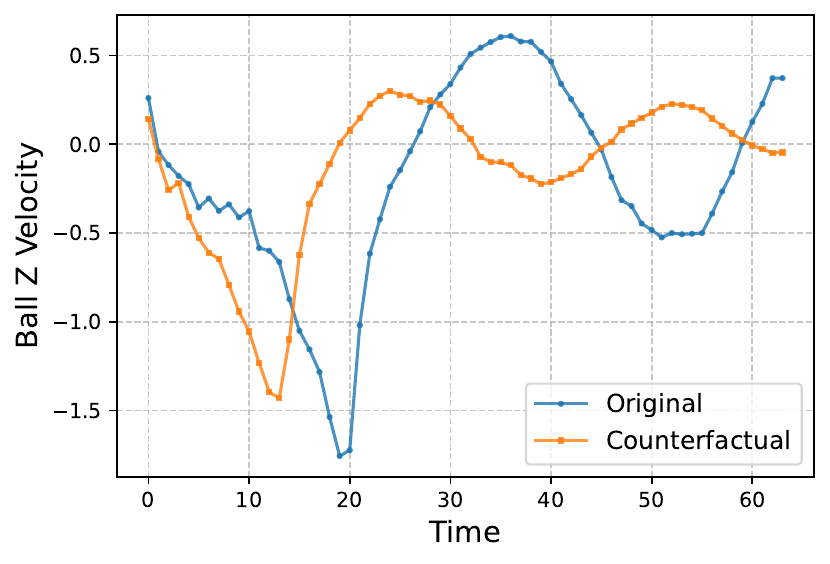}
        \caption{}
        \label{fig:counterfactual:ball_in_cup:ball_z_velocity}
    \end{subfigure}

    \caption{
        (a) \textbf{(Pixel Modality) Counterfactual Trajectories in Pixel Space}:
        Top: Original imagined trajectory of the Ball-in-Cup under default gravity and string length.
        Middle: Perturbed trajectory after adding noise $\Delta$ to the top-ranked latent dimension $\chat_6$. Bottom: Pixel-wise differences ($\delta = |\hat{o}_t - \hat{o}'_t|$). The perturbed trajectory (blue) exhibits a shorter string (frame 40) and faster acceleration (overtaking the original trajectory in frames 15 and 45), aligning with increased gravitational effects. Rollouts use zero actions to isolate passive dynamics.
        (b) \textbf{(Featurized Modality) Ball Z-Position Under Latent Perturbation}:
        Comparison of original (blue) and counterfactual (orange) ball height trajectories. The perturbed $\chat_6$ reduces oscillation amplitude         (lower peak Z-position) and accelerates descent, consistent with shorter string length and higher gravity.
        (c) \textbf{(Featurized Modality) Ball Z-Velocity Under Latent Perturbation}:
        Velocity profiles for original (blue) and counterfactual (orange) trajectories. The perturbed $\chat_6$ induces earlier and higher velocity peaks, confirming faster swing dynamics. This mirrors the pixel-based evidence of increased gravitational acceleration.
    }
    \label{fig:counterfactual:ball_in_cup:coupled}
\end{figure}


\newpage

\begin{ack}
The authors thank the anonymous reviewers for their valuable feedback and the NeurIPS community for fostering open and rigorous scientific exchange.
JB and ME gratefully acknowledge funding by the German Research Foundation DFG through the MoReSpace (402776968) project.
\end{ack}

{
\small
\bibliography{main}
\bibliographystyle{plainnat}
}

\appendix

\section{Formal Results and Proofs}\label{app:A}

Theorems~\ref{thm:necessity} and \ref{thm:context_encoder_bottleneck} provide formal underpinnings for Theorem~\ref{thm:informal} in the main text, establishing the necessity and efficiency of DALI’s context encoder with Deep Integration (see Section~\ref{subsec:integration}) compared to DreamerV3 with domain randomization in cMDPs, focusing on context identifiability, sample complexity, and information bottleneck reduction.

Theorem~\ref{thm:necessity} compares DALI with Deep Integration, using context encoder $\chat_t$, against DreamerV3 with domain randomization, using recurrent state $h_t$, in a cMDP. The theorem establishes: (1) DALI’s $\chat_t$ captures at least as much mutual information with context $c$ as DreamerV3’s $h_t$, up to an error $\epsilon(K) = \mathcal{O}(e^{-\lambda K/2}) h(c)$ that decays exponentially with window size $K$; (2) DALI achieves near-optimal $\mathcal{I}(c; \chat_t) \geq (1 - \delta) h(c)$ with $N = \mathcal{O}(1/\delta^2)$ samples of $K = \Omega(\log(1/\delta)/\lambda)$ transitions, leveraging shorter windows compared to DreamerV3’s full episodes of length $T$. This demonstrates the necessity and efficiency of DALI’s context encoder for context identifiability and learning.

\begin{theorem}[Necessity and efficiency of Context encoder]
\label{thm:necessity}
Consider a cMDP $\mathcal{M} = (\mathcal{S}, \mathcal{A}, \mathcal{O}, \mathcal{C}, \mathcal{R}, \mathcal{P}, \mathcal{E}, \mu, p_{\mathcal{C}})$ with:
\begin{itemize}[leftmargin=*]
    \item Continuous context $c \in \mathcal{C} \subseteq \mathbb{R}^d$, static within episodes, drawn i.i.d. as $c \sim p_{\mathcal{C}}$, with $h(c) < \infty$.
    \item Lipschitz dynamics: $\exists L > 0$ such that $\forall s, s' \in \mathcal{S}$, $a \in \mathcal{A}$, $c, c' \in \mathcal{C}$,
    \begin{align*}
    \|\mathcal{P}_c(s'|s,a) - \mathcal{P}_{c'}(s'|s,a)\|_1 \leq L \|c - c'\|_2.
    \end{align*}
    \item Observation noise: The observation function $\mathcal{E}$ satisfies $o_t \sim \mathcal{N}(s_t, \sigma^2 I)$, with fixed variance $\sigma^2 > 0$.
    \item $\beta$-mixing: The process $\tau_t = (o_t, a_t)$ is $\beta$-mixing with
    \begin{align*}
    \beta(K) = \sup_t \sup_{A \in \sigma(\tau_s, s \leq t), B \in \sigma(\tau_s, s \geq t+K)} |P(A \cap B) - P(A)P(B)| \leq C e^{-\lambda K},
    \end{align*}
    where $\sigma(\tau_s, s \leq t)$ denotes the sigma-algebra generated by $\{\tau_s \colon s \leq t\}$, and similarly for $\sigma(\tau_s, s \geq t+K)$, for constants $C, \lambda > 0$.
    \item Sufficient exploration: The policy ensures that for any $c \neq c' \in \mathcal{C}$, there exists a constant $\alpha > 0$ such that $\mathrm{KL}(p(o_{t-K:t}, a_{t-K:t-1} | c) \| p(o_{t-K:t}, a_{t-K:t-1} | c')) \geq \alpha K$.
    \item Bounded log-likelihood ratios: $\left| \log \frac{p(o_i, a_i | c', \tau_{t-K:i-1})}{p(o_i, a_i | c, \tau_{t-K:i-1})} \right| \leq M$, ensured by restricting observations and actions to compact sets or imposing suitable policy constraints.
    \item Universal approximator: DALI’s $g_\varphi$ (neural network) approximates any continuous function on compact domains.
    \item Training data: Both models are trained on trajectories from $\mathcal{M}$, with contexts $c \sim p_{\mathcal{C}}$ i.i.d. per episode, using an exploratory policy. DreamerV3 with domain randomization trains its RSSM on episodes $(o_{1:T}, a_{1:T-1})$ of length $T$. DALI trains its context encoder $g_\varphi$ on $K$-length windows $(o_{t-K:t}, a_{t-K:t-1})$ with loss $L_{\text{FD}} = \mathbb{E} [ \| o_{t+1} - f_\varphi^w(o_t, a_t, \chat_t) \|_2^2 ]$, and its RSSM on full episodes, conditioning on $\chat_t$.
    \item Both models use a GRU $f_\theta$ with finite parameters for their recurrent state updates.
\end{itemize}

Let $h_t = f_\theta(h_{t-1}, z_{t-1}, a_{t-1})$ be DreamerV3’s recurrent state, with $z_t \sim q_\theta(z_t | h_t, o_t)$, and $\chat_t = g_\varphi(o_{t-K:t}, a_{t-K:t-1})$ DALI’s context encoding, with $h_t = f_\theta(h_{t-1}, z_{t-1}, a_{t-1}, \chat_t)$. Then:
\begin{enumerate}[leftmargin=*]
    \item \textbf{Context identifiability}:
    For any $K$, there exists $g_\varphi$ such that, within an episode:
    \begin{align*}
    \mathcal{I}(c; \chat_t) \geq \mathcal{I}(c; h_t) - \epsilon(K), \quad \epsilon(K) = \mathcal{O}(e^{-\lambda K/2}) h(c).
    \end{align*}
    \item \textbf{Sample complexity}: For $K = \Omega\left(\frac{\log(1/\delta)}{\lambda}\right)$ and $\delta \in (0, 1)$, DALI achieves
    \begin{align*}
    \mathcal{I}(c; \chat_t) \geq (1 - \delta) h(c)
    \end{align*}
    with $N = \mathcal{O}\left(\frac{1}{\delta^2}\right)$ samples of $K$ transitions, vs. a hypothetical DreamerV3 context estimator $g_\psi(o_{1:T}, a_{1:T-1})$ requiring $N = \mathcal{O}\left(\frac{1}{\delta^2}\right)$ samples of $T$ transitions.
\end{enumerate}
\end{theorem}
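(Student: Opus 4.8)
The plan is to treat the two conclusions as resting on a single mechanism -- that a length-$K$ window already carries almost all the information about $c$ that the full trajectory does -- and to supply that mechanism via $\beta$-mixing, exploration, and universal approximation in that order. For the \textbf{identifiability} claim, I would first isolate what $\chat_t$ competes against. Writing $\mathcal{H}_t := (o_{1:t}, a_{1:t-1})$ for the full history and $W_t := (o_{t-K:t}, a_{t-K:t-1})$ for the window, the RSSM recurrence makes $h_t$ a deterministic function of $\mathcal{H}_t$, so the data-processing inequality gives $\mathcal{I}(c; h_t) \leq \mathcal{I}(c; \mathcal{H}_t)$. The universal-approximation hypothesis lets $g_\varphi$ realize a (near-)sufficient statistic of $W_t$, so $\mathcal{I}(c; \chat_t) \geq \mathcal{I}(c; W_t) - \xi$ for arbitrarily small $\xi>0$. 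By the chain rule the gap to close is exactly the excess information carried by the distant past, $\mathcal{I}(c; \mathcal{H}_t) - \mathcal{I}(c; W_t) = \mathcal{I}(c; P_t \mid W_t)$ with $P_t := (o_{1:t-K-1}, a_{1:t-K-1})$.

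The crux of Part 1 is bounding this conditional mutual information by $\beta(K)$. I would use the mixing hypothesis to control the coupling between the distant past $P_t$ and the recent block that carries $c$'s signature, and the bounded-log-likelihood-ratio assumption ($\leq M$) to keep the pointwise information density finite so the coupling can be converted into an information bound. A square-root step -- Cauchy--Schwarz, or Pinsker passing from divergence to total variation -- turns $\beta(K) \leq C e^{-\lambda K}$ into an $\mathcal{O}(e^{-\lambda K/2})$ factor, and tracking the total available information attaches the $h(c)$ scale, yielding $\mathcal{I}(c; P_t \mid W_t) \leq \epsilon(K) = \mathcal{O}(e^{-\lambda K/2}) h(c)$. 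Chaining the three inequalities gives $\mathcal{I}(c; \chat_t) \geq \mathcal{I}(c; h_t) - \epsilon(K)$.

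For the \textbf{sample-complexity} claim I would split fidelity from estimation. Fidelity: the exploration hypothesis $\mathrm{KL}(p(W_t \mid c) \,\|\, p(W_t \mid c')) \geq \alpha K$ forces the window-laws of distinct contexts apart at a rate linear in $K$, so the posterior $p(c \mid W_t)$ concentrates and the residual differential entropy obeys $h(c \mid W_t) \leq \delta\, h(c)$ once $K = \Omega(\log(1/\delta)/\lambda)$; this choice also makes $e^{-\lambda K/2} = \mathcal{O}(\delta)$, so it is consistent with Part 1, and it gives $\mathcal{I}(c;\chat_t) = h(c) - h(c \mid \chat_t) \geq (1-\delta) h(c)$. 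Estimation: the empirical forward-dynamics risk concentrates around $L_{\text{FD}}$ at the Monte-Carlo rate $\mathcal{O}(1/\sqrt{N})$ -- with $\beta$-mixing licensing a blocking argument that treats the $N$ windows as effectively independent -- so $N = \mathcal{O}(1/\delta^2)$ windows suffice to fit $g_\varphi$ to within $\delta$ of the target. The efficiency comparison is then pure bookkeeping: DALI consumes $N$ windows of $K$ transitions, whereas the hypothetical full-episode estimator $g_\psi(o_{1:T}, a_{1:T-1})$ consumes $N$ samples of $T$ transitions, a factor $\mathcal{O}(T/K)$ since $T \gg K$.

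The hard part will be the step bounding $\mathcal{I}(c; P_t \mid W_t)$, because $c$ is static and therefore induces genuine long-range dependence across the episode, so the \emph{unconditional} $\beta$-mixing coefficient does not by itself decouple the distant past from $c$. Making this rigorous requires either reading the mixing hypothesis as holding conditionally on $c$, or carefully isolating the increment of information the past adds beyond the window and bounding it block-to-block; and then converting the resulting total-variation estimate into a differential-entropy statement -- correctly surfacing the $h(c)$ factor rather than an additive constant -- is where the bounded-likelihood-ratio assumption must do its work. I expect the constant-chasing that pins the exponent to exactly $\lambda/2$, and the reconciliation of the mixing rate $\lambda$ with the exploration rate $\alpha$ governing posterior concentration, to be the most delicate bookkeeping.
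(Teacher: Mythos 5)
Your overall strategy is sound and your Part 2 matches the paper's in essence, but your Part 1 takes a genuinely different decomposition, and your self-identified ``hard part'' dissolves once you notice what your own Part 2 already establishes. The paper does not bound $\mathcal{I}(c; P_t \mid W_t)$ at all. Instead it proves two separate lemmas: an entropy-decay lemma showing $h(c \mid W_t) \leq C' e^{-\lambda K} h(c)$ via Fano's inequality, a union bound over a discretization of $\mathcal{C}$, and a Hoeffding bound on the bounded log-likelihood-ratio increments (with $\beta$-mixing entering only as an additive correction to the concentration inequality, not as the decoupling mechanism); and a separate GRU-capacity lemma showing $\mathcal{I}(c; h_t) \leq (1-\kappa) h(c)$ for DreamerV3's recurrent state. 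Part 1 then follows by subtraction: DALI's $\mathcal{I}(c;\chat_t)$ is within $\epsilon(K)$ of $h(c)$, while DreamerV3's is at most $h(c)$. Your route would work too, and more simply: since $\mathcal{I}(c; P_t \mid W_t) \leq h(c \mid W_t)$, the exploration-driven entropy decay you invoke for your fidelity step already bounds the quantity you flag as delicate --- there is no need to read the mixing hypothesis conditionally on $c$ or to run a block-to-block coupling, because the exploration assumption ($\mathrm{KL}$ separation growing linearly in $K$) makes the window alone nearly sufficient for $c$. One further discrepancy worth noting: you attribute the $e^{-\lambda K/2}$ exponent to a Pinsker/Cauchy--Schwarz square root applied to $\beta(K)$, but in the paper it arises from the encoder's generalization bound --- the squared forward-dynamics loss is driven to $\epsilon^2 = \mathcal{O}(e^{-\lambda K})$, so the induced parameter-estimation error, and hence the additive entropy penalty $\delta'$ in $h(c\mid\chat_t) \leq h(c\mid W_t) + \delta'$, scales as $\epsilon = \mathcal{O}(e^{-\lambda K/2})$. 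Your approach buys a cleaner Part 1 that needs only the data-processing inequality on $h_t$ rather than the paper's GRU bottleneck lemma; the paper's approach buys the stronger standalone claim that DreamerV3's recurrent state provably loses a constant fraction of $h(c)$, which is what makes the gap non-vacuous rather than merely non-negative.
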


\begin{remark}[Non-trivial sample complexity gain of DALI’s Context encoder]
\label{rem:sample_complexity_gain}
The sample complexity gain of $\mathcal{O}(T/K)$ for DALI over DreamerV3 with domain randomization, as shown in Theorem~\ref{thm:necessity}, arises from DALI’s use of $K$-length windows versus DreamerV3’s full episodes of length $T$. While this gain appears to scale with input lengths, it is non-trivial: Both models achieve $N = \mathcal{O}(1/\delta^2)$ samples,
but DALI’s window size $K = \Omega(\log(1/\delta)/\lambda)$ leverages the $\beta$-mixing property (Lemma~\ref{lemma:betamixing}) to ensure $h(c | \tau_{t-K:t}) \leq C' e^{-\lambda K} h(c)$, enabling efficient context identification. In contrast, DreamerV3 relies on longer trajectories ($T \gg K$) to achieve comparable context identification. The gain reflects DALI’s context encoder exploiting short, local histories determined by the mixing rate $\lambda$ and desired accuracy $\delta$.
\end{remark}

\begin{remark}[Consistency of DALI’s sample complexity across integration strategies]
\label{rem:shallow_integration}
The sample complexity advantage of DALI established in Theorem~\ref{thm:necessity} for Deep Integration extends to Shallow Integration (Section~\ref{subsec:integration}), since the context encoder $\chat_t$ is trained identically on $K$-length windows in both configurations. Incorporating the cross-modal loss $L_{\text{cross}}$ in~\eqref{eq:context_encoder_loss-cross} aligns $\chat_t$ with $z_t$, which may allow the context representation $\chat_t$ to leverage structured priors from the world model, potentially enhancing its robustness. This addition does not affect sample complexity, as $L_{\text{cross}}$ operates on the same $K$-length windows (Algorithm~\ref{algo:agent-couple-S}). Recurrent unrolls are limited to $K$ steps, and $\varphi$ updates depend solely on the current window, with detached states blocking backpropagation beyond it. The world model parameters ($\theta$) are frozen during context learning (inputs to $f_\theta$ and $q_\theta$ are detached), so gradient stopping isolates $\varphi$ and preserves the $\mathcal{O}(T/K)$ efficiency.
\end{remark}

Theorem~\ref{thm:context_encoder_bottleneck} proves that DALI’s sequence model, incorporating a context encoder $\chat_t$, reduces the information bottleneck in cMDPs compared to DreamerV3’s RSSM. Specifically, it shows that DALI’s recurrent state retains more context information, satisfying $\mathcal{I}(c; h_t^{\text{DALI}}) \geq \mathcal{I}(c; h_t^{\text{RSSM}}) - \epsilon(K)$, where $\epsilon(K) = \mathcal{O}(e^{-\lambda K/2}) h(c)$, leveraging the encoder’s ability to efficiently capture context over short windows.

\begin{theorem}[Context encoder reduces information bottleneck]
\label{thm:context_encoder_bottleneck}
Consider a cMDP $\mathcal{M} = (\mathcal{S}, \mathcal{A}, \mathcal{O}, \mathcal{C}, \mathcal{R}, \mathcal{P}, \mathcal{E}, \mu, p_{\mathcal{C}})$ with:
\begin{itemize}[leftmargin=*]
    \item Continuous context $c \in \mathcal{C} \subseteq \mathbb{R}^d$, static within episodes, drawn i.i.d. as $c \sim p_{\mathcal{C}}$, with $h(c) < \infty$.
    \item Observations $o_t = s_t + \eta_t$, $\eta_t \sim \mathcal{N}(0, \sigma^2 I)$, $\sigma^2 > 0$.
    \item Lipschitz dynamics: $\exists L > 0$ such that $\forall s, s' \in \mathcal{S}$, $a \in \mathcal{A}$, $c, c' \in \mathcal{C}$,
    $$||\mathcal{P}_c(s'|s,a) - \mathcal{P}_{c'}(s'|s,a)||_1 \leq L ||c - c'||_2.$$
    \item $\beta$-mixing with $\beta(K) \leq C e^{-\lambda K}$, exploratory policy with $\mathrm{KL}(p(\tau_{t-K:t} | c) \| p(\tau_{t-K:t} | c')) \geq \alpha K$, $\alpha > 0$.
    \item Context encoder $g_\varphi$ is a universal approximator, trained with forward dynamics loss $L_{\text{FD}} = \mathbb{E} [ \| o_{t+1} - f_\varphi^w(o_t, a_t, \chat_t) \|_2^2 ]$, satisfying $h(c | \chat_t) \leq C' e^{-\lambda K} h(c) + \delta'$, $\delta' = \mathcal{O}(e^{-\lambda K/2})$ (Lemma~\ref{lemma:DALIAerror}).
    \item Both models use a GRU $f_\theta$ with finite parameters.
\end{itemize}
Let $h_t^{\text{RSSM}} = f_\theta(h_{t-1}, z_{t-1}, a_{t-1})$ be DreamerV3’s recurrent state, with $z_t \sim q_\theta(z_t | h_t, o_t)$, and $h_t^{\text{DALI}} = f_\theta(h_{t-1}, z_{t-1}, a_{t-1}, \chat_t)$ be DALI’s recurrent state. Then, for any window size $K$, DALI’s sequence model satisfies:
$$\mathcal{I}(c; h_t^{\text{DALI}}) \geq \mathcal{I}(c; h_t^{\text{RSSM}}) - \epsilon(K),$$
where $\epsilon(K) = \mathcal{O}(e^{-\lambda K/2}) h(c)$, implying that the context encoder $\chat_t$ reduces the information bottleneck compared to DreamerV3’s RSSM.
\end{theorem}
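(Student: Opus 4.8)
The plan is to reduce the claimed inequality to a statement about conditional differential entropies and then exploit the near-sufficiency of $\chat_t$ guaranteed by Lemma~\ref{lemma:DALIAerror}. Writing both mutual informations as $\mathcal{I}(c; h_t^{\text{DALI}}) = h(c) - h(c \mid h_t^{\text{DALI}})$ and $\mathcal{I}(c; h_t^{\text{RSSM}}) = h(c) - h(c \mid h_t^{\text{RSSM}})$, the target is equivalent to $h(c \mid h_t^{\text{DALI}}) \leq h(c \mid h_t^{\text{RSSM}}) + \epsilon(K)$. The conceptual core is that DALI's recurrent update $h_t^{\text{DALI}} = f_\theta(h_{t-1}, z_{t-1}, a_{t-1}, \chat_t)$ ingests $\chat_t$ as an explicit input, so it can retain essentially all the context information that $\chat_t$ already isolates over a $K$-window, whereas DreamerV3's $h_t^{\text{RSSM}}$ can at best capture $h(c)$.

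Step one is to lower-bound $\mathcal{I}(c; h_t^{\text{DALI}})$ by $\mathcal{I}(c; \chat_t)$. Since the GRU $f_\theta$ has finite but freely chosen parameters and $\chat_t$ enters as a direct input, I would exhibit a parameterization whose update copies $\chat_t$ (or an affine image of it) into a dedicated coordinate block of the hidden state --- e.g.\ driving the relevant update-gate entries to one and the candidate activation to $\chat_t$ --- so that $\chat_t = \phi(h_t^{\text{DALI}})$ for a measurable read-out map $\phi$. Recoverability makes $c \to h_t^{\text{DALI}} \to \chat_t$ a Markov chain, and the data-processing inequality yields $\mathcal{I}(c; h_t^{\text{DALI}}) \geq \mathcal{I}(c; \chat_t)$. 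The universal-approximation / sufficient-width assumption on $f_\theta$ is what licenses this existence claim.

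Step two is to quantify $\mathcal{I}(c; \chat_t)$ and bound $\mathcal{I}(c; h_t^{\text{RSSM}})$. From Lemma~\ref{lemma:DALIAerror}, $h(c \mid \chat_t) \leq C' e^{-\lambda K} h(c) + \delta'$ with $\delta' = \mathcal{O}(e^{-\lambda K/2})$, so $\mathcal{I}(c; \chat_t) = h(c) - h(c \mid \chat_t) \geq (1 - C' e^{-\lambda K}) h(c) - \delta'$. For the comparison term I would use the elementary bound $\mathcal{I}(c; h_t^{\text{RSSM}}) \leq h(c)$; this requires $h(c \mid h_t^{\text{RSSM}}) \geq 0$, which I would justify from the observation model $o_t = s_t + \eta_t$: because $h_t^{\text{RSSM}}$ is a deterministic function of the noisy history, $h(c \mid h_t^{\text{RSSM}}) \geq h(c \mid o_{1:t})$, and the additive Gaussian noise floor keeps the latter non-negative. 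Chaining the three bounds gives $\mathcal{I}(c; h_t^{\text{DALI}}) \geq \mathcal{I}(c; h_t^{\text{RSSM}}) - \big( C' e^{-\lambda K} h(c) + \delta' \big)$, and setting $\epsilon(K) = C' e^{-\lambda K} h(c) + \delta'$ collapses to the stated $\epsilon(K) = \mathcal{O}(e^{-\lambda K/2}) h(c)$ since $e^{-\lambda K} \le e^{-\lambda K/2}$.

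The main obstacle I anticipate is making Step one rigorous while keeping the loss inside a single $\epsilon(K)$: the finite-parameter GRU must simultaneously respect the fixed, shared recurrence structure and faithfully preserve $\chat_t$, so I would need to argue that any residual distortion introduced by the recurrence is dominated by the $\chat_t$-error terms already present, rather than being an independent source of loss. The secondary delicate point is the non-negativity of the differential conditional entropy $h(c \mid h_t^{\text{RSSM}})$, which is not automatic for continuous variables and must be earned from the observation-noise assumption; I would make this explicit rather than treat $\mathcal{I}(c; h_t^{\text{RSSM}}) \leq h(c)$ as a free fact. Everything else is bookkeeping once Lemma~\ref{lemma:DALIAerror} is in hand.
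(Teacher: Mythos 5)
Your proposal is correct in substance and reaches the same final bound, but it takes a genuinely different---and in one place more defensible---route for the crucial step. The paper proves this theorem by combining two auxiliary lemmas: Lemma~\ref{lemma:DrV3bottleneck} gives $\mathcal{I}(c; h_t^{\text{RSSM}}) \leq (1-\kappa)h(c)$, and Lemma~\ref{lemma:DALIbottleneck} transfers the encoder guarantee $h(c \mid \chat_t) \leq C e^{-\lambda K}h(c) + \delta'$ to the recurrent state by asserting a Markov chain $c \to \chat_t \to h_t^{\text{DALI}}$ and reading off $h(c \mid h_t^{\text{DALI}}) \leq h(c \mid \chat_t)$ from the data-processing inequality; the two bounds are then subtracted and $\epsilon(K)$ is set to $\kappa' h(c)$. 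You instead bypass Lemma~\ref{lemma:DALIbottleneck} entirely: you construct a GRU parameterization under which $\chat_t$ is recoverable from $h_t^{\text{DALI}}$ via a measurable read-out, which makes $c \to h_t^{\text{DALI}} \to \chat_t$ a Markov chain and yields $\mathcal{I}(c; h_t^{\text{DALI}}) \geq \mathcal{I}(c; \chat_t)$ by DPI applied in the opposite direction. This is the step where your route buys something real: for the chain $c \to \chat_t \to h_t$ the DPI gives $\mathcal{I}(c; h_t) \leq \mathcal{I}(c; \chat_t)$, i.e.\ $h(c \mid h_t) \geq h(c \mid \chat_t)$, so the inequality the paper actually uses downstream points the wrong way unless one additionally argues that the map $\chat_t \mapsto h_t$ is (marginally) invertible---which is precisely the content of your copy construction. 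On the other side of the comparison, you use only the crude bound $\mathcal{I}(c; h_t^{\text{RSSM}}) \leq h(c)$ rather than the sharper $(1-\kappa)h(c)$; this is harmless because the paper's own final step also only exploits $\mathcal{I}(c; h_t^{\text{RSSM}}) \leq h(c)$ when it sets $\epsilon(K) = \kappa' h(c)$, so Lemma~\ref{lemma:DrV3bottleneck} is not actually load-bearing for this theorem. Your flagged caveats (that exact copying through a gated recurrence should be relaxed to an injective embedding, and that $h(c \mid h_t^{\text{RSSM}}) \geq 0$ is not free for differential entropy and must come from the Gaussian observation-noise floor) are refinements the paper glosses over; neither changes the conclusion, and your $\epsilon(K) = C' e^{-\lambda K} h(c) + \delta'$ matches the paper's error term up to the same implicit absorption of the additive $\delta'$ into $\mathcal{O}(e^{-\lambda K/2})h(c)$.
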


For $\beta$-mixing sequences with $\beta(K) \leq C e^{-\lambda K}$, blocks of $K$-separated transitions are approximately independent.
Lemma~\ref{lemma:betamixing} establishes that, in a $\beta$-mixing  cMDP, the conditional entropy of the context $c$ given a history $\tau_{t-K:t}$ decays exponentially as $h(c | \tau_{t-K:t}) \leq C' e^{-\lambda K} h(c)$. This result is crucial for Theorems~\ref{thm:necessity} and \ref{thm:context_encoder_bottleneck}, as it quantifies how historical trajectories reduce uncertainty about the context, enabling DALI’s context encoder to achieve near-optimal information capture with short windows, supporting both the identifiability and sample complexity claims.

\begin{lemma}[Entropy decay from mixing]
\label{lemma:betamixing}
Consider a cMDP with context $c \sim p_{\mathcal{C}}$, observation noise $o_t = s_t + \eta_t$, $\eta_t \sim \mathcal{N}(0, \sigma^2 I)$ and history $\tau_{t-K:t} = (o_{t-K:t}, a_{t-K:t-1})$, where the process $\tau_t = (o_t, a_t)$ is $\beta$-mixing with $\beta(K) \leq C e^{-\lambda K}$ for constants $C, \lambda > 0$, bounding dependence errors in tail probabilities by $\beta(K)$, log-likelihood ratios are bounded as $\left| \log \frac{p(o_i, a_i | c', \tau_{t-K:i-1})}{p(o_i, a_i | c, \tau_{t-K:i-1})} \right| \leq M$,
the policy ensures exploration such that for any $c \neq c' \in \mathcal{C}$, there exists a constant $\alpha \geq M \sqrt{2 \lambda}$ satisfying $\mathrm{KL}(p(\tau_{t-K:t} | c) | p(\tau_{t-K:t} | c')) \geq \alpha K$, and $c$ has bounded density on $\mathcal{C} \subseteq \mathbb{R}^d$ with differential entropy $h(c) < \infty$.
Then, there exists a constant $C' > 0$ such that: $$h(c | \tau_{t-K:t}) \leq C' e^{-\lambda K} h(c).$$
\end{lemma}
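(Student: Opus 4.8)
The plan is to show that observing the window $\tau_{t-K:t}$ collapses the posterior over the context onto a shrinking neighborhood of the true value, so that the residual entropy decays at the mixing rate. I would start from the identity $h(c \mid \tau_{t-K:t}) = \mathbb{E}_{\tau}\bigl[ h\bigl(p(\cdot \mid \tau)\bigr)\bigr]$, which reduces the claim to controlling the expected differential entropy of the posterior $p(c \mid \tau_{t-K:t})$. Because the observation noise is Gaussian, every per-step conditional density $p(o_i, a_i \mid c, \tau_{t-K:i-1})$ is well defined and strictly positive, so for the true context $c$ and any competitor $c'$ the log-likelihood ratio telescopes into $\Lambda_K(c,c') = \sum_{i=t-K+1}^{t} \log \frac{p(o_i,a_i \mid c, \tau_{t-K:i-1})}{p(o_i,a_i \mid c', \tau_{t-K:i-1})}$, a sum of $K$ increments each bounded by $M$ whose conditional mean equals $\mathrm{KL}(p(\tau_{t-K:t}\mid c)\,\|\,p(\tau_{t-K:t}\mid c')) \ge \alpha K$.

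The central step is a deviation bound for $\Lambda_K$ under the $\beta$-mixing law. I would use the blocking method: split the $K$ indices into consecutive blocks, couple the block-sums of the dependent process to independent surrogates, and absorb the coupling error into $\beta(K) \le C e^{-\lambda K}$; on the independent surrogate a Hoeffding inequality for the $M$-bounded increments yields $P_{\tau\mid c}\bigl(\Lambda_K(c,c') \le 0\bigr) \le \exp\!\bigl(-\alpha^2 K/(2M^2)\bigr) + C e^{-\lambda K}$. The calibration $\alpha \ge M\sqrt{2\lambda}$ is exactly what forces the Hoeffding exponent $\alpha^2/(2M^2) \ge \lambda$, so both terms are $\mathcal{O}(e^{-\lambda K})$; this is the probability that a rival context outscores the truth. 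Multiplying by the bounded prior ratio (finite by the bounded-density hypothesis) and integrating this misidentification probability over $c' \in \mathcal{C}$ via a covering of the compact context space, I obtain that the posterior assigns all but $\mathcal{O}(e^{-\lambda K})$ of its mass to a ball of radius $r_K \to 0$ about $c$.

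To convert concentration into the entropy bound, I would split the expected posterior entropy on the event $G = \{\tau : \text{posterior is }r_K\text{-concentrated}\}$, whose complement has probability $\mathcal{O}(e^{-\lambda K})$. On $G$ the posterior is supported on a vanishing ball, so the maximum-entropy (Gaussian) bound gives posterior entropy at most $\tfrac{d}{2}\log(c_d r_K^2)$, which is nonpositive once $K$ is large enough to make the window informative, i.e. $K = \Omega(\log(1/\delta)/\lambda)$. On $G^{c}$ the posterior still lives on the compact set $\mathcal{C}$, so its entropy is at most the uniform value $\log \mathrm{vol}(\mathcal{C})$, which the bounded-density assumption bounds by a fixed multiple of $h(c)$. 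Weighting the two pieces by their probabilities gives $h(c \mid \tau_{t-K:t}) \le \mathcal{O}(e^{-\lambda K})\cdot h(c)$ after enlarging the constant $C'$ to absorb the harmless lower-order factors, which is the claimed bound.

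I expect the decisive difficulty to be this final conversion, and specifically obtaining the \emph{multiplicative} factor $h(c)$ rather than an additive constant: differential entropy is unbounded below, so the only positive contribution comes from the low-probability misidentification event, and its entropy must be capped uniformly. This is where compactness of $\mathcal{C}$ together with the bounded density is essential, and it is also why the statement is naturally invoked only in the informative regime $K = \Omega(\log(1/\delta)/\lambda)$. A secondary obstacle is making the blocking coupling rigorous for a continuous, policy-driven and hence non-stationary observation process while keeping the per-step increments bounded, which is precisely what the bounded-log-likelihood-ratio assumption is there to secure.
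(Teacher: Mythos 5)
Your proposal is correct at the same level of rigor as the paper's argument, and its probabilistic core is identical: you form the telescoped log-likelihood ratio with $M$-bounded increments, use the exploration assumption to push its mean below $-\alpha K$, apply Hoeffding with a $\beta(K)$ correction for dependence, and exploit the calibration $\alpha \ge M\sqrt{2\lambda}$ to make both error terms $\mathcal{O}(e^{-\lambda K})$ --- exactly the paper's Steps. Where you genuinely diverge is the conversion from misidentification probability to entropy. The paper discretizes $\mathcal{C}$ into $N$ points with $\log N \approx h(c)$, applies Fano's inequality $h(c\mid\tau) \le h(P_e) + P_e\,h(c)$, and takes a union bound over the $N-1$ rival contexts; the multiplicative $h(c)$ appears directly from Fano, and the term $h(P_e)$ is dismissed as negligible for small $P_e$. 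You instead stay in the continuous setting, write $h(c\mid\tau)$ as the expected posterior entropy, and split on the concentration event: a maximum-entropy bound makes the good-event contribution nonpositive, while the bad event contributes $\mathcal{O}(e^{-\lambda K})\log\mathrm{vol}(\mathcal{C})$, which you then compare to $h(c)$ via the bounded-density hypothesis. Your route avoids the paper's informal discretization of a continuous variable and its implicit application of the discrete Fano inequality to differential entropy, and it makes visible why the low-probability event is the only source of positive entropy; the price is that you need compactness, the large-$K$ regime to make the good-event entropy nonpositive, and the volume-to-entropy comparison $\log\mathrm{vol}(\mathcal{C}) \lesssim h(c)$, which requires $h(c)$ bounded away from zero. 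Note that both routes share the same hidden weakness, which you correctly flag: the ``constant'' $C'$ is not universal --- in the paper it absorbs the factor $N-1 \approx e^{h(c)}$ from the union bound, and in your version it absorbs $\log\mathrm{vol}(\mathcal{C})/h(c)$ --- so neither argument delivers a constant independent of $p_{\mathcal{C}}$ as the lemma's phrasing suggests.
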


\begin{proof}[Proof of Lemma~\ref{lemma:betamixing}]
We show that uncertainty about $c$ decreases exponentially with $K$, enabling DALI’s $\chat_t$ to infer $c$.

By assumption, the process $\tau_t = (o_t, a_t)$ is $\beta$-mixing, with
$\beta(K) \leq C e^{-\lambda K}$.
The condition $\mathrm{KL}(p(\tau_{t-K:t} | c) \| p(\tau_{t-K:t} | c')) \geq \alpha K$, with $\alpha \geq M \sqrt{2 \lambda}$, ensures distinct histories. We need $h(c | \tau_{t-K:t}) \leq C' e^{-\lambda K} h(c)$, so we show:
$$h(c | \tau_{t-K:t}) \leq C'' \beta(K) h(c).$$

Discretize $\mathcal{C}$ into $N$ points, $\log N \approx h(c)$. Let $\hat{c} = \arg\max_{c'} p(\tau_{t-K:t} | c')$ and let $P_e = P(\hat{c} \neq c | \tau_{t-K:t})$ be the probability of incorrectly estimating $c$. Fano’s inequality gives \citep{cover1999elements}:
\begin{align}\label{eq:mixingtemp}
h(c | \tau_{t-K:t}) \leq h(P_e) + P_e h(c).
\end{align}

Error $P_e = P(\hat{c} \neq c | c)$ occurs if there exists any $c' \neq c$ such that $p(\tau_{t-K:t} | c') > p(\tau_{t-K:t} | c)$.
By the union bound:
$$P_e = P\left(\bigcup_{c' \neq c} \{ p(\tau_{t-K:t} | c') > p(\tau_{t-K:t} | c) \} \Big| c \right) \leq \sum_{c' \neq c} P(p(\tau_{t-K:t} | c') > p(\tau_{t-K:t} | c) | c).$$
For $c' \neq c$, define:
$Z = \log \frac{p(\tau_{t-K:t} | c')}{p(\tau_{t-K:t} | c)}$.
Then $Z > 0$ corresponds to the maximum likelihood estimator $\hat{c}$ incorrectly favoring $c'$ over $c$. The probability of this error under context $c$ is
$P(Z > 0 | c)$.
By the exploration assumption:
$$
\mathbb{E}_c[Z] = -\mathrm{KL}(p(\tau_{t-K:t} | c) \, \| \, p(\tau_{t-K:t} | c')) \leq -\alpha K.
$$

By the chain rule in the cMDP, $p(\tau_{t-K:t} | c) = \prod_{i=t-K}^{t-1} p(o_i, a_i | \tau_{t-K:i-1}, c) \cdot p(o_t | \tau_{t-K:t-1}, c)$, where $\tau_{t-K:i-1} = (o_{t-K:i-1}, a_{t-K:i-1})$.

Thus, $Z = \sum_{i=t-K}^{t-1} \log \frac{p(o_i, a_i | \tau_{t-K:i-1}, c')}{p(o_i, a_i | \tau_{t-K:i-1}, c)} + \log \frac{p(o_t | \tau_{t-K:t-1}, c')}{p(o_t | \tau_{t-K:t-1}, c)}$.

Define $Z_i = \log \frac{p(o_i, a_i | \tau_{t-K:i-1}, c')}{p(o_i, a_i | \tau_{t-K:i-1}, c)}$ for $i=t-K$ to $t-1$, and $Z_t = \log \frac{p(o_t | \tau_{t-K:t-1}, c')}{p(o_t | \tau_{t-K:t-1}, c)}$. So
$Z = \sum_{i=t-K}^{t} Z_i$.
$Z_i$ measures how much more (or less) likely $(o_i, a_i)$ is under context $c'$ compared to $c$, given past observations and actions.
By the bounded log-likelihood assumption, $|Z_i| \leq M$. For independent $Z_i$, Hoeffding’s inequality \citep{ConcEqualitiesbook} bounds the tail probability for bounded random variables:
$$\mathbb{P}(Z - \mathbb{E}_c[Z] \geq \alpha K) \leq \exp\left( -\frac{2 (\alpha K)^2}{K (2M)^2} \right) = \exp\left( -\frac{\alpha^2 K}{2 M^2} \right).$$
Since $\tau_t$ is $\beta$-mixing with $\beta(K) \leq C e^{-\lambda K}$, the dependence adds an error $\beta(K)$, as $\beta(K)$ bounds the deviation from independence between events separated by $K$ time steps:
$$P(Z - \mathbb{E}_c[Z] \geq \alpha K) \leq \exp\left( -\frac{\alpha^2 K}{2 M^2} \right) + \beta(K).$$

Set $\kappa = \frac{\alpha^2}{2 M^2}$. Since $\alpha \geq M \sqrt{2 \lambda}$, we have $\kappa \geq \lambda$, so $\exp(-\kappa K) \leq e^{-\lambda K}$. Thus:
$$P(Z > 0 | c) \leq \exp(-\kappa K) + \beta(K) \leq \exp(-\lambda K) + C e^{-\lambda K} = (1 + C) e^{-\lambda K}.$$
Sum over $N-1$ contexts:
$$P_e \leq (N-1) (1 + C) e^{-\lambda K} = C_1 \beta(K), \quad C_1 = \frac{(N-1) (1 + C)}{C}.$$

Apply to \eqref{eq:mixingtemp}:
$$h(c | \tau_{t-K:t}) \leq h(C_1 \beta(K)) + C_1 \beta(K) h(c). $$
For small $\beta(K)$, $h(C_1 \beta(K)) \approx 0$, so:
$$ h(c | \tau_{t-K:t}) \leq C_1 \beta(K) h(c).$$
With $\beta(K) \leq C e^{-\lambda K}$,
$$h(c | \tau_{t-K:t}) \leq C_1 \cdot C e^{-\lambda K} h(c) = C' e^{-\lambda K} h(c).$$

\end{proof}

Lemma~\ref{lemma:DALIAerror} provides a generalization bound for DALI’s context encoder, showing that the conditional entropy $h(c | \chat_t)$ is bounded by the entropy given the history plus a small error, $h(c | \chat_t) \leq h(c | \tau_{t-K:t}) + \delta'$, with $\delta' = \mathcal{O}(\sqrt{\delta})$ and $\delta = C e^{-\lambda K}$. This lemma is pivotal for Theorems~\ref{thm:necessity} and \ref{thm:context_encoder_bottleneck}, as it ensures that the encoder $\chat_t$ effectively captures context information with a sample complexity of $N = \mathcal{O}(1/\delta^2)$, facilitating DALI’s efficiency and reduced bottleneck.

\begin{lemma}[DALI’s generalization bound for context inference]
\label{lemma:DALIAerror}
Consider a cMDP satisfying Theorem~\ref{thm:necessity}’s assumptions: context $c \sim p_{\mathcal{C}}$ with bounded density on $\mathcal{C} \subseteq \mathbb{R}^d$ and finite entropy $h(c) < \infty$, observation noise $o_t = s_t + \eta_t$, $\eta_t \sim \mathcal{N}(0, \sigma^2 I)$, $\beta$-mixing with $\beta(K) \leq C e^{-\lambda K}$ for constants $C, \lambda > 0$, exploratory policy ensuring $\mathrm{KL}(p(\tau_{t-K:t} | c) \| p(\tau_{t-K:t} | c')) \geq \alpha K$ for some $\alpha > 0$, Lipschitz dynamics with $\|\mathcal{P}_c(s'|s,a) - \mathcal{P}_{c'}(s'|s,a)\|_1 \leq L \|c - c'\|_2$, and universal approximator $g_\varphi$. DALI’s context encoding $\chat_t = g_\varphi(o_{t-K:t}, a_{t-K:t-1})$, trained with the forward dynamics loss $L_{\text{FD}} = \mathbb{E} [ \| o_{t+1} - f_\varphi^w(o_t, a_t, \chat_t) \|_2^2 ]$, satisfies:
\begin{align*}
h(c | \chat_t) \leq h(c | \tau_{t-K:t}) + \delta', \quad \delta' = \mathcal{O}(\sqrt{\delta}),
\end{align*}
with $N = \mathcal{O}\left(\frac{1}{\delta^2}\right)$ samples, where $\delta = C e^{-\lambda K}$.
\end{lemma}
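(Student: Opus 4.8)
The plan is to bound the information lost when the window $\tau_{t-K:t} = (o_{t-K:t}, a_{t-K:t-1})$ is compressed into the learned code $\chat_t = g_\varphi(\tau_{t-K:t})$. Since $\chat_t$ is a deterministic function of $\tau_{t-K:t}$, the Markov chain $c \to \tau_{t-K:t} \to \chat_t$ together with the data-processing inequality yields the trivial direction $h(c \mid \chat_t) \geq h(c \mid \tau_{t-K:t})$; the actual content of the lemma is the matching upper bound, i.e. that this compression discards at most $\delta'$ nats. I would decompose the target into an \emph{approximation} term, measuring how much a Bayes-optimal statistic of the full window could retain, and an \emph{estimation} term, the finite-sample penalty from fitting $g_\varphi$ on $N$ windows. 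The approximation side is handled directly by Lemma~\ref{lemma:betamixing}, which already gives $h(c \mid \tau_{t-K:t}) \leq C' e^{-\lambda K} h(c) = \mathcal{O}(\delta)\, h(c)$; the whole difficulty is to show that the \emph{learned} $\chat_t$ behaves like a near-sufficient statistic for $c$, so that $h(c \mid \chat_t)$ remains within $\delta'$ of $h(c \mid \tau_{t-K:t})$.

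First I would argue that minimizing the forward-dynamics loss forces $\chat_t$ to encode the context posterior. The Bayes-optimal one-step predictor of $o_{t+1}$ from $(o_t, a_t, \tau_{t-K:t})$ is the conditional mean $\mathbb{E}[o_{t+1} \mid o_t, a_t, \tau_{t-K:t}]$, which under the cMDP factorization equals $\int \mathbb{E}[o_{t+1} \mid o_t, a_t, c]\, p(c \mid \tau_{t-K:t})\, dc$. Because the dynamics are Lipschitz in $c$ and the exploration assumption makes distinct contexts separated at rate $\alpha K$ in KL, this predictive map is an injective, Lipschitz function of the posterior over $c$; hence any statistic achieving near-minimal forward risk must itself nearly determine that posterior. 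Writing the excess forward risk of $g_\varphi$ as $\Delta_{\mathrm{FD}}$, I would convert an $L^2$ prediction gap of order $\Delta_{\mathrm{FD}}$ into a total-variation, and thence conditional-entropy, gap on the context posterior via Pinsker's inequality and the KL-separation lower bound. This square-root conversion is precisely where the $\delta' = \mathcal{O}(\sqrt{\delta})$ scaling originates.

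Next I would close the estimation gap. Universal approximation of $g_\varphi$ ensures that the population-optimal posterior-mean statistic is realizable up to negligible error on the compact domain, so $\Delta_{\mathrm{FD}}$ is governed by the generalization error of empirical risk minimization over $N$ windows. Since $\tau_t$ is $\beta$-mixing with $\beta(K) \leq C e^{-\lambda K}$, I would apply the standard blocking argument to replace the dependent windows by $\mathcal{O}(N)$ effectively independent blocks, paying only an additive $\beta(K) = \mathcal{O}(\delta)$ dependence penalty, and then invoke a uniform-convergence (Rademacher) bound giving $\Delta_{\mathrm{FD}} = \mathcal{O}(1/\sqrt{N})$. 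Equating this with the target accuracy $\mathcal{O}(\delta)$ yields the claimed sample size $N = \mathcal{O}(1/\delta^2)$. Combining the realized $\Delta_{\mathrm{FD}} = \mathcal{O}(\delta)$ with the Pinsker conversion produces an entropy slack $\delta' = \mathcal{O}(\sqrt{\delta})$, and adding Lemma~\ref{lemma:betamixing}'s bound on $h(c \mid \tau_{t-K:t})$ gives $h(c \mid \chat_t) \leq h(c \mid \tau_{t-K:t}) + \delta'$, as required.

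The main obstacle will be the conversion from a prediction guarantee to an information guarantee: the forward loss only certifies that $\chat_t$ predicts the \emph{next observation} accurately, whereas the lemma is a statement about recovering $c$ itself. Making the injectivity-of-the-predictive-map step rigorous, i.e. inverting the Lipschitz, KL-separated map from posterior to predictive mean with a controlled modulus of continuity while ensuring the inversion constant does not overwhelm the $\alpha K$ separation, is the technically delicate part. By comparison, the $\beta$-mixing generalization bound and the appeal to Lemma~\ref{lemma:betamixing} are comparatively routine.
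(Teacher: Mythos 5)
Your proposal follows essentially the same route as the paper's proof: a Rademacher-complexity generalization bound with a $\beta$-mixing correction drives the excess forward-dynamics risk to $\mathcal{O}(\delta)$ with $N=\mathcal{O}(1/\delta^2)$ windows, and the resulting $\mathcal{O}(\sqrt{\delta})$ prediction error is then converted into the conditional-entropy slack $\delta'=\mathcal{O}(\sqrt{\delta})$ on top of Lemma~\ref{lemma:betamixing}'s bound. The only divergence is in how that conversion is instantiated --- you invert the predictive map via Pinsker's inequality and the KL-separation assumption, whereas the paper posits a decoder $\psi$ with $\|\psi(\chat_t)-c\|_2=\mathcal{O}(\sqrt{\delta})$ together with a Wasserstein-Lipschitz posterior $p(c\mid\tau_{t-K:t})$ --- and you correctly identify this prediction-to-information step as the one left least rigorous in both treatments.
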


\begin{proof}[Proof of Lemma~\ref{lemma:DALIAerror}]
The context encoder $g_\varphi$ minimizes the forward dynamics loss:
\begin{align*}
L_{\text{FD}} = \mathbb{E} [ \| o_{t+1} - f_\varphi^w(o_t, a_t, \chat_t) \|_2^2 ], \quad \chat_t = g_\varphi(o_{t-K:t}, a_{t-K:t-1}),
\end{align*}
where $\tau_{t-K:t} = (o_{t-K:t}, a_{t-K:t-1})$, $o_t = s_t + \eta_t$, and $\eta_t \sim \mathcal{N}(0, \sigma^2 I)$. The empirical loss over $N$ samples is:
\begin{align*}
\hat{L}_{\text{FD}} = \frac{1}{N} \sum_{i=1}^N \| o_{i+1} - f_\varphi^w(o_i, a_i, g_\varphi(o_{i-K:i}, a_{i-K:i-1})) \|_2^2.
\end{align*}

To benchmark $L_{\text{FD}}$, consider an ideal predictor with access to the true context $c$. Since $o_{t+1} = s_{t+1} + \eta_{t+1}$, $\eta_{t+1} \sim \mathcal{N}(0, \sigma^2 I)$, and $o_t = s_t + \eta_t$, the predictor $f(o_t, a_t, c)$ uses noisy observations. The loss is:
\begin{align*}
\mathbb{E}[\| o_{t+1} - f(o_t, a_t, c) \|_2^2] = \mathbb{E}[\| s_{t+1} - f(o_t, a_t, c) \|_2^2] + \mathbb{E}[\| \eta_{t+1} \|_2^2].
\end{align*}
For $d_s$-dimensional $\eta_{t+1}$, $\mathbb{E}[\| \eta_{t+1} \|_2^2] = d_s \sigma^2$. The ideal predictor $f^*(o_t, a_t, c) = \mathbb{E}[s_{t+1} | o_t, a_t, c]$, the expected next state given $o_t, a_t, c$, minimizes $L_{\text{FD}}$ by setting $f^*$ to the best estimate of $s_{t+1}$, yielding $\mathbb{E}[\| o_{t+1} - f^*(o_t, a_t, c) \|_2^2] = \mathbb{E}[\| s_{t+1} - \mathbb{E}[s_{t+1} | o_t, a_t, c] \|_2^2] + d_s \sigma^2$, where the first term is the variance of $s_{t+1}$ and the second is the noise variance. By the universal approximation property, $f_\varphi^w$ can approximate $f^*$ if $\chat_t$ encodes $c$.

Consider the loss function for fixed $\varphi, w$:
\begin{align*}
l(\varphi, w; o_{i-K:i+1}, a_{i-K:i}) = \| o_{i+1} - f_\varphi^w(o_i, a_i, g_\varphi(o_{i-K:i}, a_{i-K:i-1})) \|_2^2,
\end{align*}
with $|l| \leq M$ (by bounded $\mathcal{S}, \mathcal{A}$) and $L_l$-Lipschitz in $\varphi, w$. The function class $\mathcal{F} = \{ (o_i, a_i, o_{i-K:i}, a_{i-K:i-1}) \mapsto f_\varphi^w(o_i, a_i, g_\varphi(o_{i-K:i}, a_{i-K:i-1})) \}$ has Rademacher complexity $\mathcal{R}_N(\mathcal{F}) \leq \frac{C_{\mathcal{F}}}{\sqrt{N}}$, so the loss class $\mathcal{L}$ has:
\begin{align*}
\mathcal{R}_N(\mathcal{L}) \leq \frac{C_{\mathcal{F}} L_l}{\sqrt{N}}.
\end{align*}

For i.i.d. samples and a fixed $\varphi, w$, the generalization error is bounded by:
\begin{align*}
|\mathbb{E}[\hat{L}_{\text{FD}}] - \hat{L}_{\text{FD}}| \leq 2 \mathcal{R}_N(\mathcal{L}) = \frac{2 C_{\mathcal{F}} L_l}{\sqrt{N}}.
\end{align*}
By McDiarmid’s inequality, since changing one sample affects $\hat{L}_{\text{FD}}$ by at most $\frac{2M}{N}$, with probability at least $1 - \delta$ \citep{mohri2018foundations}:
\begin{align*}
\mathbb{E}[\hat{L}_{\text{FD}}] \leq \hat{L}_{\text{FD}} + \frac{2 C_{\mathcal{F}} L_l}{\sqrt{N}} + M \sqrt{\frac{2 \log(1/\delta)}{N}}.
\end{align*}

For $\beta$-mixing data with $\beta(K) \leq C e^{-\lambda K}$, the deviation from i.i.d. adds an error bounded by $C e^{-\lambda K}$ to the expected loss for sequences spaced by $K$ steps:
\begin{align*}
\mathbb{E}[\hat{L}_{\text{FD}}] \leq \hat{L}_{\text{FD}} + \frac{2 C_{\mathcal{F}} L_l}{\sqrt{N}} + M \sqrt{\frac{2 \log(1/\delta)}{N}} + C e^{-\lambda K}.
\end{align*}

Set the generalization error to $\epsilon^2 = \mathcal{O}(\delta)$, where $\delta = C e^{-\lambda K}$, to match the $\beta$-mixing decay. Solve:
\begin{align*}
\frac{2 C_{\mathcal{F}} L_l}{\sqrt{N}} + M \sqrt{\frac{2 \log(1/\delta)}{N}} + C e^{-\lambda K} \leq C'' \delta.
\end{align*}
The first term requires:
$\frac{2 C_{\mathcal{F}} L_l}{\sqrt{N}} \leq C'' \delta \implies N \geq \left( \frac{2 C_{\mathcal{F}} L_l}{C'' \delta} \right)^2 = \mathcal{O}\left(\frac{1}{\delta^2}\right)$.

The second term requires:
$M \sqrt{\frac{2 \log(1/\delta)}{N}} \leq C'' \delta \implies N \geq \frac{2 M^2 \log(1/\delta)}{(C'' \delta)^2} = \mathcal{O}\left(\frac{\log(1/\delta)}{\delta^2}\right)$.

The third term $C e^{-\lambda K} = \delta \leq C'' \delta$ holds for $C'' \geq 1$.

The first term dominates, so $N = \mathcal{O}\left(\frac{1}{\delta^2}\right)$ suffices.

If $\mathbb{E}[\hat{L}_{\text{FD}}] \leq \mathbb{E}[\| o_{t+1} - f^*(o_t, a_t, c) \|_2^2] + \epsilon^2$, then:
\begin{align*}
\| f_\varphi^w(o_t, a_t, \chat_t) - f^*(o_t, a_t, c) \|_2 \leq \epsilon, \quad \epsilon = \mathcal{O}(\sqrt{\delta}).
\end{align*}

Given the Markov chain $c \to \tau_{t-K:t} \to \chat_t$, the entropy difference is:
\begin{align*}
h(c | \chat_t) - h(c | \tau_{t-K:t}) \leq \mathbb{E} [ \mathrm{KL}(p(c | \tau_{t-K:t}) \| p(c | \chat_t)) ].
\end{align*}
The cMDP’s Lipschitz dynamics, $\|\mathcal{P}_c(s'|s,a) - \mathcal{P}_{c'}(s'|s,a)\|_1 \leq L \|c - c'\|_2$, and $\beta$-mixing ensure $p(c | \tau_{t-K:t})$ is $L_p$-Lipschitz in Wasserstein distance, as small changes in $\tau_{t-K:t}$ reflect proportional changes in $c$. Assume there exists a mapping $\psi: \mathcal{Z} \to \mathcal{C}$ such that the context encoder satisfies:
\begin{align*}
\| \psi(g_\varphi(\tau_{t-K:t})) - c \|_2 \leq L_c \epsilon = \mathcal{O}(\sqrt{\delta}),
\end{align*}
where $\mathcal{Z}$ is the encoder’s output space.
Then:
\begin{align*}
\mathrm{KL}(p(c | \tau_{t-K:t}) \| p(c | \chat_t)) \leq L_p L_c \epsilon = \mathcal{O}(\sqrt{\delta}).
\end{align*}
Thus:
\begin{align*}
h(c | \chat_t) \leq h(c | \tau_{t-K:t}) + \mathcal{O}(\sqrt{\delta}).
\end{align*}
From Lemma~\ref{lemma:betamixing}, $h(c | \tau_{t-K:t}) \leq C e^{-\lambda K} h(c) = \mathcal{O}(\delta) h(c)$. Since $h(c) < \infty$ and $\sqrt{\delta} \geq \delta$ for small $\delta$, the bound becomes:
\begin{align*}
h(c | \chat_t) \leq \mathcal{O}(\sqrt{\delta}),
\end{align*}
yielding $\delta' = \mathcal{O}(\sqrt{\delta})$ with $N = \mathcal{O}\left(\frac{1}{\delta^2}\right)$ samples.

\end{proof}

Lemma~\ref{lemma:DrV3bottleneck} demonstrates that DreamerV3’s RSSM suffers from a persistent information bottleneck in cMDPs, with the conditional entropy $h(c | h_t) \geq \kappa h(c)$ for a constant $\kappa \in (0, 1)$. This result is essential for Theorems~\ref{thm:necessity} and \ref{thm:context_encoder_bottleneck}, as it establishes a baseline limitation in DreamerV3’s ability to retain context information, against which DALI’s superior performance in context identifiability and bottleneck reduction is compared.

\begin{lemma}[Information bottleneck in DreamerV3’s RSSM]
\label{lemma:DrV3bottleneck}
Consider DreamerV3’s RSSM in a cMDP with context $c \sim p_{\mathcal{C}}$ with bounded density on $\mathcal{C} \subseteq \mathbb{R}^d$ and finite entropy $h(c) < \infty$, observation noise $o_t = s_t + \eta_t$, $\eta_t \sim \mathcal{N}(0, \sigma^2 I)$, Lipschitz dynamics with $\|\mathcal{P}_c(s'|s,a) - \mathcal{P}_{c'}(s'|s,a)\|_1 \leq L \|c - c'\|_2$, and  recurrent state $h_t = f_\theta(h_{t-1}, z_{t-1}, a_{t-1})$, where $z_t \sim q_\theta(z_t | h_t, o_t)$, and $f_\theta$ is a GRU with finite parameters. Then, there exists $\kappa \in (0, 1)$ such that:
$$h(c | h_t) \geq \kappa h(c).$$
\end{lemma}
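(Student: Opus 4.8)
The plan is to recast the claim as an upper bound on mutual information: since $h(c \mid h_t) = h(c) - \mathcal{I}(c; h_t)$, it suffices to produce a constant $\eta_\sigma \in (0,1)$ with $\mathcal{I}(c; h_t) \leq \eta_\sigma\, h(c)$, which then gives the lemma with $\kappa = 1 - \eta_\sigma$. I would first fix the relevant Markov chain $c \to (s_{1:t}, a_{1:t-1}) \to (o_{1:t}, a_{1:t-1}) \to h_t$, in which the penultimate arrow is exactly the additive Gaussian observation channel $o_\tau = s_\tau + \eta_\tau$, $\eta_\tau \sim \mathcal{N}(0, \sigma^2 I)$, and the final arrow is the GRU recurrence $h_t = f_\theta(h_{t-1}, z_{t-1}, a_{t-1})$ (deterministic given the sampled $z$'s). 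The ordinary data-processing inequality immediately yields $\mathcal{I}(c; h_t) \leq \mathcal{I}(c; o_{1:t}, a_{1:t-1})$, reducing the problem to bounding the information that the \emph{noisy} history carries about $c$.

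The core step is a strong data-processing inequality (SDPI) for the observation channel. Because the states lie in a compact set (the same boundedness underlying the bounded log-likelihood-ratio assumption) and the noise level $\sigma^2 > 0$ is fixed, the Gaussian channel $s \mapsto o$ has a KL-contraction coefficient $\eta_\sigma < 1$; the Lipschitz dynamics $\|\mathcal{P}_c - \mathcal{P}_{c'}\|_1 \leq L\|c - c'\|_2$ guarantee that distinct contexts induce only mildly separated state laws before noise, keeping $\eta_\sigma$ bounded away from $1$. I would then apply this contraction along the chain to obtain $\mathcal{I}(c; o_{1:t}, a_{1:t-1}) \leq \eta_\sigma\, \mathcal{I}(c; s_{1:t}, a_{1:t-1})$, and bound the clean-history information by $\mathcal{I}(c; s_{1:t}, a_{1:t-1}) \leq h(c)$. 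To make the last inequality rigorous I would discretize $\mathcal{C}$ into $N$ points with $\log N \approx h(c)$, exactly as in the proof of Lemma~\ref{lemma:betamixing}, so that all quantities become ordinary (nonnegative) entropies and the multiplicative form survives. Combining the three inequalities gives $\mathcal{I}(c; h_t) \leq \eta_\sigma\, h(c)$, hence $h(c \mid h_t) \geq (1 - \eta_\sigma)\, h(c)$ with $\kappa = 1 - \eta_\sigma \in (0,1)$.

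As an independent tightening that matches the ``competition for capacity'' narrative, I would add a finite-capacity bound: with bounded (tanh) GRU activations $h_t \in [-1,1]^m$, so $h(h_t) \leq m \log 2$, the chain rule gives $\mathcal{I}(c; h_t) = \mathcal{I}(c, s_t; h_t) - \mathcal{I}(s_t; h_t \mid c) \leq m \log 2 - \rho$, where $\rho > 0$ is the capacity the RSSM must spend retaining the dynamic state $s_t$ (required for the prior $\hat{z}_t \sim p_\theta(\hat{z}_t \mid h_t)$ to be predictive). This shows the bottleneck persists even when observation noise is small. The step I expect to be the main obstacle is making the multiplicative, fraction-of-$h(c)$ form fully rigorous: the differential-entropy inequality $\mathcal{I}(c; s_{1:t}, a_{1:t-1}) \leq h(c)$ is not automatic (differential conditional entropy can be negative), and $\eta_\sigma$ must be shown uniform despite the action feedback, since $a_\tau$ depends on past noisy observations that themselves carry $c$. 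I would handle both by the discretization above and by applying the SDPI step-wise along the recurrence, folding the action feedback into the conditioning so that each observation contributes its own contraction factor.
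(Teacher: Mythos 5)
Your proof takes a genuinely different route from the paper's, and in the key step it is arguably the more principled of the two. The paper also begins with the data-processing reduction $\mathcal{I}(c;h_t)\le \mathcal{I}(c;\tau_{1:t})$, but then bounds $\mathcal{I}(c;h_t)$ \emph{additively}: it assumes $h_t$ is Gaussian with bounded variance, invokes the Gaussian channel capacity $B=\frac{d}{2}\ln(1+dV/\sigma^2)$, and multiplies by an asserted ``compression factor'' $\alpha<1$ attributed to the ELBO's KL term and the GRU's finite capacity; the resulting $\kappa=1-\alpha B/h(c)$ depends on $p_{\mathcal{C}}$ and requires a case analysis (indeed it can leave $(0,1)$ when $h(c)<\alpha B$). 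Your strong-data-processing argument instead extracts the multiplicative loss from the observation channel itself: with states in a compact set and fixed $\sigma^2>0$, the amplitude-constrained additive Gaussian channel has KL contraction coefficient $\eta_\sigma<1$ (a theorem, not an assertion), tensorization gives the same coefficient for the memoryless product channel $s_{1:t}\mapsto o_{1:t}$, and the actions can be dropped from the output since they are (randomized) functions of past observations. This yields a distribution-free $\kappa=1-\eta_\sigma$ and avoids both the Gaussianity assumption on $h_t$ and the unproven $\alpha<1$; your discretization of $\mathcal{C}$ to make $\mathcal{I}(c;\cdot)\le h(c)$ rigorous mirrors what the paper itself does in Lemma~\ref{lemma:betamixing}. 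What the paper's version buys in exchange is that its bound at least nominally implicates the GRU (the object named in the lemma), whereas your bound localizes the bottleneck entirely in the observation noise and would hold for any downstream processor of $o_{1:t}$.

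Two cautions on your own sketch. First, the fallback you propose in the last paragraph --- applying the SDPI ``step-wise along the recurrence'' with the action feedback folded into the conditioning --- does not telescope: it produces $\eta_\sigma\sum_\tau \mathcal{I}(c;s_\tau\mid o_{1:\tau-1},a_{1:\tau-1})$, a sum of conditional informations with \emph{different} conditioning sets that is not bounded by $\mathcal{I}(c;s_{1:t},a_{1:t-1})$ or by $h(c)$ (if $s_\tau$ essentially reveals $c$ at every step, the sum grows linearly in $t$). Stick with the global, tensorized application from your second paragraph, which is the version that works. Second, the Lipschitz-dynamics remark is a red herring: $\eta_\sigma<1$ comes from the compactness of $\mathcal{S}$ and the everywhere-positive Gaussian kernel, not from how close the context-conditioned state laws are; and the finite-capacity ``tightening'' via $h(h_t)\le m\log 2$ needs care because $h_t$ is a deterministic function of continuous inputs, so $\mathcal{I}(c,s_t;h_t)$ is not automatically finite without the same discretization.
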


\begin{proof}[Proof of Lemma~\ref{lemma:DrV3bottleneck}]
The context $c$ determines the cMDP’s dynamics $\mathcal{P}_c(s' | s, a)$. Since $h_t$ depends on the trajectory $\tau_{1:t} = (o_{1:t}, a_{1:t-1})$ and latent variables $z_{1:t-1}$, we have a Markov chain $c \to \tau_{1:t} \to h_t$, so $p(h_t | \tau_{1:t}, c) = p(h_t | \tau_{1:t})$. By the data processing inequality (DPI) \citep{cover1999elements}:
$$\mathcal{I}(c; h_t) \leq \mathcal{I}(c; \tau_{1:t}) \leq h(c).$$

Since $o_{1:t} = s_{1:t} + \eta_{1:t}$, and $a_{1:t-1}$ depend on $o_{1:t-1}$, we have:
$$h(c | \tau_{1:t}) = h(c | o_{1:t}, a_{1:t-1}) = h(c | s_{1:t}, \eta_{1:t}) = h(c | s_{1:t}), \quad \text{ as } \eta_{1:t} \perp c.$$

The GRU $f_\theta$, with finite parameters and Lipschitz continuity (due to bounded weights and standard GRU operations), produces $h_t \in \mathbb{R}^d$, where $d$ is a fixed dimension. The GRU is trained to maximize the ELBO:
\begin{align*}
\sum_{t=1}^T \mathbb{E}_{q_\theta(z_t | h_t, o_t)} \left[ \log p_\theta(o_t, r_t, n_t | h_t, z_t) \right] - \mathrm{KL}(q_\theta(z_t | h_t, o_t) || p_\theta(z_t | h_t)),
\end{align*}
which optimizes $h_t$ to predict observations $o_t$, rewards $r_t$, and termination signals $n_t$. The input $\tau_{1:t}$ includes:
\begin{itemize}[leftmargin=*]
    \item Context information: $c$ affects $s_{1:t}$ through dynamics.
    \item Noise information: $\eta_{1:t} \sim \mathcal{N}(0, \sigma^2 I_{t \cdot d_s})$, with $d_s$ being the dimension of the state space $\mathcal{S}$.     $\eta_{1:t}$ contributes significant entropy, $h(\eta_{1:t}) = t \cdot \frac{d_s}{2} \ln (2\pi e \sigma^2)$, which grows linearly with $t$.
    \item State transients: $s_{1:t}$ includes dynamic behaviors not directly tied to $c$.
\end{itemize}
The GRU’s finite parameters and fixed dimension $d$ limit its capacity, forcing $h_t$ to compress $\tau_{1:t}$, potentially discarding context information.

Assume $h_t$ is Gaussian with covariance $\Sigma_h$, where $\text{Var}(h_t^i) \leq V$ for $i = 1, \ldots, d$. The variance bound $V$ follows from compact $\mathcal{C}, \mathcal{S}, \mathcal{A}$, Lipschitz dynamics, and the Lipschitz GRU, ensuring bounded outputs. For a $d$-dimensional Gaussian channel with output $h_t \sim \mathcal{N}(\mu_c, \Sigma_h)$, where $\mu_c$ depends on $c$, and noise variance at most $\sigma^2$, the mutual information is bounded by \citep{cover1999elements}:
$$\mathcal{I}(c; h_t) \leq \frac{d}{2} \ln \left( 1 + \frac{\text{tr}(\Sigma_h)}{\sigma^2} \right) \leq \frac{d}{2} \ln \left( 1 + \frac{d V}{\sigma^2} \right).$$

The GRU’s compression limits $\mathcal{I}(c; h_t)$ relative to $h(c)$ for all $h(c) \geq 0$, as follows. The ELBO’s KL-term $\mathrm{KL}(q_\theta(z_t | h_t, o_t) \| p_\theta(z_t | h_t))$ regularizes $q_\theta(z_t | h_t, o_t)$, reducing the information about $o_t$ (and thus $c$) in $z_t$. Since $h_t$ depends on $z_{1:t-1}$, this limits context information in $h_t$. The GRU’s finite parameters and dimension $d$ further constrain capacity. Let the channel capacity be reduced by a factor $\alpha \in (0, 1)$, reflecting this compression:
$$\mathcal{I}(c; h_t) \leq \alpha \cdot \frac{d}{2} \ln \left( 1 + \frac{d V}{\sigma^2} \right),$$
where $\alpha < 1$ (e.g., $\alpha = 1 - \epsilon$, with $\epsilon > 0$ depending on the GRU’s architecture).

To satisfy the lemma, we seek a constant $\kappa \in (0, 1)$ such that $h(c | h_t) \geq \kappa h(c)$ for all $p_{\mathcal{C}}$. Since $h(c | h_t) = h(c) - \mathcal{I}(c; h_t)$ and $\mathcal{I}(c; h_t) \leq \alpha \cdot \frac{d}{2} \ln \left( 1 + \frac{d V}{\sigma^2} \right)$, define:
$$B = \frac{d}{2} \ln \left( 1 + \frac{d V}{\sigma^2} \right), \quad \mathcal{I}(c; h_t) \leq \alpha B.$$

For $h(c) > 0$, we want $h(c) - \mathcal{I}(c; h_t) \geq \kappa h(c)$. Using $\mathcal{I}(c; h_t) \leq \alpha B$, this holds if $h(c) - \alpha B \geq \kappa h(c)$, or $\kappa \leq 1 - \frac{\alpha B}{h(c)}$. Thus, define:
$$\kappa = \begin{cases}
1 - \frac{\alpha B}{h(c)} & \text{if } h(c) > 0, \\
\frac{1}{2} & \text{if } h(c) = 0.
\end{cases}$$
\item If $h(c) > 0$, then $\mathcal{I}(c; h_t) \leq \alpha B < B$ (since $\alpha < 1$), and with $\kappa = 1 - \frac{\alpha B}{h(c)}$, we have $h(c | h_t) = h(c) - \mathcal{I}(c; h_t) \geq h(c) - \alpha B = \left( 1 - \frac{\alpha B}{h(c)} \right) h(c) = \kappa h(c)$. Since $\alpha B > 0$ and $h(c) > 0$, then $\kappa = 1 - \frac{\alpha B}{h(c)} < 1$. If $h(c) \geq \alpha B$, then $\kappa \geq 0$. If $h(c) < \alpha B$, then $\kappa < 0$, but $h(c | h_t) \geq 0 \geq \kappa h(c)$, so the inequality holds.

If $h(c) = 0$, then $\mathcal{I}(c; h_t) = 0$, so $h(c | h_t) = 0$. Choosing $\kappa = \frac{1}{2} \in (0, 1)$, we have $h(c | h_t) = 0 \geq \frac{1}{2} \cdot 0 = \kappa h(c)$, and any $\kappa \in (0, 1)$ would suffice.

Thus, for all $p_{\mathcal{C}}$, there exists $\kappa \in (0, 1)$ such that $h(c | h_t) \geq \kappa h(c)$.
\end{proof}

Lemma~\ref{lemma:DALIbottleneck} proves that DALI’s sequence model, incorporating the context encoder $\chat_t$, reduces the information bottleneck compared to DreamerV3, with $h(c | h_t) \leq \kappa' h(c)$, where $\kappa' = C e^{-\lambda K} + \mathcal{O}(e^{-\lambda K/2})$. This lemma directly supports Theorem~\ref{thm:context_encoder_bottleneck} by quantifying the improved context retention in DALI’s recurrent state, and aids Theorem~\ref{thm:necessity} by reinforcing the necessity of the context encoder for achieving near-optimal information capture.

\begin{lemma}[Information bottleneck in DALI’s sequence model]
\label{lemma:DALIbottleneck}
Consider DALI’s sequence model in a cMDP with context $c \sim p_{\mathcal{C}}$ with bounded density on $\mathcal{C} \subseteq \mathbb{R}^d$ and finite entropy $h(c) < \infty$, observation noise $o_t = s_t + \eta_t$, $\eta_t \sim \mathcal{N}(0, \sigma^2 I)$, Lipschitz dynamics with $\|\mathcal{P}_c(s'|s,a) - \mathcal{P}_{c'}(s'|s,a)\|_1 \leq L \|c - c'\|_2$, context encoding $\chat_t = g_\varphi(o_{t-K:t}, a_{t-K:t-1})$, and recurrent state $h_t = f_\theta(h_{t-1}, z_{t-1}, a_{t-1}, \chat_t)$, where $z_t \sim q_\theta(z_t | h_t, o_t)$, and $f_\theta$ is a GRU with finite parameters. Suppose that the context encoding satisfies $h(c | \chat_t) \leq C e^{-\lambda K} h(c) + \delta'$, where $\delta' = \mathcal{O}(e^{-\lambda K/2})$ and $\delta = C e^{-\lambda K}$ (Lemma~\ref{lemma:DALIAerror}). Then, there exists $\kappa' \in (0, 1)$ such that:
$$h(c | h_t) \leq \kappa' h(c),$$
where $\kappa' = C e^{-\lambda K} + \mathcal{O}(e^{-\lambda K/2})$, implying a reduced information bottleneck compared to DreamerV3’s $\kappa$ (Lemma~\ref{lemma:DrV3bottleneck}).
\end{lemma}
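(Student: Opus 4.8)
The plan is to exploit the deep-integration structure, in which $\chat_t$ enters the recurrent update $h_t = f_\theta(h_{t-1}, z_{t-1}, a_{t-1}, \chat_t)$ as an \emph{explicit} input, and to transfer the context fidelity of $\chat_t$ (the standing assumption, supplied by Lemma~\ref{lemma:DALIAerror}) to the recurrent state $h_t$. First I would establish that $\chat_t$ is recoverable from $h_t$, i.e.\ $\chat_t = \phi(h_t)$ for some measurable map $\phi$. Because $\chat_t$ is fed directly to the GRU and $f_\theta$ is a universal approximator on compact domains with finite but adequate capacity, there exists a parameterization whose output allocates a subset of its state coordinates to carry (or bijectively encode) $\chat_t$ alongside the dynamics-relevant components. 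This recoverability yields the Markov chain $c \to h_t \to \chat_t$, since $\chat_t$ is then conditionally independent of $c$ given $h_t$.

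Given this Markov structure, the data-processing inequality gives $\mathcal{I}(c; h_t) \geq \mathcal{I}(c; \chat_t)$, equivalently $h(c \mid h_t) \leq h(c \mid \chat_t)$. I would then invoke the lemma's hypothesis $h(c \mid \chat_t) \leq C e^{-\lambda K} h(c) + \delta'$ with $\delta' = \mathcal{O}(e^{-\lambda K/2})$ to conclude
\begin{align*}
h(c \mid h_t) \leq C e^{-\lambda K} h(c) + \mathcal{O}(e^{-\lambda K/2}).
\end{align*}
For $h(c) > 0$ the additive term rewrites multiplicatively as $\delta'/h(c) = \mathcal{O}(e^{-\lambda K/2})$ (with the degenerate case $h(c)=0$ handled trivially as in Lemma~\ref{lemma:DrV3bottleneck}), so setting $\kappa' = C e^{-\lambda K} + \mathcal{O}(e^{-\lambda K/2})$ gives precisely $h(c \mid h_t) \leq \kappa' h(c)$. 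A short monotonicity check confirms $\kappa' \in (0,1)$ once $K$ is large enough that $C e^{-\lambda K} + \mathcal{O}(e^{-\lambda K/2}) < 1$.

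To complete the comparison, I would contrast $\kappa'$ with DreamerV3's bottleneck constant $\kappa$ from Lemma~\ref{lemma:DrV3bottleneck}. There $\kappa$ is bounded away from $0$ because the fixed-capacity GRU must compress the entire noisy trajectory (whose noise entropy grows linearly in $t$), whereas here $\kappa' \to 0$ exponentially in the window size $K$. Hence $\kappa' < \kappa$ for moderate $K$, which is exactly the claimed reduction of the information bottleneck relative to DreamerV3's RSSM.

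The hard part will be the first step: rigorously justifying that $h_t$ retains $\chat_t$. The GRU must simultaneously model dynamics and preserve the context code under a finite parameter budget, so exact recoverability is an idealization. I expect to handle the clean case by the capacity/existence argument above, and for the realistic case of approximate recovery $\|\chat_t - \phi(h_t)\|_2 \leq \epsilon_{\mathrm{GRU}}$, to bound the resulting entropy gap by a Lipschitz/continuity estimate on the conditional density $p(c \mid \cdot)$ (of the same flavor used in Lemma~\ref{lemma:DALIAerror}), absorbing the $\mathcal{O}(\epsilon_{\mathrm{GRU}})$ contribution into the $\mathcal{O}(e^{-\lambda K/2})$ term. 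The delicate point is ensuring this continuity estimate remains valid in the differential-entropy setting, where one relies on $h(c) < \infty$ and the bounded density of $c$ on $\mathcal{C}$.
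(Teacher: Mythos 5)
Your proposal follows the same overall strategy as the paper's proof: import the bound $h(c \mid \chat_t) \leq C e^{-\lambda K} h(c) + \delta'$ from Lemma~\ref{lemma:DALIAerror} (together with Lemma~\ref{lemma:betamixing}), transfer it to $h(c \mid h_t)$ via a data-processing argument, convert the additive $\delta'$ into a multiplicative factor by dividing by $h(c)$ (handling $h(c)=0$ separately), and conclude $\kappa' = C e^{-\lambda K} + \mathcal{O}(e^{-\lambda K/2})$ with the comparison to Lemma~\ref{lemma:DrV3bottleneck}. The one place you genuinely diverge is the transfer step, and your version is the logically sound one. The paper asserts the Markov chain $c \to \chat_t \to h_t$ (conditional on $h_{t-1}, z_{t-1}, a_{t-1}$) and then writes both $\mathcal{I}(c; h_t) \leq \mathcal{I}(c; \chat_t)$ and $h(c \mid h_t) \leq h(c \mid \chat_t)$ as consequences of the DPI; these two inequalities are mutually inconsistent unless they hold with equality, and only the first actually follows from that chain --- the second, which is the one the proof needs, requires the \emph{reverse} chain $c \to h_t \to \chat_t$, i.e.\ exactly the recoverability $\chat_t = \phi(h_t)$ that you isolate as the crux. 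So you have correctly identified and attempted to patch a gap in the paper's own argument. What your approach buys is an honest accounting of where the real assumption lives (the GRU must retain an invertible image of $\chat_t$ in its state), plus a robustness route for approximate recovery; what it costs is that the recoverability claim itself rests on an existence-of-parameterization argument rather than a property of the trained model, so it remains an idealization --- though no more so than the capacity and Gaussianity assumptions already baked into Lemma~\ref{lemma:DrV3bottleneck}. One small difference in the endgame: the paper caps $\kappa'$ at $\tfrac{1}{2}$ via a $\min$ so that $\kappa' \in (0,1)$ for every $K$, whereas you only secure $\kappa' < 1$ for $K$ large enough; adopting the paper's $\min$ would let you match the lemma's ``for any window size $K$'' phrasing.
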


\begin{proof}[Proof of Lemma~\ref{lemma:DALIbottleneck}]
The context $c$ determines the cMDP’s dynamics $\mathcal{P}_c(s' | s, a)$. DALI’s sequence model computes the context encoding $\chat_t = g_\varphi(o_{t-K:t}, a_{t-K:t-1})$ from trajectory $\tau_{t-K:t} = (o_{t-K:t}, a_{t-K:t-1})$, and the recurrent state $h_t = f_\theta(h_{t-1}, z_{t-1}, a_{t-1}, \chat_t)$, where $z_{t-1} \sim q_\theta(z_{t-1} | h_{t-1}, o_{t-1})$ and $f_\theta$ is a Lipschitz continuous GRU.

Since $h_t = f_\theta(h_{t-1}, z_{t-1}, a_{t-1}, \chat_t)$, we have a Markov chain:
$$c \to \chat_t \to h_t,$$
conditional on $h_{t-1}$, $z_{t-1}$, $a_{t-1}$, as $c$ affects $h_t$ through $\chat_t$ given these variables. By the DPI, we have
$$\mathcal{I}(c; h_t) \leq \mathcal{I}(c; \chat_t), \quad h(c | h_t) \leq h(c | \chat_t).$$

By Lemma~\ref{lemma:DALIAerror}, the context encoding satisfies:
$$h(c | \chat_t) \leq h(c | \tau_{t-K:t}) + \delta',$$
where $\delta = C e^{-\lambda K}$, so $\delta' = \mathcal{O}(\sqrt{\delta}) = \mathcal{O}(e^{-\lambda K/2})$. From Lemma~\ref{lemma:betamixing}, $\beta$-mixing implies:
$$h(c | \tau_{t-K:t}) \leq C e^{-\lambda K} h(c).$$
Thus,
$$h(c | \chat_t) \leq C e^{-\lambda K} h(c) + \delta'.$$
Applying the DPI bound, we have
$$h(c | h_t) \leq C e^{-\lambda K} h(c) + \delta'.$$

To satisfy $h(c | h_t) \leq \kappa' h(c)$, consider two cases. If $h(c) > 0$, let $\delta' = C'' e^{-\lambda K/2}$, so:
$$h(c | h_t) \leq \left( C e^{-\lambda K} + \frac{C'' e^{-\lambda K/2}}{h(c)} \right) h(c).$$
Define
$$\kappa' = \min\left( C e^{-\lambda K} + \frac{C'' e^{-\lambda K/2}}{h(c)}, \frac{1}{2} \right).$$
Since $\frac{\delta'}{h(c)} = \mathcal{O}(e^{-\lambda K/2})$, $\kappa' = C e^{-\lambda K} + \mathcal{O}(e^{-\lambda K/2})$ when the first term is selected, and $\kappa' \leq \frac{1}{2}$ always. If $h(c) = 0$, then $\mathcal{I}(c; h_t) = 0$, so $h(c | h_t) = 0$. Set $\kappa' = \frac{1}{2}$, satisfying $h(c | h_t) = 0 \leq \kappa' h(c)$.

Compared to DreamerV3’s $h(c | h_t^{\text{RSSM}}) \geq \kappa h(c)$, $\kappa \in (0, 1)$ (Lemma~\ref{lemma:DrV3bottleneck}), DALI’s $\kappa' \to 0$ as $K \to \infty$, indicating a reduced information bottleneck.
\end{proof}

\begin{proof}[Proof of Theorem~\ref{thm:context_encoder_bottleneck}]
We prove that DALI’s recurrent state $h_t^{\text{DALI}}$ achieves higher mutual information with the context $c$ than DreamerV3’s $h_t^{\text{RSSM}}$, up to an error $\epsilon(K) = \mathcal{O}(e^{-\lambda K/2}) h(c)$.

By Lemma~\ref{lemma:DrV3bottleneck}, DreamerV3’s recurrent state satisfies:
$$h(c | h_t^{\text{RSSM}}) \geq \kappa h(c), \quad \kappa \in (0, 1).$$
Thus,
$$\mathcal{I}(c; h_t^{\text{RSSM}}) = h(c) - h(c | h_t^{\text{RSSM}}) \leq (1 - \kappa) h(c).$$
By Lemma~\ref{lemma:DALIbottleneck}, DALI’s recurrent state satisfies:
$$h(c | h_t^{\text{DALI}}) \leq \kappa' h(c), \quad \kappa' = C' e^{-\lambda K} + \mathcal{O}(e^{-\lambda K/2}).$$
Thus,
$$\mathcal{I}(c; h_t^{\text{DALI}}) = h(c) - h(c | h_t^{\text{DALI}}) \geq (1 - \kappa') h(c).$$
Compare:
$$\mathcal{I}(c; h_t^{\text{DALI}}) - \mathcal{I}(c; h_t^{\text{RSSM}}) \geq (1 - \kappa') h(c) - (1 - \kappa) h(c) = (\kappa - \kappa') h(c).$$
Set
$$\epsilon(K) = \kappa' h(c) = \left( C' e^{-\lambda K} + \mathcal{O}(e^{-\lambda K/2}) \right) h(c) = \mathcal{O}(e^{-\lambda K/2}) h(c),$$
since $h(c) < \infty$. Thus,
$$\mathcal{I}(c; h_t^{\text{DALI}}) \geq \mathcal{I}(c; h_t^{\text{RSSM}}) - \epsilon(K).$$
As $K \to \infty$, $\kappa' \to 0$, so $\mathcal{I}(c; h_t^{\text{DALI}}) \to h(c)$, while $\mathcal{I}(c; h_t^{\text{RSSM}}) \leq (1 - \kappa) h(c) < h(c)$, demonstrating that the context encoder reduces the bottleneck.
\end{proof}

\begin{proof}[Proof of Theorem~\ref{thm:necessity}]
We prove both parts of the theorem comparing DALI’s context encoding $\chat_t = g_\varphi(o_{t-K:t}, a_{t-K:t-1})$ with DreamerV3’s recurrent state $h_t = f_\theta(h_{t-1}, z_{t-1}, a_{t-1})$.

\textbf{1. Context identifiability} \\
We show that DALI’s context encoding satisfies:
$$\mathcal{I}(c; \chat_t) \geq \mathcal{I}(c; h_t) - \epsilon(K), \quad \epsilon(K) = \mathcal{O}(e^{-\lambda K/2}) h(c),$$
where $\tau_{t-K:t} = (o_{t-K:t}, a_{t-K:t-1})$ is the history, and $h_t$ is DreamerV3’s recurrent state.

By Lemma~\ref{lemma:DALIAerror}, DALI’s context encoder, trained with the forward dynamics loss:
$$L_{\text{FD}} = \mathbb{E} [ \| o_{t+1} - f_\varphi^w(o_t, a_t, \chat_t) \|_2^2 ],$$
satisfies
$$h(c | \chat_t) \leq h(c | \tau_{t-K:t}) + \delta', \quad \delta' = \mathcal{O}(e^{-\lambda K/2}).$$
By Lemma~\ref{lemma:betamixing}, the history’s conditional entropy is:
$$h(c | \tau_{t-K:t}) \leq C' e^{-\lambda K} h(c).$$
Combining,
$$h(c | \chat_t) \leq C' e^{-\lambda K} h(c) + \delta', \quad \delta' = C'' e^{-\lambda K/2}.$$
Thus,
$$h(c | \chat_t) \leq C' e^{-\lambda K} h(c) + C'' e^{-\lambda K/2}.$$
The mutual information for DALI is
$$\mathcal{I}(c; \chat_t) = h(c) - h(c | \chat_t) \geq h(c) - (C' e^{-\lambda K} h(c) + C'' e^{-\lambda K/2}).$$
By Lemma~\ref{lemma:DrV3bottleneck}, DreamerV3’s recurrent state $h_t$ has
$$h(c | h_t) \geq \kappa h(c), \quad \kappa \in (0, 1),$$
so that
$$\mathcal{I}(c; h_t) = h(c) - h(c | h_t) \leq (1 - \kappa) h(c).$$
Compare:
$$\mathcal{I}(c; \chat_t) - \mathcal{I}(c; h_t) \geq [h(c) - (C' e^{-\lambda K} h(c) + C'' e^{-\lambda K/2})] - (1 - \kappa) h(c) = \kappa h(c) - (C' e^{-\lambda K} h(c) + C'' e^{-\lambda K/2}).$$
Since $\kappa h(c) > 0$, set
$$\epsilon(K) = C' e^{-\lambda K} h(c) + C'' e^{-\lambda K/2} = \mathcal{O}(e^{-\lambda K/2}) h(c),$$
as the $e^{-\lambda K/2}$ term dominates. Thus,
$$\mathcal{I}(c; \chat_t) \geq \mathcal{I}(c; h_t) - \epsilon(K).$$
Additionally, Lemma~\ref{lemma:DALIbottleneck} shows that DALI’s recurrent state $h_t = f_\theta(h_{t-1}, z_{t-1}, a_{t-1}, \chat_t)$ has a reduced bottleneck:
$$h(c | h_t) \leq \kappa' h(c), \quad \kappa' = C' e^{-\lambda K} + \mathcal{O}(e^{-\lambda K/2}),$$
implying,
$$\mathcal{I}(c; h_t) \geq (1 - \kappa') h(c) \approx h(c) \text{ for large } K.$$
This suggests that even DALI’s recurrent state retains more context information than DreamerV3’s $h_t$, reinforcing the necessity of DALI’s context encoder $\chat_t$, which achieves near-optimal $\mathcal{I}(c; \chat_t)$.

\textbf{2. Sample complexity} \\
We prove that DALI achieves $\mathcal{I}(c; \chat_t) \geq (1 - \delta) h(c)$ with $K = \Omega\left(\frac{\log(1/\delta)}{\lambda}\right)$ and $N = \mathcal{O}\left(\frac{1}{\delta^2}\right)$ samples of $K$ transitions, while a hypothetical DreamerV3 context estimator $g_\psi(o_{1:T}, a_{1:T-1})$ requires $N = \mathcal{O}\left(\frac{1}{\delta^2}\right)$ samples of $T$ transitions.

\textit{DALI’s sample complexity} \\
From Lemma~\ref{lemma:DALIAerror}, we have
$$h(c | \chat_t) \leq h(c | \tau_{t-K:t}) + \delta', \quad \delta' = \mathcal{O}(e^{-\lambda K/2}).$$
From Lemma~\ref{lemma:betamixing}, we have
$$h(c | \tau_{t-K:t}) \leq C' e^{-\lambda K} h(c).$$
Combining, we have
$$h(c | \chat_t) \leq C' e^{-\lambda K} h(c) + C'' e^{-\lambda K/2}.$$
To achieve $\mathcal{I}(c; \chat_t) \geq (1 - \delta) h(c)$, we need
$$h(c | \chat_t) \leq \delta h(c).$$
Set
$C' e^{-\lambda K} h(c) + C'' e^{-\lambda K/2} \leq \delta h(c)$.
Since $e^{-\lambda K/2}$ dominates, approximate
$$\delta \approx \frac{C'' e^{-\lambda K/2}}{h(c)} = \tilde{C} e^{-\lambda K/2}, \quad \tilde{C} = \frac{C''}{h(c)}.$$
Solve:
$$e^{-\lambda K/2} = \frac{\delta}{\tilde{C}} \implies \frac{\lambda K}{2} = \ln\left(\frac{\tilde{C}}{\delta}\right) \implies K = \frac{2 \ln(\tilde{C}/\delta)}{\lambda} = \Omega\left( \frac{\log(1/\delta)}{\lambda} \right).$$
From Lemma~\ref{lemma:DALIAerror}, achieving $\delta' = \mathcal{O}(e^{-\lambda K/2})$ requires
$$\mathbb{E}[\| o_{t+1} - f_\varphi^w(o_t, a_t, g_\varphi(\tau_{t-K:t})) \|_2^2] \leq \epsilon^2 = \mathcal{O}\left( \frac{1}{\sqrt{N}} \right) = \mathcal{O}(e^{-\lambda K/2}).$$
Solve:
$$\frac{1}{\sqrt{N}} = \mathcal{O}\left( \frac{\delta}{\tilde{C}} \right) \implies N = \mathcal{O}\left( \frac{\tilde{C}^2}{\delta^2} \right) = \mathcal{O}\left( \frac{1}{\delta^2} \right).$$

\textit{DreamerV3’s sample complexity} \\
From Lemma~\ref{lemma:DrV3bottleneck}, DreamerV3’s recurrent state $h_t$ has $h(c | h_t) \geq \kappa h(c)$, so $\mathcal{I}(c; h_t) \leq (1 - \kappa) h(c)$, where $\kappa$ is constant, preventing $\mathcal{I}(c; h_t) \geq (1 - \delta) h(c)$ for small $\delta$. To enable comparison, assume DreamerV3 with domain randomization trains a hypothetical context estimator $g_\psi(o_{1:T}, a_{1:T-1})$ to estimate $c$ over $N$ episodes of length $T$. By Lemma~\ref{lemma:betamixing} for $K = T$:
$$h(c | \tau_{1:T}) \leq C' e^{-\lambda T} h(c).$$
Training $g_\psi$ to minimize $\mathbb{E}[\| g_\psi(o_{1:T}, a_{1:T-1}) - \mathbb{E}[c | \tau_{1:T}] \|_2^2]$ requires
$$\mathbb{E}[\| g_\psi(\tau_{1:T}) - \mathbb{E}[c | \tau_{1:T}] \|_2^2] \leq \epsilon^2 = \mathcal{O}\left( \frac{1}{\sqrt{N}} \right) = \mathcal{O}(e^{-\lambda T}).$$
Solve:
$$\frac{1}{\sqrt{N}} = \mathcal{O}\left( \frac{\delta}{\tilde{C}} \right) \implies N = \mathcal{O}\left( \frac{\tilde{C}^2}{\delta^2} \right) = \mathcal{O}\left( \frac{1}{\delta^2} \right).$$
The total number of transitions is
$N \cdot T = \mathcal{O}\left( \frac{T}{\delta^2} \right)$.

Note that DreamerV3’s standard RSSM cannot achieve $\mathcal{I}(c; h_t) \geq (1 - \delta) h(c)$ due to $h(c | h_t) \geq \kappa h(c)$ (Lemma~\ref{lemma:DrV3bottleneck}). The estimator $g_\psi$ is introduced to enable a fair comparison, acknowledging that this is not standard in DreamerV3.
Since $K \ll T$ typically, DALI is more efficient. Moreover, Lemma~\ref{lemma:DALIbottleneck} indicates that DALI’s $h_t$ achieves $\mathcal{I}(c; h_t) \geq (1 - \kappa') h(c)$, with $\kappa' = \mathcal{O}(e^{-\lambda K/2})$, suggesting that DALI’s sequence model could approach the desired information level with sufficient $K$, unlike DreamerV3’s $h_t$.
\end{proof}

\subsection{DALI and Dreamer-based Baselines}
\label{app:DALI-baselines}

\subsubsection{World Models}
\textbf{DreamerV3 with Domain Randomization} (context-unaware baseline) \citep{tobin2017domain,hafner2023mastering}:
\begin{align*}
	\text{Sequence model:}& \quad h_t = f_\theta(h_{t-1}, z_{t-1}, a_{t-1}) \\
	\text{Encoder:} & \quad z_t \sim q_\theta(z_t \mid h_t, o_t) \\
	\text{Dynamics predictor:} & \quad \hat{z}_t \sim p_\theta(\hat{z}_t \mid h_t) \\
	\text{Reward predictor:} & \quad \hat{r}_t \sim p_\theta(\hat{r}_t \mid h_t, z_t) \\
	\text{Continue predictor:} & \quad \hat{n}_t \sim p_\theta(\hat{n}_t \mid h_t, z_t) \\
	\text{Decoder:}& \quad \hat{o}_t \sim p_\theta(\hat{o}_t \mid h_t, z_t).
\end{align*}

\textbf{DALI-S} (DALI with \textbf{S}hallow Integration):
\begin{align*}
	\text{Context encoder:} & \quad \chat_t = g_{\varphi}(o_{t-K:t}, a_{t-K:t-1}).\\
        \vspace{.1cm}\\
	\text{Sequence model:}& \quad h_t = f_\theta(h_{t-1}, z_{t-1}, a_{t-1}), \\
	\text{Encoder:} & \quad z_t \sim q_\theta(z_t \mid h_t, o_t,  \chat_t), \\
	\text{Dynamics predictor:} & \quad \hat{z}_t \sim p_\theta(\hat{z}_t \mid h_t), \\
	\text{Reward predictor:} & \quad \hat{r}_t \sim p_\theta(\hat{r}_t \mid h_t, z_t), \\
	\text{Continue predictor:} & \quad \hat{n}_t \sim p_\theta(\hat{n}_t \mid h_t, z_t), \\
	\text{Decoder:}& \quad \hat{o}_t \sim p_\theta(\hat{o}_t \mid h_t, z_t).
\end{align*}

\textbf{DALI-D} (DALI with \textbf{D}eep Integration):
\begin{align*}
	\text{Context encoder:} & \quad \chat_t = g_{\varphi}(o_{t-K:t}, a_{t-K:t-1}).\\
    \vspace{.1cm}\\
	\text{Sequence model:}& \quad h_t = f_\theta(h_{t-1}, z_{t-1}, a_{t-1}, \chat_t), \\
	\text{Encoder:} & \quad z_t \sim q_\theta(z_t \mid h_t, o_t), \\
	\text{Dynamics predictor:} & \quad \hat{z}_t \sim p_\theta(\hat{z}_t \mid h_t), \\
	\text{Reward predictor:} & \quad \hat{r}_t \sim p_\theta(\hat{r}_t \mid h_t, z_t, \chat_t), \\
	\text{Continue predictor:} & \quad \hat{n}_t \sim p_\theta(\hat{n}_t \mid h_t, z_t, \chat_t), \\
	\text{Decoder:}& \quad \hat{o}_t \sim p_\theta(\hat{o}_t \mid h_t, z_t, \chat_t).
\end{align*}

\textbf{cRSSM-S} (ground-truth context-aware baseline)~\citep{prasanna2024dreaming}:
\begin{align*}
	\text{Sequence model:}& \quad h_t = f_\theta(h_{t-1}, z_{t-1}, a_{t-1}), \\
	\text{Encoder:} & \quad z_t \sim q_\theta(z_t \mid h_t, o_t,\blue{c}), \\
	\text{Dynamics predictor:} & \quad \hat{z}_t \sim p_\theta(\hat{z}_t \mid h_t), \\
	\text{Reward predictor:} & \quad \hat{r}_t \sim p_\theta(\hat{r}_t \mid h_t, z_t), \\
	\text{Continue predictor:} & \quad \hat{n}_t \sim p_\theta(\hat{n}_t \mid h_t, z_t), \\
	\text{Decoder:}& \quad \hat{o}_t \sim p_\theta(\hat{o}_t \mid h_t, z_t).
\end{align*}

\textbf{cRSSM-D} (ground-truth context-aware baseline)~\citep{prasanna2024dreaming}:
\begin{align*}
	\text{Sequence model:}& \quad h_t = f_\theta(h_{t-1}, z_{t-1}, a_{t-1}, \blue{c}) \\
	\text{Encoder:} & \quad z_t \sim q_\theta(z_t \mid h_t, o_t) \\
	\text{Dynamics predictor:} & \quad \hat{z}_t \sim p_\theta(\hat{z}_t \mid h_t) \\
	\text{Reward predictor:} & \quad \hat{r}_t \sim p_\theta(\hat{r}_t \mid h_t, z_t, \blue{c}) \\
	\text{Continue predictor:} & \quad \hat{n}_t \sim p_\theta(\hat{n}_t \mid h_t, z_t, \blue{c}) \\
	\text{Decoder:}& \quad \hat{o}_t \sim p_\theta(\hat{o}_t \mid h_t, z_t, \blue{c}).
\end{align*}
The baselines cRSSM-S and cRSSM-D correspond to \texttt{concat-context} and \texttt{cRSSM}, respectively, in~\citep{prasanna2024dreaming}.

\subsubsection{Actor-Critic Models}

\textbf{DreamerV3 with Domain Randomization}, \textbf{DALI-S}, and \textbf{cRSSM-S}:
\begin{align*}
	\text{Actor:} && &a_\tau \sim \pi_\phi(a_\tau|s_\tau) \\
	\text{Critic:} && &v_\psi(s_t) \approx \mathbb{E}_{\pi(\cdot|s_\tau)}{\textstyle\sum_{\tau=0}^{H-t}\gamma^{\tau}r_{t+\tau}}.
\end{align*}

\textbf{DALI-D}:
\begin{align*}
	\text{Actor:} && &a_\tau \sim \pi_\phi(a_\tau|s_\tau, \chat) \\
	\text{Critic:} && &v_\psi(s_t, \chat) \approx \mathbb{E}_{\pi(\cdot|s_\tau, \chat)}{\textstyle\sum_{\tau=0}^{H-t}\gamma^{\tau}r_{t+\tau}}.
\end{align*}

\textbf{cRSSM-D}:
\begin{align*}
	\text{Actor:} && &a_\tau \sim \pi_\phi(a_\tau|s_\tau, \blue{c}) \\
	\text{Critic:} && &v_\psi(s_t, \blue{c}) \approx \mathbb{E}_{\pi(\cdot|s_\tau, \blue{c})}{\textstyle\sum_{\tau=0}^{H-t}\gamma^{\tau}r_{t+\tau}}.
\end{align*}

\clearpage

\section{Algorithms}\label{app:B}

\begin{algorithm}[h!]
	\SetAlgoLined
	\SetKwComment{Comment}{// }{}

	\begin{minipage}[t]{0.48\textwidth}
		\textbf{Model components}\\[0.5ex]
		\begin{tabular}{@{} p{8em} l @{}}
			Context encoder    & $g_\varphi(o_{t-K:t}, a_{t-K:t-1})$ \\
			One-step model     & $f_\varphi^w(o_{t}, a_{t},\chat_t)$ \\
			Deterministic State& $f_\theta(h_{t-1}, z_{t-1}, a_{t-1})$ \\
			Stochastic state   & $p_\theta(\hat{z}_t \mid h_t)$ \\
			Encoder            & $q_\theta(z_t \mid h_t, o_t, \chat_t)$ \\
			Reward             & $p_\theta(\hat{r}_t \mid h_t, z_t)$ \\
			Continue           & $p_\theta(\hat{n}_t \mid h_t, z_t)$ \\
			Decoder            & $p_\theta(\hat{o}_t \mid h_t, z_t)$ \\
			Actor              & $\pi_\phi(a_t \mid s_t)$ \\
			Critic             & $v_\psi(s_t)$ \\
		\end{tabular}
	\end{minipage}\hfill%
	\begin{minipage}[t]{0.48\textwidth}
		\textbf{Hyper parameters}\\[0.5ex]
		\begin{tabular}{@{} p{14em} c @{}}
			Seed episodes       & $S$ \\
			Collect interval    & $C$ \\
			Batch size          & $B$ \\
			Sequence length     & $L$ \\
			Imagination horizon & $H$ \\
			Learning rate       & $\alpha$ \\
			Learning rate $g$   & $\alpha_{g}$ \\
			Episode length      & $T$ \\
			Context history     & $K$ \\
		\end{tabular}
	\end{minipage}

	\small
	Initialize dataset \( \mathcal{D} \) with \( S \) random seed episodes. \\
	Initialize neural network parameters \( \theta, \phi, \psi, \varphi \) randomly. \\
	\While{not converged}{
		\For{update step \( c=1,\ldots,C \)}{
			\BlankLine \Comment{Dynamics learning (World Model $\theta$)}
			Draw \( B \) sequences \( \{(o_t, a_t, r_t)\}_{t=1}^{L} \sim \mathcal{D} \). \\
			Initialize deterministic state \( h_0 \leftarrow \mathbf{0} \). \\
			\For{all steps $t$ from sequences batch $\mathcal{D}$}{
				\If{$t < K$}{
                    Pad $o_{1:t}$ and $a_{1:t-1}$ with zeros to length $K$.
				}
                Compute context $\chat_t \leftarrow g_\varphi(o_{t-K:t}, a_{t-K:t-1})$.\texttt{detach()} \\
                Encode observation $z_t \sim q_\theta(z_t \mid h_t, o_t, \chat_t)$ \\
                Compute deterministic state $h_t = f_\theta(h_{t-1}, z_{t-1}, a_{t-1})$ \\
                Set latent state $s_t \leftarrow [ z_t, h_t ]$ \\
                Update $\theta$ using representation learning (decoder, reward, continue).\\
			}
			\BlankLine \tcp{Context representation learning ($\varphi$)}
			Draw \( B \) data sequences $ \{(o_t, a_t)\}_{t=1}^{K+1} \sim \mathcal{D} $\\
			\For{each sample $ (o_{1:K+1}, a_{1:K}) \sim \mathcal{D}$}{
				Compute \(\chat_{1:K} = g_\varphi({o}_{1:K}, {a}_{1:K-1})\) \\
				Update \(\varphi \leftarrow \varphi - \alpha_g \nabla_\varphi \sum_{t=1}^{K} \Big\| o_{t+1} - f_\varphi^w(o_t, a_t, \chat_t) \Big\|_2^2\) \\
			}
			\BlankLine \Comment{Behavior learning (Actor/Critic $\phi, \psi$)}
			Infer context: \( \chat_t \leftarrow g_\varphi(o_{t-K:t}, a_{t-K:t-1}).\texttt{detach()} \) \\
            \Comment{$\theta$, $\varphi$ frozen}
			Imagine trajectories \( \{(s_\tau, a_\tau)\}_{\tau=t}^{t+H} \) from each $s_t$ (with a fixed $\chat_t$ once it is provided to the initial observation encoding)\\
			Predict rewards \( \E{}{p_\theta(r_\tau \mid s_\tau)} \) and values \( v_\psi(s_\tau) \). \\
			Compute value estimates \( \mathrm{V}_{\lambda}(s_\tau) \). \\
			Update \( \phi \leftarrow \phi + \alpha \nabla_\phi \sum_{\tau=t}^{t+H} \mathrm{V}_{\lambda}(s_\tau) \). \\
			Update \( \psi \leftarrow \psi - \alpha \nabla_\psi \sum_{\tau=t}^{t+H} \frac{1}{2} \Big\| v_\psi(s_\tau) - \mathrm{V}_{\lambda}(s_\tau) \Big\|^2 \). \\
		}
		\BlankLine \Comment{Environment interaction}
		\( o_1 \leftarrow \texttt{env.reset()} \) \\
		\For{time step \( t=1,\ldots,T \)}{
			\If{$t < K$}{
				Pad $o_{1:t}$ and $a_{1:t-1}$ with zeros to length $K$.
			}
			Infer context online: \( \chat_t \leftarrow g_\varphi(o_{t-K:t}, a_{t-K:t-1}) \). \\ 
			Compute \( s_t \sim p_\theta(s_t \mid s_{t-1}, a_{t-1}, o_t) \). \\
			Compute \( a_t \sim \pi_\phi(a_t \mid s_t) \). \\
			\( r_t, o_{t+1} \leftarrow \texttt{env.step}(a_t) \). \\
		}
		Add experience to dataset \( \mathcal{D} \leftarrow \mathcal{D} \cup \{(o_t, a_t, r_t)_{t=1}^T\} \). \\
	}
	\caption{DALI-S (Shallow Integration with Forward Dynamics Loss)}
	\label{algo:DALI-S}
\end{algorithm}

\clearpage

\begin{algorithm}[h]
\SetAlgoLined
\SetKwComment{Comment}{// }{}
\footnotesize
Initialize dataset $\mathcal{D}$ with $S$ random seed episodes \\
Initialize parameters $\theta, \phi, \psi, \varphi, W_z, W_{\chat}$ randomly \\

\While{not converged}{
  \For{update step $c=1$ \KwTo $C$}{
    \BlankLine \tcp{Dynamics learning (World Model $\theta$)}
    Draw $B$ sequences $\{(o_t, a_t, r_t)\}_{t=1}^{L} \sim \mathcal{D}$ \\
    Initialize $h_0 \leftarrow \mathbf{0}$ \\
    \For{all steps $t$ in batch}{
        \If{$t < K$}{
            Pad $o_{\max(1,t-K):t}$ and $a_{\max(1,t-K):t-1}$ with zeros.
        }
        Compute context $\chat_t \leftarrow g_\varphi(o_{t-K:t}, a_{t-K:t-1})$.\texttt{detach()} \\
        Encode observation $z_t \sim q_\theta(z_t \mid h_t, o_t, \chat_t)$ \\
        Compute deterministic state $h_t = f_\theta(h_{t-1}, z_{t-1}, a_{t-1})$ \\
        Set latent state $s_t \leftarrow [ z_t, h_t ]$ \\
        Update $\theta$ using representation learning (decoder, reward, continue).
    }

    \BlankLine \tcp{Context representation learning ($\varphi, W_z, W_{\chat}$)}
    Draw $B$ trajectory segments $\{(o_{t-K:t}, a_{t-K:t}, o_{t+1})\} \sim \mathcal{D}$ \\
    Initialize losses $L_{\text{FD}} \leftarrow 0$, $L_{\text{cross}} \leftarrow 0$ \\
    \For{each segment $(o_{t-K:t}, a_{t-K:t}, o_{t+1})$}{
        Initialize $h_{t-K} \leftarrow \mathbf{0}$ \\
        \For{$\tau = t-K$ \KwTo $t$}{
            \If{$\tau < K$}{
                Pad $o_{\max(1,\tau-K):\tau}$ and $a_{\max(1,\tau-K):\tau-1}$ with zeros.
            }
            $\chat_\tau = g_\varphi(o_{\tau-K:\tau}, a_{\tau-K:\tau-1})$ \\ 
            $z_\tau \sim q_\theta(z_\tau \mid h_\tau\texttt{.detach()}, o_\tau, \chat_\tau)$\\ 
            $h_{\tau+1} = f_\theta(h_\tau\texttt{.detach()}, z_\tau\texttt{.detach()}, a_\tau)$ \\ 
        }
        $\tilde{z}_t = W_z(\chat_t)$ \\
        $\tilde{\chat}_t = W_{\chat}(z_t.\texttt{detach()})$ \\
        $L_{\text{FD}} := \|o_{t+1} - f_\varphi^w(o_t, a_t, \chat_t)\|_2^2$ \\
        $L_{\text{cross}} := \|z_t.\texttt{detach()} - \tilde{z}_t\|_2^2 + \|\chat_t - \tilde{\chat}_t\|_2^2$ \\
    }
    $L_{\text{FD}} \leftarrow \frac{1}{B} L_{\text{FD}}$, \quad
    $L_{\text{cross}} \leftarrow \frac{1}{B} L_{\text{cross}}$ \\
    Update $\varphi, W_z, W_{\chat}$ (gradient descent) using $L_{\text{total}} = L_{\text{FD}} + \lambda_{\text{cross}} L_{\text{cross}}$ \\

    \BlankLine \tcp{Behavior learning (Actor/Critic $\phi, \psi$)}
    Draw $B$ sequences $\{(o_t, a_t, r_t)\}_{t=1}^{L} \sim \mathcal{D}$ \\
    Initialize $h_0 \leftarrow \mathbf{0}$ \\
    \For{all steps $t$ in batch}{
        \If{$t < K$}{
            Pad $o_{\max(1,t-K):t}$ and $a_{\max(1,t-K):t-1}$ with zeros.
        }
        Infer $\chat_t \leftarrow g_\varphi(o_{t-K:t}, a_{t-K:t-1})$.\texttt{detach()} \\
        \tcp{$\theta$, $\varphi$ frozen}
        Encode initial observation $z_t \sim q_\theta(z_t \mid h_t, o_t, \chat_t)$ \\
        Set initial latent state $s_t \leftarrow [ z_t, h_t ]$ \\
        \For{imagination step $\tau = t$ \KwTo $t+H$}{
            Sample action $a_\tau \sim \pi_\phi(a_\tau \mid s_\tau)$ \\
            Sample prior state $\hat{z}_\tau \sim p_\theta(\hat{z}_\tau \mid h_\tau)$ \\
            Compute next deterministic state $h_{\tau+1} = f_\theta(h_\tau, \hat{z}_\tau, a_\tau)$ \\
            Set latent state $s_{\tau+1} \leftarrow [ \hat{z}_\tau, h_{\tau+1} ]$ \\
        }
        Compute value estimates $\mathrm{V}_\lambda(s_\tau)$ using TD($\lambda$)
    }
    Update $\phi \leftarrow \phi + \alpha \nabla_\phi \sum \mathrm{V}_\lambda$ \\
    Update $\psi \leftarrow \psi - \alpha \nabla_\psi \sum \frac{1}{2} \|v_\psi - \mathrm{V}_\lambda\|_2^2$ \\
  } 

  \BlankLine \tcp{Environment interaction}
  $o_1 \leftarrow \texttt{env.reset()}$ \\
  \For{time step $t=1,\ldots,T$}{
      \If{$t < K$}{
          Pad $o_{\max(1,t-K):t}$ and $a_{\max(1,t-K):t-1}$ with zeros.
      }
      Infer context online: $\chat_t \leftarrow g_\varphi(o_{t-K:t}, a_{t-K:t-1})$ \\
      Compute $s_t \sim p_\theta(s_t \mid s_{t-1}, a_{t-1}, o_t)$ \\
      Compute $a_t \sim \pi_\phi(a_t \mid s_t)$ \\
      $r_t, o_{t+1} \leftarrow \texttt{env.step}(a_t)$ \\
  }
  Add experience to dataset $\mathcal{D} \leftarrow \mathcal{D} \cup \{(o_t, a_t, r_t)_{t=1}^T\}$ \\
} 
\caption{DALI-S-$\rchi$ (Shallow Integration with Cross-modal Regularization)}
\label{algo:agent-couple-S}
\end{algorithm}

\begin{algorithm}[h!]
	\SetAlgoLined
	\SetKwComment{Comment}{// }{}

	\begin{minipage}[t]{0.48\textwidth}
		\textbf{Model components}\\[0.5ex]
		\begin{tabular}{@{} p{8em} l @{}}
			Context encoder    & $g_\varphi(o_{t-K:t}, a_{t-K:t-1})$ \\
			One-step model     & $f_\varphi^w(o_{t}, a_{t}, \chat_t)$ \\
			Deterministic State& $f_\theta(h_{t-1}, z_{t-1}, a_{t-1}, \chat_t)$ \\
			Stochastic State   & $p_\theta(\hat{z}_t \mid h_t)$ \\
			Encoder            & $q_\theta(z_t \mid h_t, o_t)$ \\
			Reward             & $p_\theta(\hat{r}_t \mid h_t, z_t, \chat_t)$ \\
			Continue           & $p_\theta(\hat{n}_t \mid h_t, z_t, \chat_t)$ \\
			Decoder            & $p_\theta(\hat{o}_t \mid h_t, z_t, \chat_t)$ \\
			Actor              & $\pi_\phi(a_t \mid s_t, \chat_t)$ \\
			Critic             & $v_\psi(s_t, \chat_t)$ \\
		\end{tabular}
	\end{minipage}\hfill%
	\begin{minipage}[t]{0.48\textwidth}
		\textbf{Hyper parameters}\\[0.5ex]
		\begin{tabular}{@{} p{14em} c @{}}
			Seed episodes       & $S$ \\
			Collect interval    & $C$ \\
			Batch size          & $B$ \\
			Sequence length     & $L$ \\
			Imagination horizon & $H$ \\
			Learning rate       & $\alpha$ \\
			Learning rate $g$   & $\alpha_{g}$ \\
			Episode length      & $T$ \\
			Context history     & $K$ \\
		\end{tabular}
	\end{minipage}

	\small
	Initialize dataset \( \mathcal{D} \) with \( S \) random seed episodes. \\
	Initialize neural network parameters \( \theta, \phi, \psi, \varphi \) randomly. \\
	\While{not converged}{
		\For{update step \( c=1,\ldots,C \)}{
			\BlankLine \Comment{Dynamics learning (World Model $\theta$)}
			Draw \( B \) sequences \( \{(o_t, a_t, r_t)\}_{t=1}^{L} \sim \mathcal{D} \). \\
			Initialize deterministic state \( h_0 \leftarrow \mathbf{0} \). \\
			\For{all steps $t$ from sequences batch $\mathcal{D}$}{
				\If{$t < K$}{
                    Pad $o_{1:t}$ and $a_{1:t-1}$ with zeros to length $K$.
				}
                Compute context $\chat_t \leftarrow g_\varphi(o_{t-K:t}, a_{t-K:t-1})$.\texttt{detach()} \\
                Encode observation $z_t \sim q_\theta(z_t \mid h_t, o_t)$ \\
                Compute deterministic state $h_t = f_\theta(h_{t-1}, z_{t-1}, a_{t-1}, \chat_t)$ \\
                Set latent state $s_t \leftarrow [ z_t, h_t ]$ \\
                Update $\theta$ using representation learning (decoder, reward, continue).\\
			}
			\BlankLine \tcp{Context representation learning ($\varphi$)}
			Draw \( B \) data sequences $ \{(o_t, a_t)\}_{t=1}^{K+1} \sim \mathcal{D} $\\
			\For{each sample $ (o_{1:K+1}, a_{1:K}) \sim \mathcal{D}$}{
				Compute \(\chat_{1:K} = g_\varphi({o}_{1:K}, {a}_{1:K-1})\) \\
				Update \(\varphi \leftarrow \varphi - \alpha_g \nabla_\varphi \sum_{t=1}^{K} \Big\| o_{t+1} - f_\varphi^w(o_t, a_t, \chat_t) \Big\|_2^2\) \\
			}
			\BlankLine \Comment{Behavior learning (Actor/Critic $\phi, \psi$)}
			Infer context: \( \chat_t \leftarrow g_\varphi(o_{t-K:t}, a_{t-K:t-1}).\texttt{detach()} \) \\
            \Comment{$\theta$, $\varphi$ frozen}
			Imagine trajectories \( \{(s_\tau, a_\tau)\}_{\tau=t}^{t+H} \) from each $s_t$ (with fixed $\chat_t$) \\
			Predict rewards \( \E{}{p_\theta(r_\tau \mid s_\tau, \chat_t)} \) and values \( v_\psi(s_\tau, \chat_t) \). \\
			Compute value estimates \( \mathrm{V}_{\lambda}(s_\tau, \chat_t) \). \\
			Update \( \phi \leftarrow \phi + \alpha \nabla_\phi \sum_{\tau=t}^{t+H} \mathrm{V}_{\lambda}(s_\tau, \chat_t) \). \\
			Update \( \psi \leftarrow \psi - \alpha \nabla_\psi \sum_{\tau=t}^{t+H} \frac{1}{2} \Big\| v_\psi(s_\tau, \chat_t) - \mathrm{V}_{\lambda}(s_\tau, \chat_t) \Big\|^2 \). \\
		}
		\BlankLine \Comment{Environment interaction}
		\( o_1 \leftarrow \texttt{env.reset()} \) \\
		\For{time step \( t=1,\ldots,T \)}{
			\If{$t < K$}{
				Pad $o_{1:t}$ and $a_{1:t-1}$ with zeros to length $K$.
			}
			Infer context online: \( \chat_t \leftarrow g_\varphi(o_{t-K:t}, a_{t-K:t-1}) \). \\ 
			Compute \( s_t \sim p_\theta(s_t \mid s_{t-1}, a_{t-1}, o_t) \). \\
			Compute \( a_t \sim \pi_\phi(a_t \mid s_t, \chat_t) \). \\
			\( r_t, o_{t+1} \leftarrow \texttt{env.step}(a_t) \). \\
		}
		Add experience to dataset \( \mathcal{D} \leftarrow \mathcal{D} \cup \{(o_t, a_t, r_t)_{t=1}^T\} \). \\
	}
	\caption{DALI-D (Deep Integration with Forward Dynamics Loss)}
	\label{algo:DALI-D}
\end{algorithm}

\clearpage

\begin{algorithm}[h!]
\SetAlgoLined
\SetKwComment{Comment}{// }{}
\footnotesize
\small
Initialize dataset $\mathcal{D}$ with $S$ random seed episodes \\
Initialize parameters $\theta, \phi, \psi, \varphi, W_z, W_{\chat}$ randomly \\

\While{not converged}{
  \For{update step $c=1$ \KwTo $C$}{
    \BlankLine \Comment{Dynamics learning (World Model $\theta$)}
    Draw $B$ sequences $\{(o_t, a_t, r_t)\}_{t=1}^L \sim \mathcal{D}$ \\
    Initialize $h_0 \leftarrow \mathbf{0}$ \\
    \For{all steps $t$ in batch}{
        \If{$t < K$}{
					Pad $o_{\max(1,t-K):t}$ and $a_{\max(1,t-K):t-1}$ with zeros.
				}
      Compute context $\chat_t \leftarrow g_\varphi(o_{t-K:t}, a_{t-K:t-1})$.\texttt{detach()} \\
      Encode observation $z_t \sim q_\theta(z_t \mid h_t, o_t)$ \\
      Compute deterministic state $h_t = f_\theta(h_{t-1}, z_{t-1}, a_{t-1}, \chat_t)$ \\
      Set latent state $s_t \leftarrow [ z_t, h_t ]$ \\
      Update $\theta$ using representation learning (decoder, reward, continue).\\
    }

    \BlankLine \Comment{Context representation learning ($\varphi, W_z, W_{\chat}$)}
    Draw $B$ trajectory segments $\{(o_{t-K:t}, a_{t-K:t}, o_{t+1})\} \sim \mathcal{D}$ \\
    Initialize losses $L_{\text{FD}} \leftarrow 0$, $L_{\text{cross}} \leftarrow 0$ \\
    \For{each segment $(o_{t-K:t}, a_{t-K:t}, o_{t+1})$}{
      Initialize $h_{t-K} \leftarrow \mathbf{0}$       
      \For{$\tau = t-K$ \KwTo $t$}{
          \If{$\tau < K$}{
              Pad $o_{\max(1,\tau-K):\tau}$ and $a_{\max(1,\tau-K):\tau-1}$ with zeros.
          }
          $\chat_\tau = g_\varphi(o_{\tau-K:\tau}, a_{\tau-K:\tau-1})$ \\
          $z_\tau \sim q_\theta(z_\tau \mid h_\tau\texttt{.detach()}, o_\tau)$ \\
          $h_{\tau+1} = f_\theta\big(h_\tau\texttt{.detach()}, z_\tau\texttt{.detach()}, a_\tau, \chat_\tau\texttt{.detach()}\big)$ \\
      }
      $\tilde{z}_t = W_z(\chat_t)$ \\
      $\tilde{\chat}_t = W_{\chat}(z_t\texttt{.detach()})$ \\
      $L_{\text{FD}} := \|o_{t+1} - f_\varphi^w(o_t, a_t, \chat_t)\|_2^2$ \\
      $L_{\text{cross}} := \|z_t\texttt{.detach()} - \tilde{z}_t\|_2^2 + \|\chat_t - \tilde{\chat}_t\|_2^2$
    }
    $L_{\text{FD}} \leftarrow \frac{1}{B}L_{\text{FD}}$, $\quad$
    $L_{\text{cross}} \leftarrow \frac{1}{B}L_{\text{cross}}$ \\
    Update $\varphi, W_z, W_{\chat}$ (gradient descent) using $L_{\text{total}} = L_{\text{FD}} + \lambda_{\text{cross}} L_{\text{cross}}$ \\

    \BlankLine \tcp{Behavior learning (Actor/Critic $\phi, \psi$)}
    Draw $B$ sequences $\{(o_t, a_t, r_t)\}_{t=1}^L \sim \mathcal{D}$ \\
    Initialize $h_0 \leftarrow \mathbf{0}$ \\
    \For{all steps $t$ in batch}{
        \If{$t < K$}{
            Pad $o_{\max(1,t-K):t}$ and $a_{\max(1,t-K):t-1}$ with zeros.
        }
        Infer $\chat_t \leftarrow g_\varphi(o_{t-K:t}, a_{t-K:t-1})$.\texttt{detach()} \\
        \tcp{$\theta$, $\varphi$ frozen}
        Encode initial observation $z_t \sim q_\theta(z_t \mid h_t, o_t)$ \\
        Set initial latent state $s_t \leftarrow [ z_t, h_t ]$ \\
        \For{imagination step $\tau = t$ \KwTo $t+H$}{
            Sample action $a_\tau \sim \pi_\phi(a_\tau \mid s_\tau, \chat_t)$ \\
            Sample prior state $\hat{z}_\tau \sim p_\theta(\hat{z}_\tau \mid h_\tau)$ \\
            Compute next deterministic state $h_{\tau+1} = f_\theta(h_\tau, \hat{z}_\tau, a_\tau, \chat_t)$ \\
            Set latent state $s_{\tau+1} \leftarrow [ \hat{z}_\tau, h_{\tau+1} ]$ \\
        }
        Compute value estimates $\mathrm{V}_\lambda(s_\tau, \chat_t)$ using TD($\lambda$)
    }
    Update $\phi \leftarrow \phi + \alpha \nabla_\phi \sum \mathrm{V}_\lambda$ \\
    Update $\psi \leftarrow \psi - \alpha \nabla_\psi \sum \frac{1}{2} \|v_\psi - \mathrm{V}_\lambda\|_2^2$ \\
  }

  \BlankLine \Comment{Environment interaction}
		\( o_1 \leftarrow \texttt{env.reset()} \) \\
		\For{time step \( t=1,\ldots,T \)}{
			\If{$t < K$}{
                Pad $o_{\max(1,t-K):t}$ and $a_{\max(1,t-K):t-1}$ with zeros.
                }
			Infer context online: \( \chat_t \leftarrow g_\varphi(o_{t-K:t}, a_{t-K:t-1}) \). \\ 
			Compute \( s_t \sim p_\theta(s_t \mid s_{t-1}, a_{t-1}, o_t) \). \\
			Compute \( a_t \sim \pi_\phi(a_t \mid s_t, \chat_t) \). \\
			\( r_t, o_{t+1} \leftarrow \texttt{env.step}(a_t) \). \\
		}
		Add experience to dataset \( \mathcal{D} \leftarrow \mathcal{D} \cup \{(o_t, a_t, r_t)_{t=1}^T\} \). \\
}
\caption{DALI-D-$\rchi$ (Deep Integration with Cross-modal Regularization)}
\label{algo:agent-couple-D}
\end{algorithm}

\clearpage

\section{Experimental Setup and Implementation Details}\label{app:C}

We evaluate DALI's zero-shot generalization on contextualized DMC Ball-in-Cup (gravity, string length) and Walker Walk (gravity, actuator strength) tasks \citep{tassa2018deepmind}. These environments feature context parameters with distinct training and evaluation ranges, presenting unique dynamics and generalization challenges.

\subsection{Training and Evaluation}
Our setup follows the CARL benchmark \citep{benjamins2023contextualize}, which defines default values for the two context dimensions per environment.
To assess both interpolation and extrapolation (out-of-distribution, OOD) capabilities, we define two distinct context ranges: one for training and interpolation evaluation, and another for OOD evaluation. See
Table \ref{tab:appendix:envs}. Additionally, we consider both \textit{Featurized-} and \textit{Pixel-based} observation modalities for each environment.

We consider two training scenarios. In the \textit{single} context variation setting, each context dimension is varied independently by uniformly sampling 100 values within its respective training range, while the other dimension is held fixed at its default value. In the \textit{dual} context variation setting, both context dimensions are varied jointly by uniformly sampling 100 context pairs from the Cartesian product of their respective training ranges.

Following \citet{kirk2023survey}, we evaluate our agents under three generalization regimes. The \textit{Interpolate} regime tests performance on unseen context values sampled within the training range, while the \textit{Extrapolate} regime assesses generalization to OOD contexts beyond the training range. The \textit{Mixed} regime evaluates generalization when both context dimensions vary: one is sampled from the extrapolation range and the other from the interpolation range. Results in the Interpolate and Extrapolate regimes are aggregated over cases where one or both context dimensions vary.

\subsection{Environments}
\label{app:environments}
\textbf{DMC Ball-in-Cup.} This task involves an agent controlling a cup attached to a ball by a string, aiming to swing the ball into the cup, with context parameters gravity and string length. The ball’s pendulum-like motion exhibits complex dynamics due to gravity’s influence on oscillation frequency and string length’s effect on swing amplitude. Extreme gravity values, such as $0.98$ or $19.6$, significantly alter the ball’s trajectory, while short or long strings (e.g., $0.03$ or $0.6$) change the swing’s frequency and reach, creating complex interactions between these parameters. The interaction amplifies nonlinear effects, and poses significant generalization challenges, as the context encoder must adapt to substantial shifts in motion patterns, particularly in the OOD regime.

\textbf{DMC Walker Walk.} This task involves a planar bipedal robot walking forward, with context parameters gravity and actuator strength. Actuator strength scales torque applied to joint motors (e.g., hips, knees), modulating walking speed and stability, while gravity affects balance and ground reaction forces.
Despite simple torque scaling, locomotion dynamics are intricate, driven by multi-joint coordination and contact interactions. Generalization is less challenging than in Ball-in-Cup, as the actuator strength evaluation range ($0.1$ to $0.5$ and $1.5$ to $2.0$) is closer to training ($0.5$ to $1.5$), and its torque effects are predictable. However, extreme gravity values test the context encoder’s ability to maintain stable, coordinated walking in the OOD regime.

\begin{table}[htbp]
    \caption{Context ranges for considered environments.}
    \begin{center}
        \begin{tabular}{llll}
            \multicolumn{1}{l}{\bf Environment} &\multicolumn{1}{l}{\bf Context} &\multicolumn{1}{p{2cm}}{\bf Training/\newline Interpolation}  &\multicolumn{1}{l}{\bf Extrapolation}
            \\ \hline \\
			DMC ball\_in\_cup-catch-v0 & gravity&$[4.9, 14.7]$ &$[0.98, 4.9)\cup (14.7, 19.6] $\\             & string length&$[0.15, 0.45]$ &$[0.03, 0.15)\cup (0.45, 0.6] $\\
            \\
			DMC walker-walk-v0 & gravity&$[4.9, 14.7]$ &$[0.98, 4.9)\cup (14.7, 19.6]$\\            & actuator strength&$[0.5, 1.5]$ &$[0.1, 0.5)\cup (1.5, 2.0]$\\
        \end{tabular}
    \end{center}
    \label{tab:appendix:envs}
\end{table}

\subsection{Architectural details}\label{SM:CE}

Figure~\ref{fig:DALI_architecture} shows a high-level illustration of the DALI architecture.

\begin{figure}[H]
    \centering
    \includegraphics[width=\textwidth]{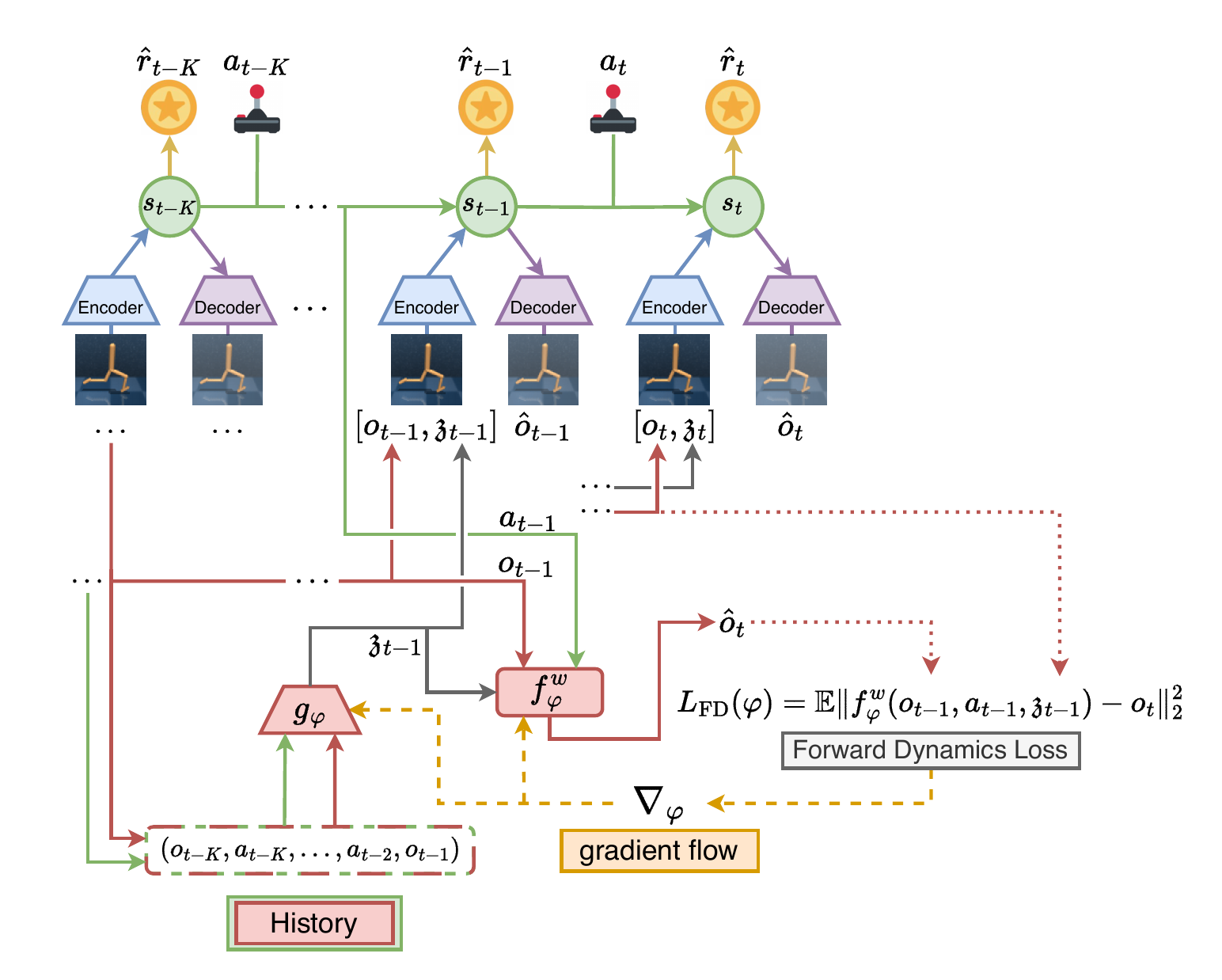}
    \caption{DALI architecture overview.}
    \label{fig:DALI_architecture}
\end{figure}

\textbf{Context Encoder.} The context encoder $g_\phi$ employs a standard transformer encoder block \citep{vaswani2017attention} to process a  sequence of $K$ transitions, $(o_{t-K:t}, a_{t-K:t-1})$, and produce the context representation $\chat_t \in \mathbb{R}^8$. The input sequence is fed directly into a dense layer with 256 units, bypassing a traditional embedding layer. This is followed by layer normalization and a single-head self-attention block with a skip connection from the dense layer’s output. The attention output is combined with the skip connection via a residual connection, followed by a second layer normalization. A two-layer MLP, each layer with 256 units, processes this output, with another residual connection combining the MLP output with the second layer norm’s output. A final dense layer projects the 256-dimensional intermediate representation to $\chat_t \in \mathbb{R}^8$. All hidden layers, including the MLP and attention head, use 256 units and SiLU activation functions, consistent with DreamerV3 \citep{hafner2023mastering}. No masking is applied in the self-attention mechanism, as the input sequence is complete, enabling full contextual processing to generate $\chat_t$.

\textbf{Forward Model.} The forward model $f_\varphi^w$ in ~\eqref{eq:context_encoder_loss} for dynamics alignment is a two-layer MLP with 128 units and SiLU activations.

$W_z \in \mathbb{R}^{32 \times 8}$ and $W_{\chat_t} \in \mathbb{R}^{8 \times 32}$ in~\eqref{eq:context_encoder_loss-cross} are learnable parameter matrices that map between the context and state spaces.

\textbf{DreamerV3.} We adopt the \textit{small} DreamerV3 variant with hyperparameters from \citet{hafner2023mastering}, following the setup of~\citet{prasanna2024dreaming} to ensure fair and reproducible comparison with their cRSSM-S/D baselines.

DALI adds only about 4\% parameter overhead (e.g., Dreamer-DR: 15.73M vs. DALI-S: 16.45M) while consistently improving performance with minimal additional complexity.

\subsection{Computational Setup and Resource Requirements}
\label{app:runtime}
Training and evaluation of the baselines and our DALI approaches were conducted on NVIDIA A100 GPUs with 80GB of VRAM and Intel Xeon Platinum 8352V CPUs. Typically, our setup provides access to 2 GPUs on average, with up to 4 GPUs available in the best-case scenario. Parallelization differs between modalities: featurized experiments can run 10 seeds in parallel, while pixel-based experiments are limited to 4 seeds due to higher memory requirements.

Our complete experimental setup includes 10 seeds across 5 algorithm variants, 2 environments, and 2 modalities, evaluated over 3 context settings (\texttt{single\_0}, \texttt{single\_1}, \texttt{double}), yielding a total of 600 individual training runs. Table~\ref{tab:runtime} summarizes the computational requirements, where reported GPU hours account for 10 parallel processes in the featurized case and 4 in the pixel-based case. Note that training times per trial are similar for both variants, as the bottleneck lies in environment interaction rather than network size.

In a purely sequential execution, the total computational cost would reach approximately \textbf{24,000 GPU hours}. Leveraging parallel execution significantly reduces wall-clock time: in the \textbf{best-case scenario} with 4 GPUs, total wall-clock time is approximately \textbf{1,051 GPU hours} ($\approx$44 days). In the more typical \textbf{worst-case scenario} with 2 GPUs, wall-clock time increases to approximately \textbf{2,101 GPU hours} ($\approx$88 days).

\begin{table}[h]
\centering
\renewcommand{\arraystretch}{1.2}
\begin{tabular}{@{}llcccc@{}}
\toprule
\multirow{2}{*}{\textbf{Task}} & \multirow{2}{*}{\textbf{Modality}} & \textbf{GPU Hrs} & \textbf{Runs} & \multicolumn{2}{c}{\textbf{Wall-Clock (GPU Hrs)}} \\
\cmidrule(lr){5-6}
& & \textbf{per Trial} & \textbf{Total} & \textbf{2 GPUs} & \textbf{4 GPUs} \\
\midrule
\multirow{2}{*}{Ball in Cup} & Featurized & 20--26 & 150 & 195 & 98 \\
& Pixel-based & 20--26 & 150 & 488 & 244 \\
\midrule
\multirow{2}{*}{Walker} & Featurized & 46--54 & 150 & 405 & 203 \\
& Pixel-based & 46--54 & 150 & 1,013 & 506 \\
\midrule
\multicolumn{2}{l}{\textbf{Total Featurized}} & -- & \textbf{300} & \textbf{600} & \textbf{301} \\
\multicolumn{2}{l}{\textbf{Total Pixel-based}} & -- & \textbf{300} & \textbf{1,501} & \textbf{750} \\
\midrule
\multicolumn{2}{l}{\textbf{Grand Total}} & -- & \textbf{600} & \textbf{2,101} & \textbf{1,051} \\
\bottomrule
\vspace{.2cm}
\end{tabular}
\caption{Computational requirements for the full experimental evaluation. Each task is run across 5 algorithm variants with 10 seeds each in both featurized and pixel-based modalities. Wall-clock times assume optimal GPU utilization given the stated parallel capacities.}
\label{tab:runtime}
\end{table}

\subsection{Additional Experiments}
\label{sec:additionalExp}

Figure~\ref{fig:learning_curves} shows the learning curves for DMC Ball-in-Cup and Walker Walk tasks under Featurized and Pixel-based observation modalities.

DALI’s context encoder adds only about $4\%$ more parameters to DreamerV3-small while scaling effectively to high-dimensional tasks. We evaluate it on \textbf{DMC Quadruped Walk} (56D observations, 12D actions), a larger environment than Walker Walk (24D, 6D). The 56D observation space emphasizes the challenge of torque-sensitive locomotion, where increased coordination and a higher number of actuators (12 vs. 6) demand more robust context inference. Trained for 600K timesteps and contextualized with gravity and actuator strength (same ranges as Walker Walk), DALI attains Featurized Extrapolation IQMs of $0.326\pm 0.043$ (DALI-S), and $0.389\pm 0.027$ (DALI-D-$\rchi$), yielding up to $76.8\%$ improvement over the context-unaware baseline $0.220\pm 0.023$ (Dreamer-DR), and up to $50.8\%$ improvement over the context-aware baselines ($0.258\pm 0.028$, cRSSM-D; $0.317\pm 0.031$, cRSSM-S).

\begin{figure}[h!]
    \centering
    \textbf{Ball-in-Cup} \\[0.3em]
    \begin{subfigure}[b]{0.49\textwidth}
        \centering
        \includegraphics[width=\linewidth]{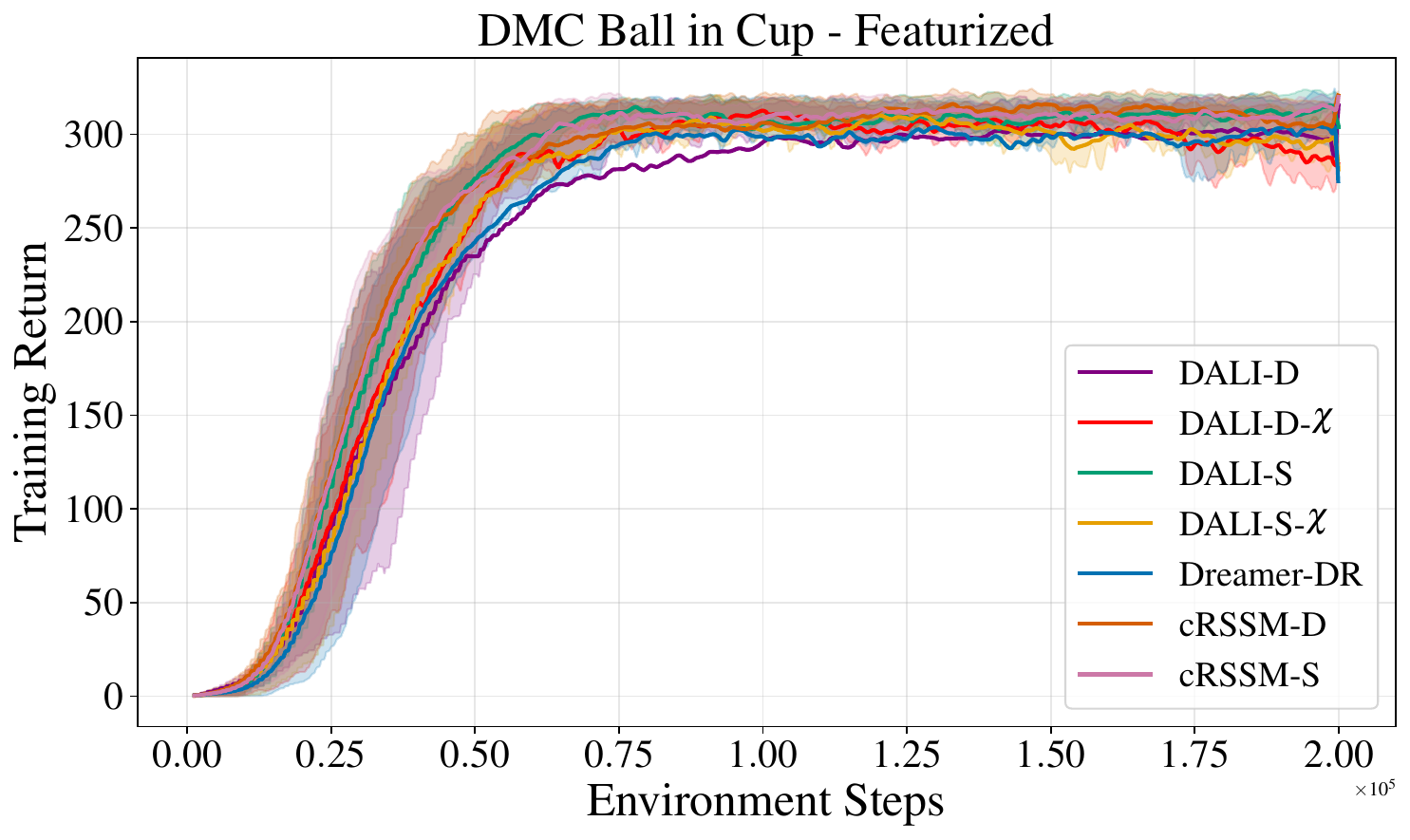}
        \caption{Featurized}
        \label{fig:ball_in_cup:obs}
    \end{subfigure}
    \hfill
    \begin{subfigure}[b]{0.49\textwidth}
        \centering
        \includegraphics[width=\linewidth]{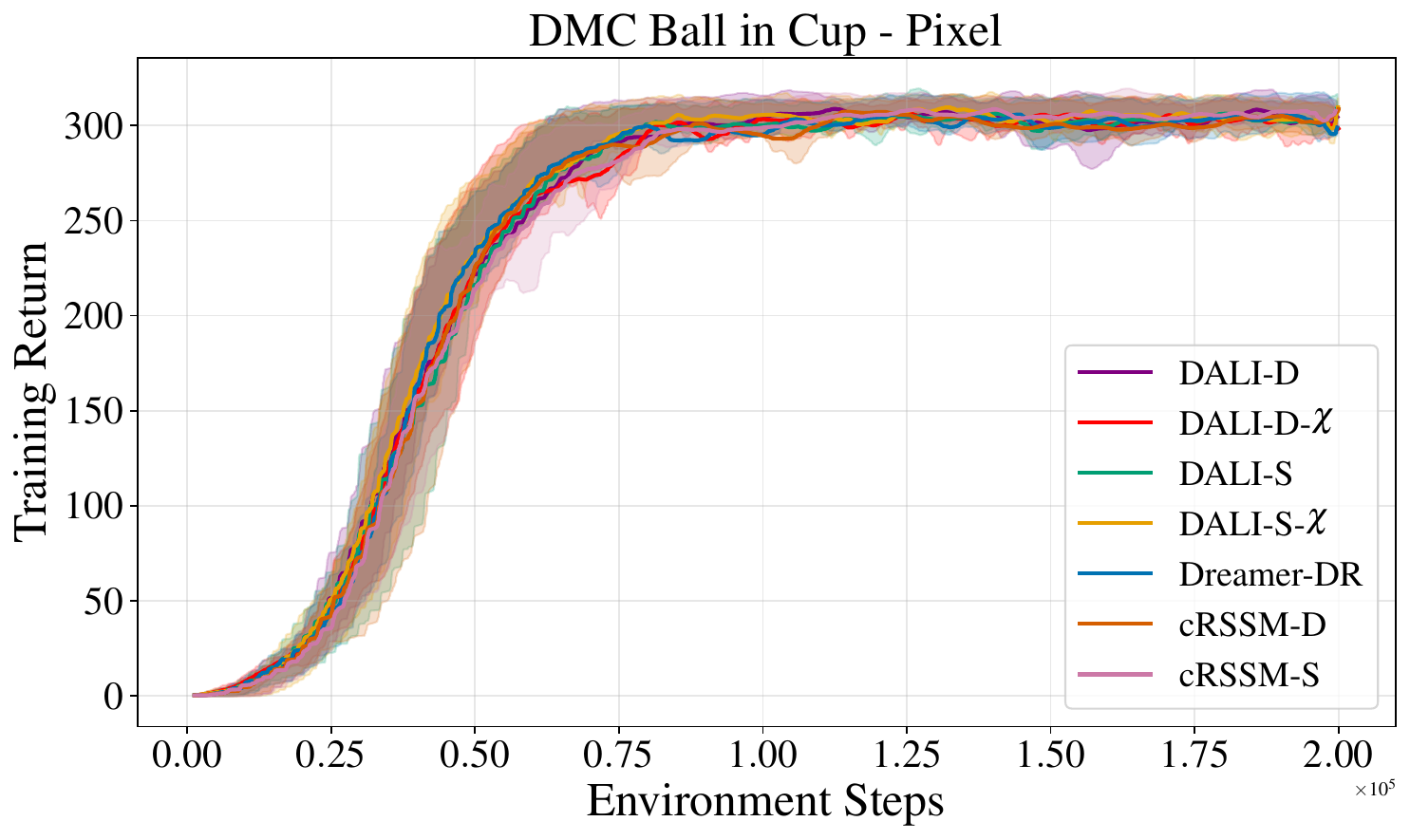}
        \caption{Pixel}
        \label{fig:ball_in_cup:img}
    \end{subfigure}

    \vspace{0.5em}

    \textbf{Walker Walk} \\[0.3em]
    \begin{subfigure}[b]{0.49\textwidth}
        \centering
        \includegraphics[width=\linewidth]{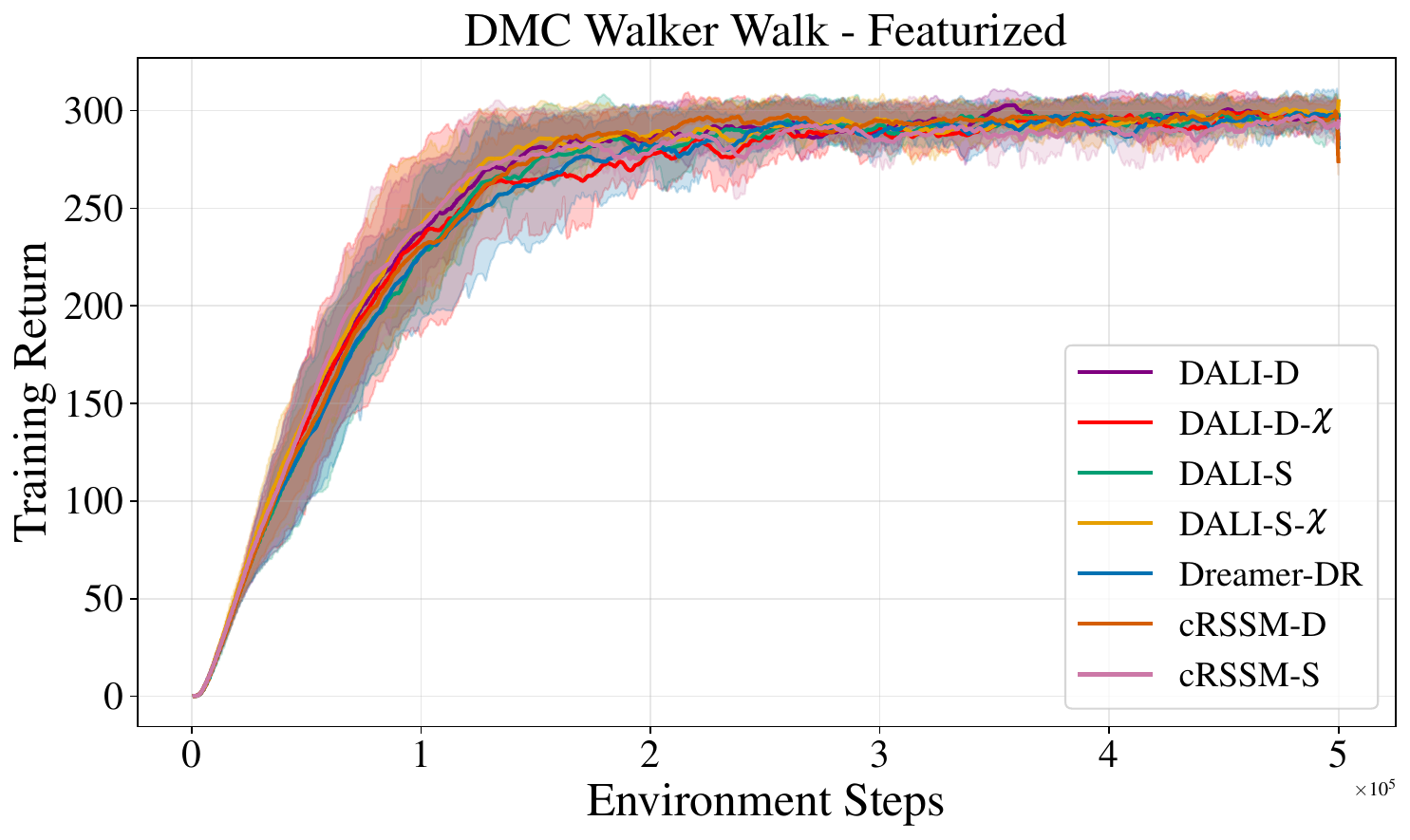}
        \caption{Featurized}
        \label{fig:walker:obs}
    \end{subfigure}
    \hfill
    \begin{subfigure}[b]{0.49\textwidth}
        \centering
        \includegraphics[width=\linewidth]{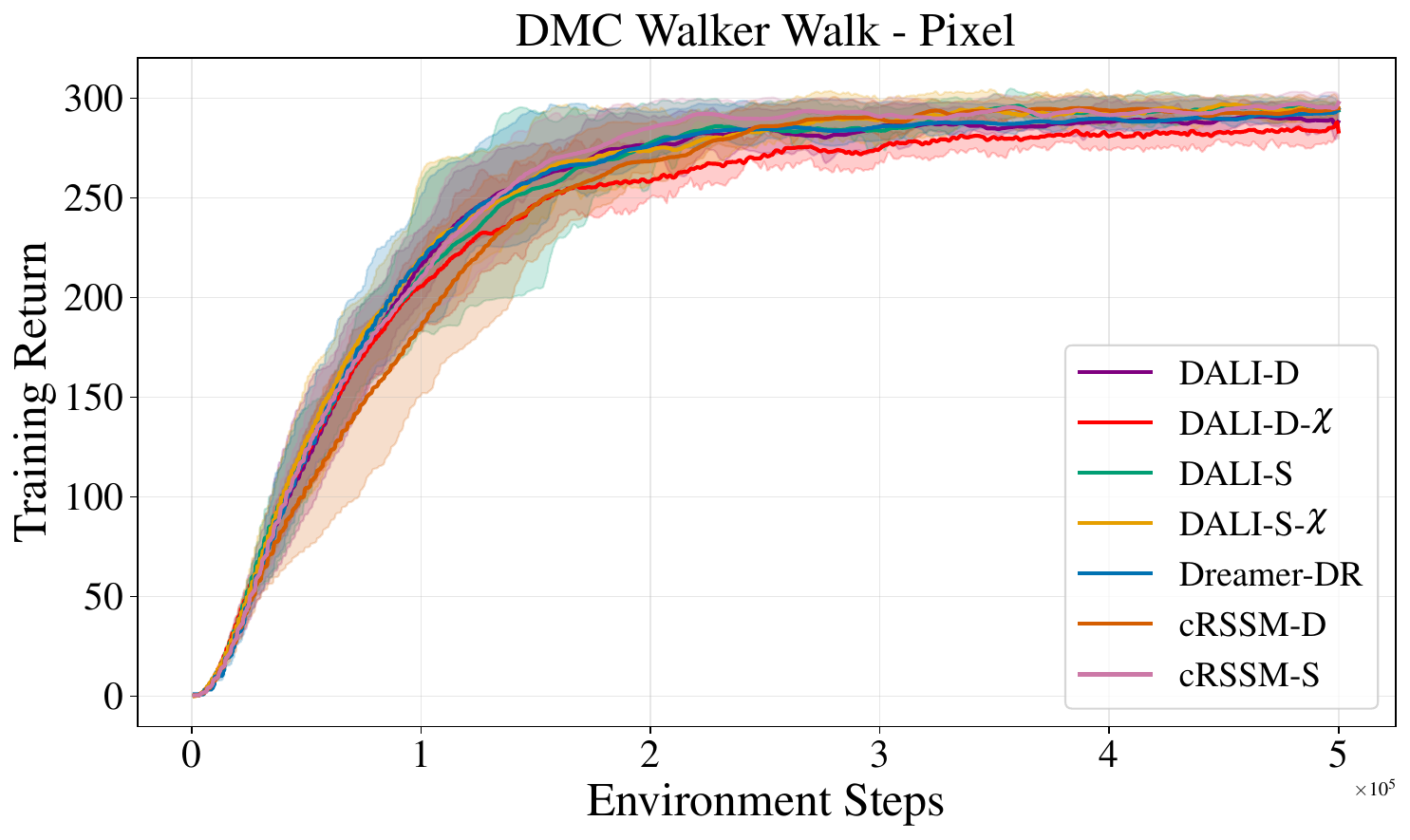}
        \caption{Pixel}
        \label{fig:walker:img}
    \end{subfigure}

    \caption{Learning curves for DMC Ball-in-Cup and Walker Walk tasks under Featurized and Pixel-based observation modalities. Results show mean episode returns with $25$th–$75$th percentile confidence intervals.}
    \label{fig:learning_curves}
\end{figure}

\section{Supplementary Experiments on Physically Consistent Counterfactuals}
\label{app:counterfactuals}

We provide additional details supporting Section~\ref{sec:PhyConsisCF}. First, we validate the statistical significance of the influence of latent dimensions on counterfactuals using AUC-based rankings (Section~\ref{app:auc_rankings}). Next, we analyze actuator-driven recovery dynamics in the DMC Walker Walk task, analogous to the counterfactual analysis conducted for the Ball-in-Cup task in Section~\ref{sec:Phystory} (Section~\ref{app:walker_recovery}).

\subsection{AUC Rankings and Significance of Counterfactual Perturbations}\label{app:auc_rankings}

To identify latent dimensions that systematically alter imagined dynamics, we train a binary classifier to distinguish trajectories generated under perturbed ($\mathcal{T}'^{(j)}$) and original ($\mathcal{T}^{(0)}$) context representations. For each dimension $\chat_j$, we generate $N=2500$ trajectory pairs, perturbing $\chat_j$ by $\Delta = \sigma(\chat_j)$ to induce counterfactuals (see Section~\ref{sec:PhyConsisCF}).

We implement an ensemble classifier comprising a support vector machine, an MLP, and an AdaBoost classifier, trained via stratified 5-fold cross-validation. Inputs are trajectory sequences $\mathcal{T}^{(0)}$ (label 0) and $\mathcal{T}'^{(j)}$ (label 1). Predictions are aggregated across folds to compute AUC. To ensure robustness, we compute 95\% bootstrap confidence intervals for each $\chat_j$’s AUC using 500 resamples. Permutation tests (1000 iterations) validate statistical significance between top-ranked dimensions by comparing observed AUC gaps to a shuffled null distribution.

For the Ball-in-Cup task, results are shown in Figure~\ref{fig:ablation_auc}, where higher AUCs indicate dimensions whose perturbations induce physically distinguishable dynamics (e.g., gravity or string length changes). Confidence intervals highlight stability across bootstrap samples.

\begin{figure}[h!]
    \centering
    \includegraphics[width=0.5\linewidth]{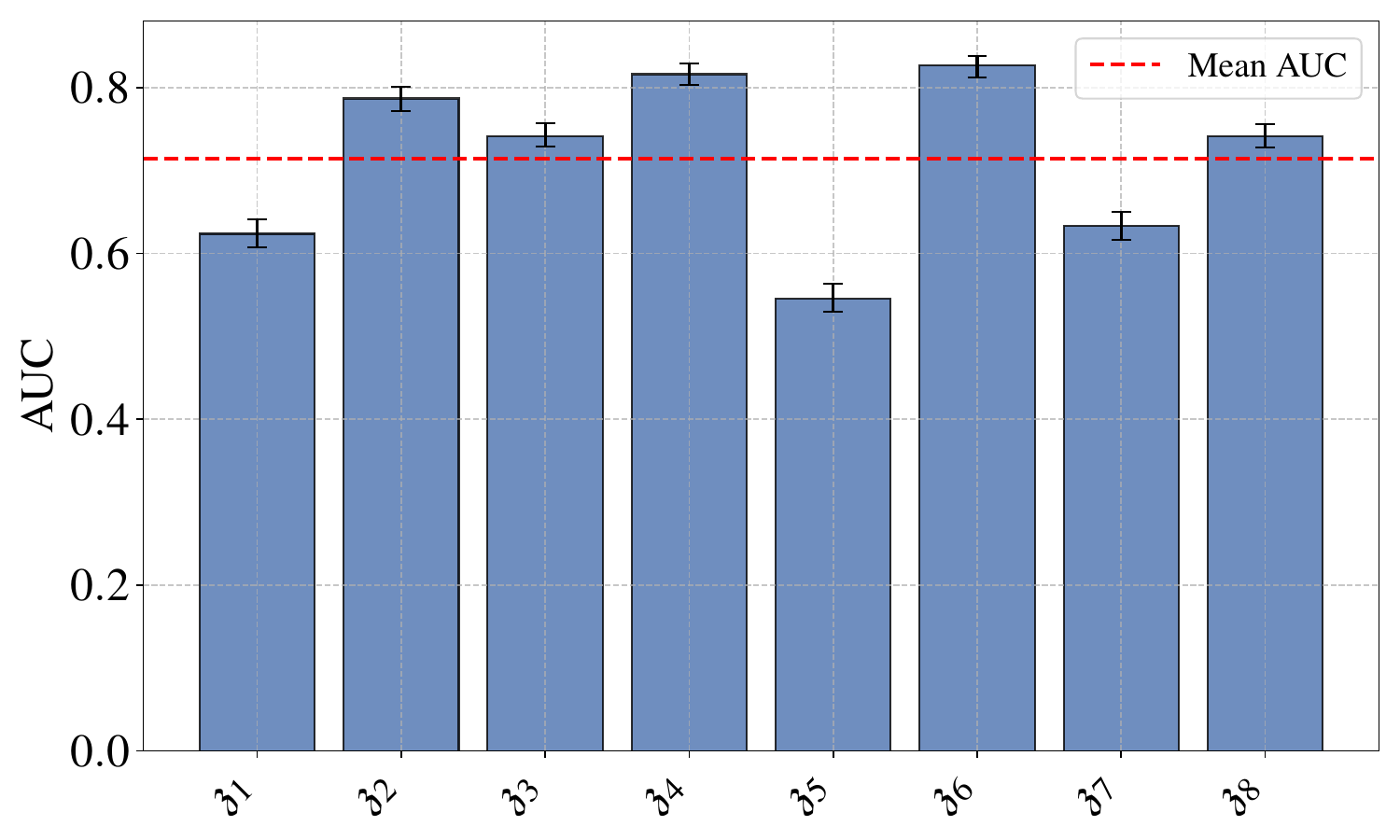}
    \hfill
    \caption{AUC $\pm$ $95\%$ CI per context dimension $\chat_j$ for the Ball-in-Cup task.}
    \label{fig:ablation_auc}
\end{figure}

\subsection{Actuator-Driven Recovery and Locomotion in Latent Imagination}
\label{app:walker_recovery}

Analogous to the Ball-in-Cup analysis (Section~\ref{sec:Phystory}), we evaluate DALI’s ability to generate physically consistent counterfactuals in the DMC Walker Walk task contextualized with gravity and actuator strength.
We show that perturbations to the top-ranked latent dimension $\chat_3$ in DALI-S, identified via AUC-based rankings (see Section~\ref{app:auc_rankings}), influence the Walker’s capacity to recover from destabilizing falls and sustain locomotion, indicating that $\chat_3$ is mechanistically aligned with actuator strength.
Imagined rollouts are initialized with a real observation and then rolled out in latent imagination.
Unlike in the Ball-in-Cup analysis, we retain the original policy actions during imagination to examine how actuator strength modulates the Walker’s recovery and locomotion behavior. For the Featurized modality, we tracked low-level state variables (e.g., torso height) at each timestep, while in the Pixel modality, we captured rendered environment frames at 5-step intervals over a 64-step imagination horizon.

\textbf{Pixel Modality.} Figure~\ref{fig:counterfactual:walker:pixel_trajectory} contrasts the original (top) and counterfactual (middle) trajectories. Both agents initially fall (frames $0$-$20$), but the perturbed $\chat_3$ trajectory (counterfactual) exhibits enhanced actuator strength, enabling the Walker to stand from a challenging pose (left leg extended, right leg bent in frames $30$-$45$) and locomote forward (frames $50$–$64$). The original trajectory fails to recover, collapsing after the fall. Pixel-wise differences (bottom) highlight kinematic deviations, particularly in leg articulation and torso alignment.

\textbf{Featurized Modality.} Figure~\ref{fig:counterfactual:walker:torso_z_position} shows torso height dynamics. While both trajectories share similar initial descent (time steps $0$-$20$), the counterfactual (orange) achieves sustained elevation (after time step 20), reflecting successful recovery and locomotion. This aligns with the pixel-based evidence of increased actuator strength, confirming $\chat_3$’s role in encoding torque dynamics critical for stability.

\begin{figure}[h!]
    \centering

    \begin{subfigure}[b]{1.0\textwidth}
        \centering
        \setlength{\tabcolsep}{0.75pt}
        \renewcommand{\arraystretch}{0}
        \begin{tabular}{*{13}{c}}
            \includegraphics[width=0.072\linewidth]{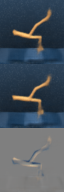} &
            \includegraphics[width=0.072\linewidth]{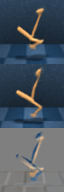} &
            \includegraphics[width=0.072\linewidth]{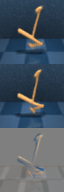} &
            \includegraphics[width=0.072\linewidth]{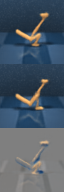} &
            \includegraphics[width=0.072\linewidth]{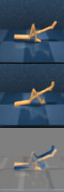} &
            \includegraphics[width=0.072\linewidth]{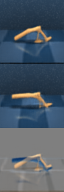} &
            \includegraphics[width=0.072\linewidth]{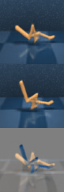} &
            \includegraphics[width=0.072\linewidth]{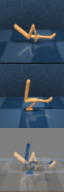} &
            \includegraphics[width=0.072\linewidth]{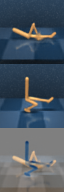} &
            \includegraphics[width=0.072\linewidth]{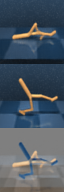} &
            \includegraphics[width=0.072\linewidth]{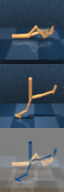} &
            \includegraphics[width=0.072\linewidth]{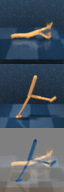} &
            \includegraphics[width=0.072\linewidth]{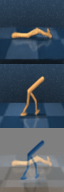} \\
            \\[0.5em]
            \small 0 & \small 5 & \small 10 & \small 15 & \small 20 & \small 25 & \small 30 & \small 35 & \small 40 & \small 45 & \small 50 & \small 55 & \small 60
        \end{tabular}
        \caption{Pixel Reconstructions}
        \label{fig:counterfactual:walker:pixel_trajectory}
    \end{subfigure}

    \begin{subfigure}[b]{0.48\textwidth}
        \centering
        \includegraphics[width=0.9\textwidth]{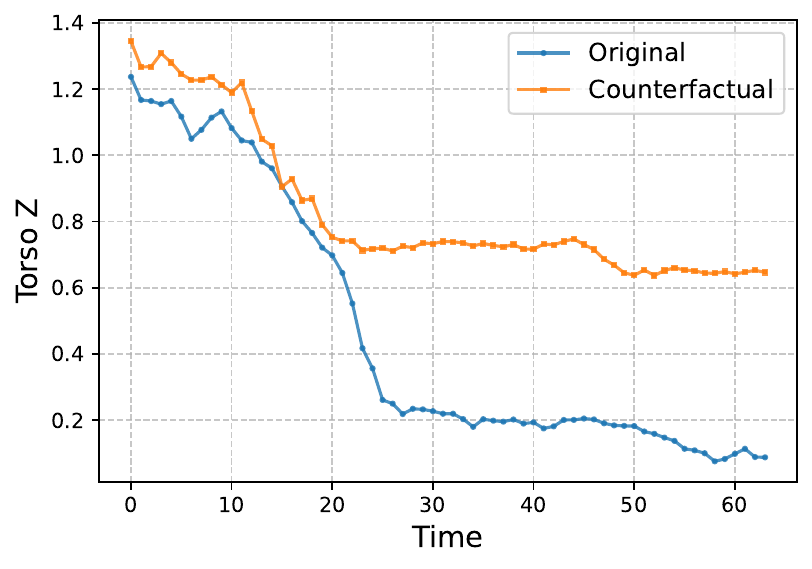}
        \caption{Torso Z Position}
        \label{fig:counterfactual:walker:torso_z_position}
    \end{subfigure}

    \caption{%
    (a) \textbf{(Pixel Modality) Counterfactual Trajectories in Pixel Space}:
    Top: Original imagined trajectory of the Walker Walk under default gravity and actuator strength.
    Middle: Perturbed trajectory after adding noise $\Delta$ to the top-ranked latent dimension $\chat_3$. Original trajectory shows the Walker failing to recover after a fall. The  perturbed trajectory with enhanced actuator strength enables recovery and forward locomotion. Bottom: Pixel-wise differences ($\delta = |\hat{o}_t - \hat{o}'_t|$). The counterfactual agent refines leg kinematics (frames $30$-$45$) and stabilizes upright (frames $50$-$64$).
    (b) \textbf{(Featurized Modality) Torso Z-Position Under Latent Perturbation}:
    Comparison of original (blue) and counterfactual (orange) torso heights. The perturbed $\chat_3$ trajectory maintains higher elevation after time step 20, reflecting successful stabilization and locomotion.
    }
    \label{fig:counterfactual:walker:beta}
\end{figure}

\clearpage
\section*{NeurIPS Paper Checklist}

\begin{enumerate}

\item {\bf Claims}
    \item[] Question: Do the main claims made in the abstract and introduction accurately reflect the paper's contributions and scope?
    \item[] Answer: \answerYes{}
    \item[] Justification: The claims presented in the abstract and introduction are fully supported by our theoretical and empirical findings, including a comprehensive analysis.

\item {\bf Limitations}
    \item[] Question: Does the paper discuss the limitations of the work performed by the authors?
    \item[] Answer: \answerYes{}{}{}
    \item[] Justification: Limitations are discussed in the corresponding section of this paper.

\item {\bf Theory assumptions and proofs}
    \item[] Question: For each theoretical result, does the paper provide the full set of assumptions and a complete (and correct) proof?
    \item[] Answer: \answerYes{} 
    \item[] Justification: We provide a comprehensive proof of the necessity and benefit of our approach in Section~\ref{sec:theory} and Appendix~\ref{app:A}.

    \item {\bf Experimental result reproducibility}
    \item[] Question: Does the paper fully disclose all the information needed to reproduce the main experimental results of the paper to the extent that it affects the main claims and/or conclusions of the paper (regardless of whether the code and data are provided or not)?
    \item[] Answer: \answerYes{} 
    \item[] Justification: This paper provides the complete code and instructions for running the experiments in a reproducible manner. Python dependencies, their versions, and all random seeds used are clearly specified.

\item {\bf Open access to data and code}
    \item[] Question: Does the paper provide open access to the data and code, with sufficient instructions to faithfully reproduce the main experimental results, as described in supplemental material?
    \item[] Answer: \answerYes{} 
    \item[] Justification: The full source code is provided with this submission along with all relevant details required to reproduce the presented experiments.

\item {\bf Experimental setting/details}
    \item[] Question: Does the paper specify all the training and test details (e.g., data splits, hyperparameters, how they were chosen, type of optimizer, etc.) necessary to understand the results?
    \item[] Answer: \answerYes{} 
    \item[] Justification: Yes, we provide all details for training the agent in-distribution and testing zero-shot generalization.

\item {\bf Experiment statistical significance}
    \item[] Question: Does the paper report error bars suitably and correctly defined or other appropriate information about the statistical significance of the experiments?
    \item[] Answer: \answerYes{} 
    \item[] Justification: For this purpose, we follow established methods for measuring the agent's performance \citep{agarwal2021deep}.

\item {\bf Experiments compute resources}
    \item[] Question: For each experiment, does the paper provide sufficient information on the computer resources (type of compute workers, memory, time of execution) needed to reproduce the experiments?
    \item[] Answer: \answerYes{} 
    \item[] Justification:
    DALI adds only a small context encoder module to DreamerV3. As reported by \citet{hafner2023mastering}, DreamerV3 can be trained on a single GPU, and our extension has the same requirement. Each trial with a single random seed takes roughly 1–2 days, depending on the observation modality.

\item {\bf Code of ethics}
    \item[] Question: Does the research conducted in the paper conform, in every respect, with the NeurIPS Code of Ethics \url{https://neurips.cc/public/EthicsGuidelines}?
    \item[] Answer: \answerYes{} 
    \item[] Justification: No disagreement amoung the team.

\item {\bf Broader impacts}
    \item[] Question: Does the paper discuss both potential positive societal impacts and negative societal impacts of the work performed?
    \item[] Answer: \answerNA{}{} 
    \item[] Justification: \answerNA{}

\item {\bf Safeguards}
    \item[] Question: Does the paper describe safeguards that have been put in place for responsible release of data or models that have a high risk for misuse (e.g., pretrained language models, image generators, or scraped datasets)?
    \item[] Answer: \answerNA{}{} 
    \item[] Justification: \answerNA{}

\item {\bf Licenses for existing assets}
    \item[] Question: Are the creators or original owners of assets (e.g., code, data, models), used in the paper, properly credited and are the license and terms of use explicitly mentioned and properly respected?
    \item[] Answer: \answerYes{} 
    \item[] Justification:
    This work builds upon and extends the codebase of \citet{prasanna2024dreaming}, which is properly credited. All associated licenses and terms of use from the original work are respected.

\item {\bf New assets}
    \item[] Question: Are new assets introduced in the paper well documented and is the documentation provided alongside the assets?
    \item[] Answer: \answerYes{} 

\item {\bf Crowdsourcing and research with human subjects}
    \item[] Question: For crowdsourcing experiments and research with human subjects, does the paper include the full text of instructions given to participants and screenshots, if applicable, as well as details about compensation (if any)?
    \item[] Answer: \answerNA{} 
    \item[] Justification: \answerNA{}

\item {\bf Institutional review board (IRB) approvals or equivalent for research with human subjects}
    \item[] Question: Does the paper describe potential risks incurred by study participants, whether such risks were disclosed to the subjects, and whether Institutional Review Board (IRB) approvals (or an equivalent approval/review based on the requirements of your country or institution) were obtained?
    \item[] Answer: \answerNA{} 
    \item[] Justification: \answerNA{}

\item {\bf Declaration of LLM usage}
    \item[] Question: Does the paper describe the usage of LLMs if it is an important, original, or non-standard component of the core methods in this research? Note that if the LLM is used only for writing, editing, or formatting purposes and does not impact the core methodology, scientific rigorousness, or originality of the research, declaration is not required.
    \item[] Answer: \answerNA{}{} 
    \item[] Justification: \answerNA{}

\end{enumerate}

\end{document}